\newlength\figureheight
\newlength\figurewidth
\renewcommand{\algocf@captiontext}[2]{#1\algocf@typo. \AlCapFnt{}#2} % text of caption
\def\@algocf@capt@plain{top}
\renewcommand{\algocf@makecaption}[2]{%
	\addtolength{\hsize}{\algomargin}%
	\sbox\@tempboxa{\algocf@captiontext{#1}{#2}}%
	\ifdim\wd\@tempboxa >\hsize%     % if caption is longer than a line
	\hskip .5\algomargin%
	\parbox[t]{\hsize}{\algocf@captiontext{#1}{#2}}% then caption is not centered
	\else%
	\global\@minipagefalse%
	\hbox to\hsize{\box\@tempboxa}% else caption is centered
	\fi%
	\addtolength{\hsize}{-\algomargin}%
}
\def\tr{\text{\rm tr}}
\newcommand{\ignore}[1]{}
\newcommand{\wh}{\widehat}
\newcommand{\z}{z}
\newcommand{\dt}{\mathrm{d}}
\newcommand{\E}{\mathbb{E}}
\newcommand{\KL}{\mathrm{KLD}}
\newcommand{\argmin}{\operatornamewithlimits{argmin}}
\newtheorem{assumption}{Assumption}
\newtheorem{theorem}{Theorem}
\newtheorem{corollary}{Corollary}
\newtheorem{lemma}{Lemma}
\newtheorem{defn}{Definition}
\theoremstyle{remark}
\newtheorem{remark}{Remark}
\begin{document}
	\def\spacingset#1{\renewcommand{\baselinestretch}%
		{#1}\small\normalsize} \spacingset{1}
	
	\title{The Statistical Accuracy of Neural Posterior and Likelihood Estimation}
	\date{\empty}
%	\author{}
\author[1]{David T. Frazier\thanks{Corresponding author:  david.frazier@monash.edu}}
	\author[4]{Ryan Kelly}
	\author[4]{Christopher Drovandi}
	\author[4]{David J. Warne}
\affil[1]{Department of Econometrics and Business Statistics, Monash University, Clayton VIC 3800, Australia}
	%\affil[2]{Department of Statistics and Applied Probability, National University of Singapore, Singapore 117546}
	%\affil[3]{Operations Research and Analytics Cluster, National University of Singapore, Singapore 119077}
	\affil[4]{School of Mathematical Sciences, Queensland University of Technology, Brisbane 4000 Australia}
	
	\maketitle

	\begin{abstract}
Neural posterior estimation (NPE) and neural likelihood estimation (NLE) are machine learning approaches that provide accurate posterior, and likelihood, approximations in complex modeling scenarios, and in situations where conducting amortized inference is a necessity. While such methods have shown significant promise across a range of diverse scientific applications, the statistical accuracy of these methods is so far unexplored. In this manuscript, we give, for the first time, an in-depth exploration on the statistical behavior of NPE and NLE. We prove that these methods have similar theoretical guarantees to common statistical methods like approximate Bayesian computation (ABC) and Bayesian synthetic likelihood (BSL). While NPE and NLE methods are just as accurate as ABC and BSL, we prove that this accuracy can often be achieved at a vastly reduced computational cost, and will therefore deliver more attractive approximations than ABC and BSL in certain problems. We verify our results theoretically and in several examples from the literature.
			\vspace{1cm}

	%	\noindent \textbf{Keywords.}  Approximate Bayesian computation. Model misspecification. Likelihood tempering. 
	\end{abstract}
	\spacingset{1.9} % DON'
	\section{Introduction}

    Neural conditional density approximation approaches have emerged as powerful machine learning techniques for conducting simulation-based Bayesian inference (SBI) in complex models with intractable or computationally expensive likelihoods.
    These methods can approximate the likelihood through neural likelihood estimation (NLE) \citep{papamakarios2019sequential} or directly target the posterior distribution with neural posterior estimation (NPE) \citep{greenberg2019automatic, lueckmann2017flexible, papamakarios2016fast}, with NLE requiring subsequent Markov Chain Monte Carlo (MCMC) steps to produce posterior samples. The hallmark of these neural methods is their ability to accurately approximate complex posterior distributions using only forward simulations from the assumed model. While sequential methods iteratively refine the posterior estimate through multiple rounds of simulation, one-shot NPE and NLE methods perform inference in a single round, enabling amortized inference where a trained model can be reused for multiple datasets without retraining (see, e.g., \citealp{radev2020bayesflow}; \citealp{pmlr-v235-gloeckler24a}). In particular, like the statistical methods of approximate Bayesian computation (ABC), see, e.g., \cite{sisson2018handbook} for a handbook treatment, and \cite{martin2023approximating} for a recent summary, and Bayesian synthetic likelihood (BSL), see, e.g., \cite{wood2010statistical}, \cite{price2018bayesian} and \cite{frazier2019bayesian}, NPE and NLE first reduce the data down to a vector of statistics and then build an approximation to the resulting partial posterior by substituting likelihood evaluation with forward simulation from the assumed model. 
	
	In contrast to the statistical methods for likelihood-free inference like ABC and BSL, NPE (respectively, NLE) approximates the posterior (resp., the likelihood) directly by fitting flexible conditional density estimators, usually neural- or flow-based approaches, using training data that is simulated from the assumed model space. The approximation that results from this training step is then directly used as a posterior in the context of NPE or as a likelihood in the case of NLE, with MCMC for this trained likelihood then used to produce draws from an approximate posterior. %In contrast to NPE and NLE, which can train these approximations on data that lie anywhere in the assumed model space, ABC and BSL only learn the behavior learn about the unknown values of the parameters from simulated summaries that are a ``close approximation'' to the observed summaries.}
	
	While NPE/NLE methods have been shown to be just as accurate as ABC and BSL in certain experiments, while requiring a greatly reduced computational cost \citep{lueckmann2021benchmarking}, virtually nothing is known about the underlying statistical behavior of these methods. Herein, we make three fundamental contributions to the literature on SBI methods. Our first contribution is to provide the first rigorous study on the statistical behavior of NPE and NLE methods, and to rigorously show that these methods can deliver posterior approximations that have well-behaved statistical properties such as posterior concentration and Bernstein-von Mises phenomena. 
 
 Our second contribution is to use our theoretical results to analyze the validity of the existing hypothesis in the machine learning literature that NPE/NLE methods are more accurate than statistical methods like ABC and BSL for a given computational budget (see, e.g., \citealp{lueckmann2021benchmarking}). We rigorously prove that, under certain tuning regimes, and similar regularity conditions to methods like ABC, asymptotically NPE/NLE  can be as accurate as ABC, but the accuracy of NPE/NLE can be far less impacted by the dimension of the summaries and parameters than methods like ABC.\footnote{We later show that it is due to the ability of NPE/NLE to directly control the volume of training data used and the specific approximation being employed that allows these approaches to partly offset the curse of dimensionality associated with likelihood-free methods.} This latter finding is particularly important as it is well-known that ABC encounters a curse of dimensionality in the number of summaries used in the analysis (see, e.g., \citealp{blum2010non} for a discussion). 

 Our last contribution is to use these theoretical results to compare the behavior of NPE and NLE. The current literature exhibits a preference for NPE methods in most empirical examples, but to date no theoretical comparison of the two methods exist. We show that, under similar regularity conditions, NPE and NLE methods have similar statistical behavior. Hence, recalling that NLE must first estimate the likelihood, and then use this within some form of MCMC sampling, our results suggest that,  NPE is preferable to NLE since it has similar theoretical guarantees but does not require further MCMC sampling to produce posterior draws. 
 
The remainder of the paper is organized as follows. Section \ref{sec:motiv} contains a motivating example that highlights the importance of studying the behavior of NPE/NLE methods. Section \ref{sec:one} gives the general setup needed to study NPE methods, while Section \ref{sec:snpe} gives an account of the theoretical behavior of NPE, and Section \ref{sec:specific} contains discussion on several features that drive the statistical behavior of NPE. Section \ref{sec:snl} gives the setup and theoretical results for NLE methods, and Section \ref{sec:discuss} concludes the paper with directions for further study. 

\section{Motivating Example}\label{sec:motiv}
Before presenting our theoretical results we briefly motivate our analysis by numerically illustrating one of our main findings with a simulated data example. To date, the accuracy of NPE methods as a function of the number of training datasets used in fitting the approximation, denoted generically as $N$, and relative to the number of observed data points, denoted by $n$, has not been explored in the literature. In most cases, applications of NPE use a few thousand training datasets regardless of the example under study. In the following example, we show if NPE is to deliver accurate posterior inferences, then the number of model simulations used in training, $N$, must diverge to infinity faster than the number of observed data points; in subsequent sections we formally show that the precise rate at which $N$ must diverge depends on $n$ and the dimension of the problem. However, before giving formal results on the link between $N$ and $n$, we explore this link empirically in a commonly encountered example from the SBI literature.

\subsection{Stereological Example}
During the production of steel or other metals, microscopic particles, called inclusions, can occur within the metal that fundamentally weaken the structure. As fatigue is often believed to develop from inclusions within the material, over time the number of inclusions within the metal has become a useful gauge with which to measure the quality of metal production. 

To model the number of inclusions in a given block of steel, \cite{bortot2007inference} used a model of stereological extremes that generates a random number of elliptical inclusions, where the number of inclusions is governed by a homogeneous Poisson process with rate $\lambda$, and where the shape and scale size of the inclusion is governed by a Pareto distribution with shape and size parameters ($\sigma,\xi$).\footnote{See Appendix \ref{stereo} for further details on this model.} 

While relatively simple to explain, the resulting likelihood for this model is intractable, and inference is generally carried out using SBI methods. In this example, we show that in order for NPE to deliver accurate posterior inferences, we must choose the number of model simulations $N$ such that as $n\rightarrow\infty$, $n/N\rightarrow0$. To this end, in the experiments that follow we consider implementing NPE according to four possible choices of $N$: $N\in\{n,n\log(n),n^{3/2},n^2\}$. The specific reasoning behind these choices are explained in detail within Section \ref{sec:specific}. 

To keep the analysis as simple as possible, we only present results for the rate parameter $\lambda$, while results for $\sigma$ and $\xi$ are reported in Appendix~\ref{stereo}. %{\color{red}parameters: xxxx and the number of inclusions within the dataset.} 
First, we compare the accuracy of NPE under our choices of $N$ against the ABC posterior on a single dataset, where the ABC posterior is sampled using the ABC-SMC algorithm in \citet{simola2021adaptive}.
These results are plotted visually in Figure \ref{fig:stereo_lambda_posterior}. 
On these results, the ABC-SMC algorithm required over ten million simulations, which is more than an order of magnitude greater than the highest number of simulations considered for NPE at $N=n^2$ with $n=1000$. Since, in this example, simulations dominate the computation time, this results in an order of magnitude longer runtime for ABC-SMC compared to NPE.
These results demonstrate that NPE is quite close to the benchmark ABC posterior for larger choices of $N$, but for $N< n$ the NPE is much more diffuse than the ABC posterior. However, once we let $N> n$, the accuracy of the NPE increases dramatically. 

To further illustrate the increase in accuracy of the NPE that results from taking $N>n$, in Figure~\ref{fig:stereo_lambda_posterior} we plot the bias of the posterior mean from NPE across 100 replicated datasets and analyze the behavior of this bias as $N$ and $n$ increase. The results suggest that both the bias and the variance of the posterior mean decrease as $N$ becomes larger for any fixed $n$. However,  the results are roughly similar for $N\in \{n^{3/2},n^2\}$. 

\begin{figure}[htbp]
    \centering
    \includegraphics[width=0.6\textwidth]{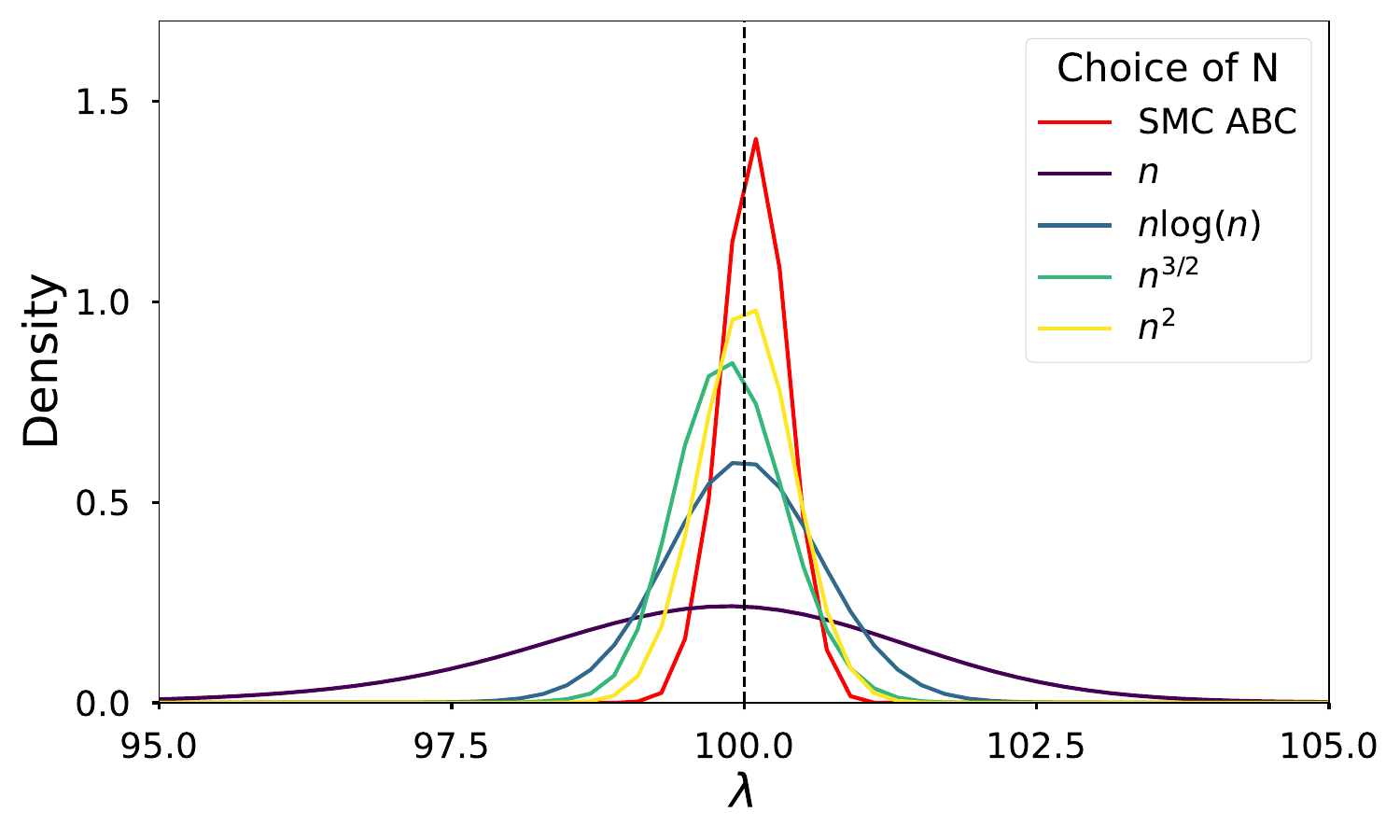}
    \caption{Univariate posterior approximations of the rate parameter \( \lambda \) (true value \( \lambda = 100 \), shown by the dashed vertical line) for a dataset generated from the stereological model with \( n = 1000 \) observations. The NPE approximations are compared across varying numbers of simulations against the posterior obtained via ABC-SMC.}
    \label{fig:stereo_lambda_posterior}
\end{figure}

In Table \ref{tab:stereo_coverage_lambda}, across 100 replicated runs, we compare the realized coverage of the corresponding NPE for $\lambda$ across different choices of $N$ and different sample sizes $n$. The results in Table \ref{tab:stereo_coverage_lambda} clearly show that the most accurate levels of coverage are obtained by choosing $N\ge n^{3/2}$ regardless of the value of $n$. 

Panels (a)-(d) in Figure \ref{fig:stereo_boxplot_all} and the results in Table \ref{tab:stereo_coverage_lambda} suggest that concentration of the NPE requires that $N$ diverges sufficiently fast, as a function of $n$, so the impact of using simulated data within the approximation is smaller than the natural variability in the posterior we are attempting to target. In the following sections, we formalize this link between $N$ and $n$ and discuss the requirements on $N$ needed to ensure that the NPE concentrates at the standard parametric rate. We also show that if the summaries are not chosen carefully, or if the NPE is not trained well-enough, then there is no reason to suspect that the accuracy of the NPE shown in this example will remain valid.

\begin{table}[htbp]
\centering
\begin{tabular}{@{}l l l l l @{} }\toprule
 & $N=n$ &  $N=n\log(n)$ & $N=n^{3/2}$ & $N=n^2$ \\ 
\hline
$n=100$ &1.00/1.00/1.00 &0.93/0.99/1.00 &0.81/0.94/0.99 &0.81/0.91/0.96 \\ 
$n=500$ &0.97/1.00/1.00 &0.87/0.96/0.99 &0.83/0.93/0.97 &0.82/0.91/0.96 \\ 
$n=1000$ &0.97/1.00/1.00 &0.87/0.95/0.99 &0.84/0.93/0.98 &0.80/0.90/0.95 \\ 
$n=5000$ &0.93/0.98/1.00 &0.89/0.96/0.99 &0.86/0.95/0.98 &0.81/0.92/0.96 \\ 
\bottomrule
\end{tabular}
\caption{
Monte Carlo coverage of 80\%, 90\%, and 95\% credible intervals for $\lambda$ across different choices of $N$ and sample sizes $n$, based on 100 repeated runs.}
\label{tab:stereo_coverage_lambda}

\end{table}

%Bias plots.
\begin{figure}[htbp]
    \centering
    \begin{subfigure}[b]{0.49\textwidth}
        \centering
        \includegraphics[width=\textwidth]{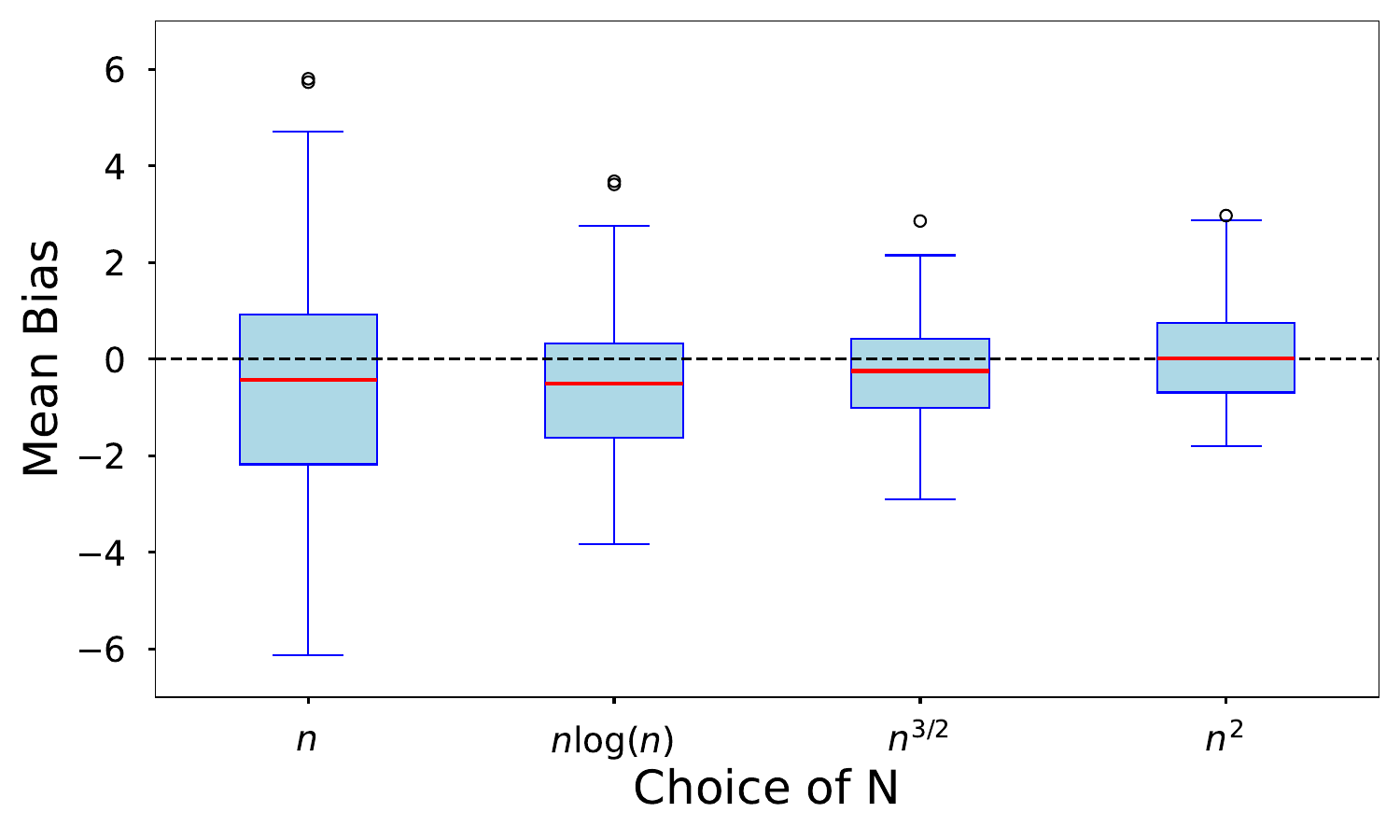}
        \caption{$n=100$.}
    \end{subfigure}
    \hfill
    \begin{subfigure}[b]{0.49\textwidth}
        \centering
        \includegraphics[width=\textwidth]{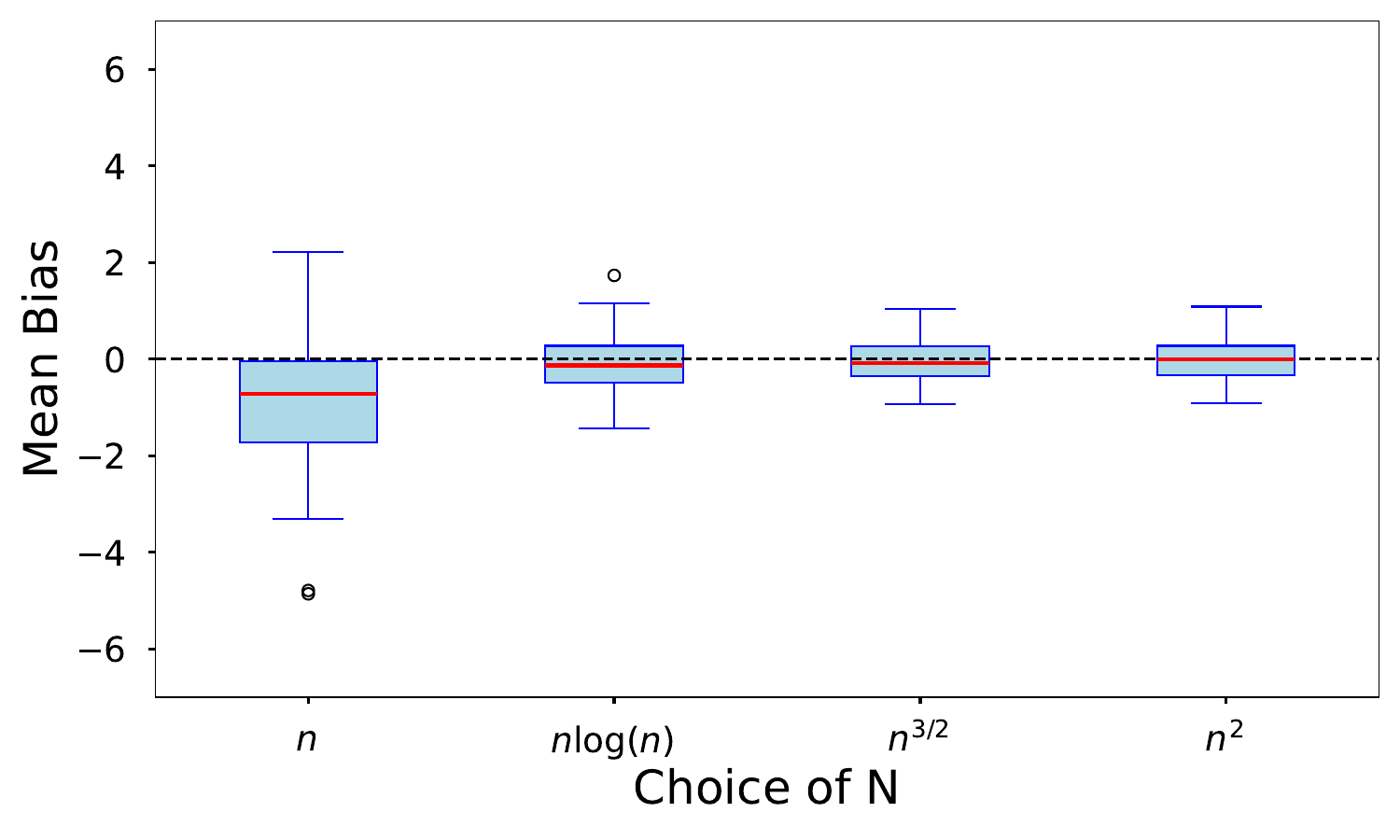}
        \caption{$n=500$.}
    \end{subfigure}
    
    \vspace{5pt}
    
    \begin{subfigure}[b]{0.49\textwidth}
        \centering
        \includegraphics[width=\textwidth]{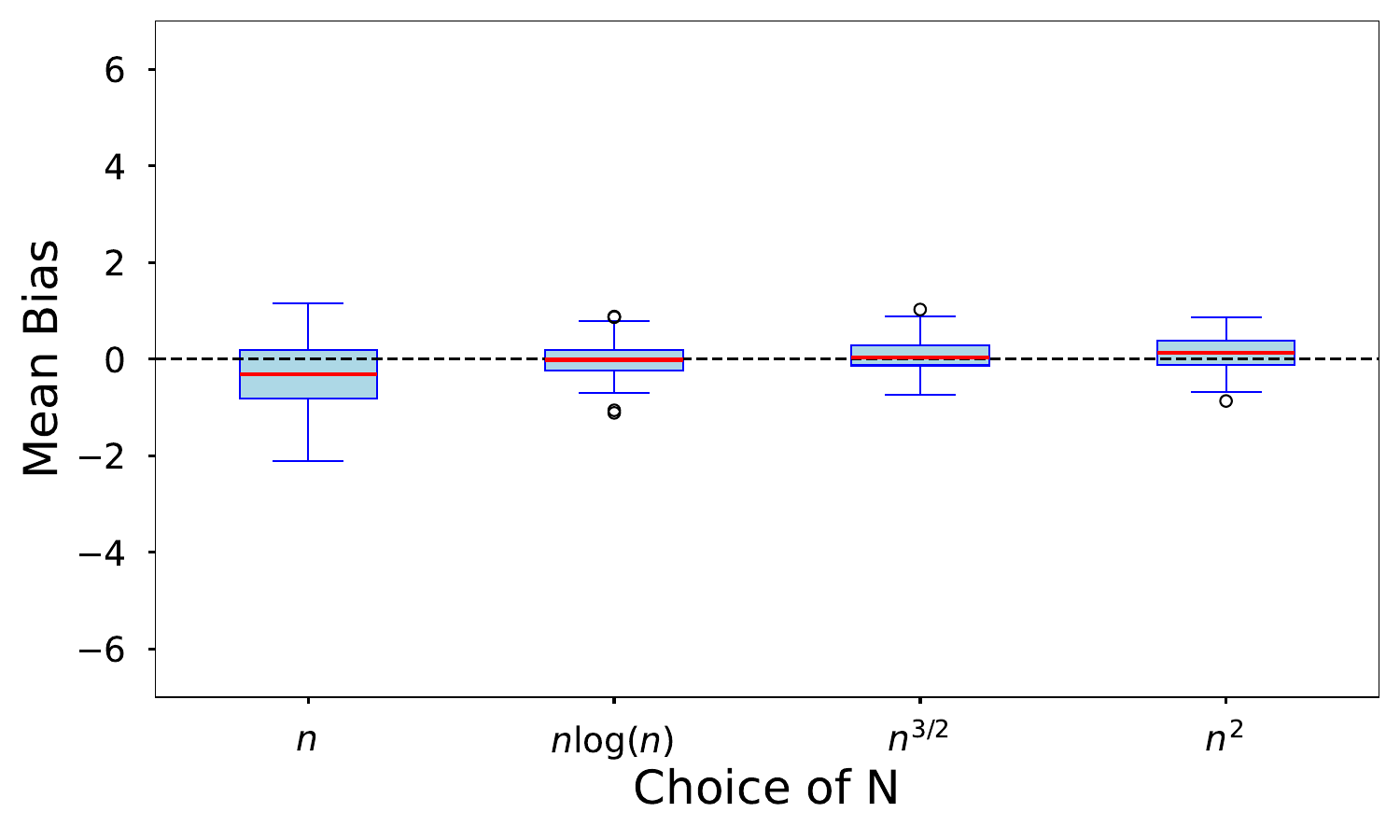}
        \caption{$n=1000$.}
    \end{subfigure}
    \hfill
    \begin{subfigure}[b]{0.49\textwidth}
        \centering
        \includegraphics[width=\textwidth]{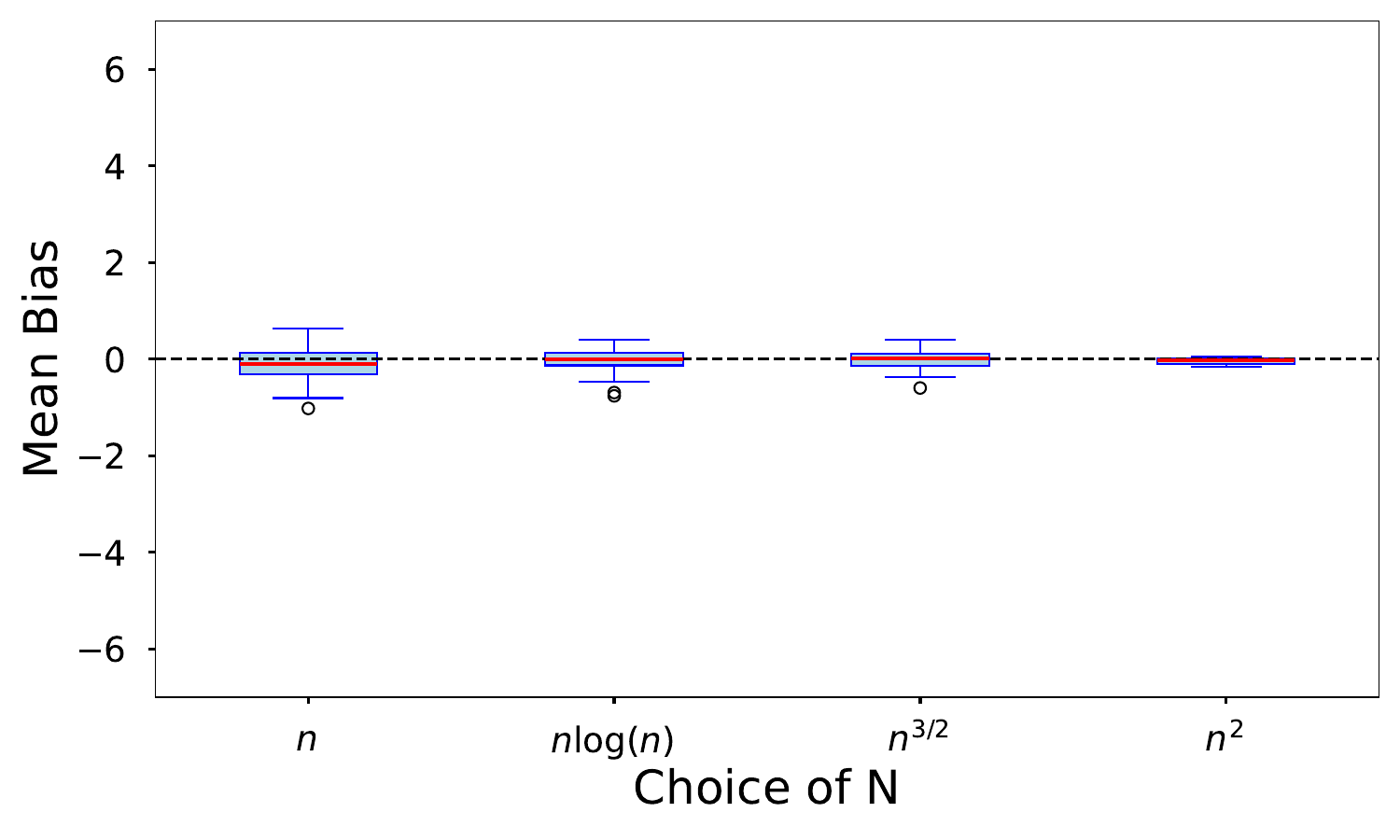}
        \caption{$n=5000$.}
    \end{subfigure}
    
    \caption{Bias of the posterior mean for $\lambda$ visualized through boxplots across varying $n$ and $N$.}
    \label{fig:stereo_boxplot_all}
\end{figure}

    \FloatBarrier

	\section{Setup: Neural Posterior Estimation}\label{sec:one}

	Let $y=(y_1,\dots,y_n)$ denote the observed data and define $P^{(n)}_{0}$ as the true distribution of $y$. We assume the observed data is generated from an intractable class of parametric models  $\{P^{(n)}_\theta:\theta\in\Theta\subseteq\mathbb{R}^{d_\theta}\}$ from which we can easily simulate pseudo-data $\z=(z_1,\dots,z_n)$ for any $\theta\in\Theta$. Denote the prior density for $\theta$ as $p({\theta })$.
	
	Since the likelihood is intractable, we conduct inference using approximate Bayesian methods, also known as simulation-based Bayesian inference (SBI). The main idea of SBI is to build an approximation to the posterior by leveraging our ability to simulate data under $P_\theta^{(n)}$. To make the problem computationally practical the approximation is generally carried out using summaries rather than the full dataset $y$. Let $S_n:\mathbb{R}^n\rightarrow\mathcal{S}$, $\mathcal{S}\subseteq\mathbb{R}^{d_s}$, $d\ge d_\theta$,  denote the vector summary statistic mapping employed in the analysis. When no confusion is likely to result, we write $S_n(\cdot)$ for the mapping and $S_n$ its value when $S_n(\cdot)$ is evaluated at the observed data $y$, i.e., $S_n=S_n(y)$. 
	
	Denote by $G_n(\cdot\mid \theta)$ the distribution of $S_n(z)$ under $P_\theta^{(n)}$. Since we have reduced the data down to $S_n$, the inferential ideal shifts from the standard Bayes posterior to the ``partial'' posterior:
	$$
\dt\Pi(\theta\mid S_n)=\frac{\dt G_n(S_n\mid \theta)p(\theta)}{\int_\Theta \dt G_n(S_n\mid \theta)p(\theta)\dt\theta}.
	$$When $G_n(\cdot\mid\theta)$ admits a density with respect to (wrt) the Lebesgue measure, we let $g_n(\cdot\mid\theta)$ denotes its density, and state the posterior more compactly as $\pi(\theta\mid S_n)\propto g_n(S_n\mid\theta)p(\theta)$. Throughout the remainder, to simplify notations, we use this more compact notation.\footnote{We note here that our analysis can be extended to cases where $G_n$ does not admit a density at the cost of additional notations and technicalities.} 
	
	Since $\Pi(\cdot\mid S_n)$ is intractable, we resort to conducting inference via an approximate posterior.  Two classes of powerful posterior approximations used in the machine learning literature are neural posterior estimation (NPE) (see, e.g., \citealp{lueckmann2017flexible}), and neural likelihood (NLE) (see, e.g., \citealp{papamakarios2019sequential}). 
	
	NPE learns an approximation to $\Pi(\cdot\mid S)$ by simulating data pairs $(\theta^\top,S_n(z)^\top)^\top\stackrel{iid}{\sim} p(\theta)g_n(S\mid\theta)$ and then minimizing a certain criterion function over a chosen class of conditional density approximations for $\theta\mid S$ (more precise details are given in Definition \ref{def:snpe}, Section \ref{sec:snpe}). The goal of NPE is not to directly approximate  $\Pi(\cdot\mid S_n)$, but to approximate the entire distribution of $\theta\mid S$, i.e., $\Pi(\cdot\mid S)$ at any $S\in\mathcal{S}$. This last point is crucial to note and is a consequence of the training data used in NPE, which does not include the observed summaries $S_n$.  Algorithm \ref{alg:npe} provides an algorithmic formulation of NPE.

	In contrast to NPE, NLE learns an approximation of the intractable likelihood $g_n(\cdot\mid\theta)$ by simulating pairs $(\theta^\top,S_n(z)^\top)^\top\stackrel{iid}{\sim}p(\theta)g_n(S\mid\theta)$, and then minimizing a certain criterion function over a chosen class of conditional density approximations for $S\mid \theta$ (see Definition \ref{def:snl} in Section \ref{sec:snl} for further details). This approximation replaces the intractable likelihood $g_n(S_n\mid\theta)$ in $\pi(\theta\mid S_n)$ and MCMC is used to produce draws from an approximate posterior. Algorithm \ref{alg:snl} gives the single round version of the neural likelihood posterior estimator (NLE).	

    \begingroup
    \singlespacing
    \begin{algorithm}[t]
        \SetKwInOut{Input}{Input}
        \SetKwInOut{Output}{Output}
        \Input{prior $p(\theta)$, number of simulations $N$}
        \Output{approximate posterior $\wh q_N(\theta\mid S)$}
        \BlankLine
        Set $\mathcal{D} = \emptyset$ \\
        \For{$j=1:N$}{
            sample $\theta^j \sim p(\theta)$\\
            simulate $S^j \sim g_n(S\mid\theta^j)$ \\
            add $(S^j, \theta^j)$ into $\mathcal{D}$
        }
            {train} $q(\theta\mid S)$ on $\mathcal{D}$ to obtain $\wh q_N(\theta\mid S)$

        \Return $\wh q_N(\theta\mid S)$
        \caption{Neural Posterior Estimation (NPE)}
        \label{alg:npe}
    \end{algorithm}
    \endgroup

    \begingroup
    \singlespacing
    \begin{algorithm}[t]
        \SetKwInOut{Input}{Input}
        \SetKwInOut{Output}{Output}
        \Input{$S_n$, prior $p(\theta)$, number of simulations $N$}
        \Output{approximate posterior $\wh \pi_N(\theta \mid S_n)$}
        \BlankLine
        Set $\mathcal{D} = \emptyset$ \\
        \For{$j=1:N$}{
            sample $\theta^j \sim \wh{q}_{r-1}(\theta\mid S_n)$ with MCMC \\
            simulate $S^j \sim g_n(S\mid\theta^j)$ \\
            add $(S^j, \theta^j)$ into $\mathcal{D}$
        }
            {train} $\wh q(S\mid\theta)$ on $\mathcal{D}$ and set $\wh{\pi}_r(\theta\mid S_n) \propto \wh q(S_n\mid\theta)p(\theta)$\\
        \Return $\wh \pi_N(\theta\mid S_n)$
        \caption{Neural Likelihood Posterior Estimation (NLE)}
        \label{alg:snl}
    \end{algorithm}
    \endgroup

Both NPE and NLE can sequentially update their approximations over multiple rounds in an attempt to deliver more accurate approximations. Indeed, there exists a diverse set of sequential extensions to NPE and NLE with each having specific features that have been designed to deliver more accurate approximations. However, if our goal is to present results that articulate the underlying behavior of NPE and NLE methods, this diversity detracts from our intended purpose, as our results would have to cater for the various differences in the sequential algorithms. 

Therefore, to ensure that we can present the most general results, we do not directly analyze the sequential versions of NPE and NLE herein, and instead focus only on one-shot implementations. This will allow us the necessary flexibility to state general results without getting bogged down in specific implementation details. That being said, so long as the sequential versions of these approaches deliver more accurate approximations than their one-shot counterparts,\footnote{Formally stating such a condition would require extending a version of Assumptions \ref{ass:sieve} and \ref{ass:sieve3} to these sequential versions of NPE and NLE.} the results presented herein can be viewed as upper bounds on those achievable by sequential versions of these algorithms, was subsequent rounds of the algorithms will likely deliver sharper results than those obtained from the one-shot versions.

	While NPE and NLE have been shown to deliver accurate posterior approximations in certain empirical problems, the statistical behavior of these resulting approximate posteriors have not been formally studied. In Section \ref{sec:snpe} we develop general theoretical results for NPE; while Section \ref{sec:snl} provides similar results for NLE.

\section{Neural Posterior Estimation}\label{sec:snpe}

\subsection{Definitions and preliminaries}
We first introduce several definitions that we use to study to NPE. 
\begin{defn}Let $P$ and $Q$ be two probability measures with support $\Theta\times \mathcal{S}$, and which admit densities - wrt the Lebesgue measure $\lambda$ - $p$ and $q$. Define the total variation (TV), and  Hellinger distance as
\begin{flalign*}
	\dt_{\mathrm{H}}\left(P,Q\right)=&\frac{1}{\sqrt{2}}\left\{\int\left(\sqrt{p(\theta, S)}-\sqrt{q(\theta ,  S)}\right)^2 \dt\lambda\right\}^{\frac{1}{2}},\;\dt_{\mathrm{TV}}\left(P,Q\right)=\int\left|p(\theta , S)-q(\theta ,  S)\right|  \dt\lambda.
	\end{flalign*}The Kullback-Leibler divergence (KLD) is defined as
$$
\mathrm{KLD}(P\|Q)=\int_{\Theta}\int_{\mathcal{S}}p(\theta,S)\log \frac{p(\theta,S)}{q(\theta,S)}\dt\lambda.
$$
\end{defn}	
We will make often use of the following well-known relationships between these distances.  
\begin{lemma}[Pinsker]\label{lem:pinsk}For probability measures $P,Q$,
\begin{flalign*}
\dt_{\mathrm{H}}\left(P,Q\right)^2\le \dt_{\mathrm{TV}}\left(P,Q\right)\le \sqrt{2}\dt_{\mathrm{H}}(P,Q) \text{ and } \dt_{\mathrm{TV}}\left(P,Q\right) \le \sqrt{\frac{1}{2}{\KL(P||Q)}}.
\end{flalign*}
\end{lemma}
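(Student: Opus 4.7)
The plan is to treat Pinsker's lemma as a classical chain of three elementary inequalities and to prove each separately using standard pointwise algebra followed by integration; no machinery beyond Cauchy--Schwarz and a scalar inequality between $|t-1|$ and $t\log t - t + 1$ is required.

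For the first bound $d_{\mathrm{H}}(P,Q)^2 \le d_{\mathrm{TV}}(P,Q)$, I would begin from the pointwise identity $(\sqrt{p}-\sqrt{q})^2 = (\sqrt{p}-\sqrt{q})(\sqrt{p}-\sqrt{q})$ and use the factorization $|p-q| = |\sqrt{p}-\sqrt{q}|\cdot(\sqrt{p}+\sqrt{q})$ together with the obvious inequality $|\sqrt{p}-\sqrt{q}| \le \sqrt{p}+\sqrt{q}$. Integrating, one gets $\int(\sqrt{p}-\sqrt{q})^2\,\dt\lambda \le \int|p-q|\,\dt\lambda$, and the factor $1/2$ in the definition of $d_{\mathrm{H}}$ yields the claim.

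For the second bound $d_{\mathrm{TV}}(P,Q) \le \sqrt{2}\,d_{\mathrm{H}}(P,Q)$, I would use the same factorization $|p-q| = |\sqrt{p}-\sqrt{q}|(\sqrt{p}+\sqrt{q})$ followed by Cauchy--Schwarz,
\begin{equation*}
\int |p-q|\,\dt\lambda \;\le\; \left(\int(\sqrt{p}-\sqrt{q})^2\,\dt\lambda\right)^{1/2}\left(\int(\sqrt{p}+\sqrt{q})^2\,\dt\lambda\right)^{1/2}.
\end{equation*}
The second factor is controlled by expanding and using $(a+b)^2 \le 2(a^2+b^2)$, giving $\int(\sqrt{p}+\sqrt{q})^2\,\dt\lambda \le 2\int(p+q)\,\dt\lambda = 4$. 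Combined with $2d_{\mathrm{H}}(P,Q)^2 = \int(\sqrt{p}-\sqrt{q})^2\,\dt\lambda$, this produces the stated inequality.

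The main obstacle, and the only step that is not essentially a two-line computation, is the Pinsker bound $d_{\mathrm{TV}}(P,Q) \le \sqrt{\tfrac{1}{2}\mathrm{KLD}(P\|Q)}$. The standard approach is to reduce to the Bernoulli case: set $A = \{p \ge q\}$, let $\alpha = P(A)$ and $\beta = Q(A)$, and show via the data-processing inequality for KL divergence that $\mathrm{KLD}(P\|Q) \ge \alpha\log(\alpha/\beta) + (1-\alpha)\log\big((1-\alpha)/(1-\beta)\big)$, while $d_{\mathrm{TV}}(P,Q) = 2(\alpha - \beta)$ (or $\alpha-\beta$ under the paper's convention). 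The problem then reduces to the scalar inequality $\alpha\log(\alpha/\beta) + (1-\alpha)\log\big((1-\alpha)/(1-\beta)\big) \ge 2(\alpha-\beta)^2$, which can be established by fixing $\alpha$, differentiating twice in $\beta$, and checking that the resulting function of $\beta$ is nonnegative with a zero at $\beta = \alpha$. Assembling these three pieces completes the proof; since the result is standard I would simply cite it in the final write-up rather than reproduce the scalar calculus in detail.
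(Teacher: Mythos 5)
The paper does not actually prove this lemma --- it is stated as a ``well-known'' fact with no argument --- so there is no in-paper proof to compare against; your three-part strategy (pointwise factorization for $d_{\mathrm H}^2\le d_{\mathrm{TV}}$, Cauchy--Schwarz for the reverse direction, and the Bernoulli reduction via data processing plus the scalar inequality for Pinsker) is the classical textbook route and is sound in substance.

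There is, however, one concrete issue you should not gloss over: the constants in your second and third steps do not match the lemma as stated under the paper's own definition of $\dt_{\mathrm{TV}}$, which is $\int|p-q|\,\dt\lambda$ \emph{without} the customary factor $\tfrac12$. Your Cauchy--Schwarz computation gives $\int|p-q|\,\dt\lambda\le\bigl(2\dt_{\mathrm H}^2\bigr)^{1/2}\cdot\bigl(4\bigr)^{1/2}=2\sqrt{2}\,\dt_{\mathrm H}$, not $\sqrt{2}\,\dt_{\mathrm H}$, and the Bernoulli reduction gives $\int|p-q|\,\dt\lambda=2(\alpha-\beta)\le 2\sqrt{\KL/2}=\sqrt{2\,\KL}$, not $\sqrt{\KL/2}$; you also have the convention backwards when you parenthetically say the paper's convention gives $\alpha-\beta$ (it gives $2(\alpha-\beta)$, since $\int|p-q|$ is twice the sup-over-sets distance). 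Indeed, with the paper's normalization the stated bounds $\dt_{\mathrm{TV}}\le\sqrt2\,\dt_{\mathrm H}$ and $\dt_{\mathrm{TV}}\le\sqrt{\KL/2}$ are false (take $P,Q$ with disjoint supports: $\dt_{\mathrm{TV}}=2$ but $\dt_{\mathrm H}=1$). The lemma is correct exactly when $\dt_{\mathrm{TV}}=\tfrac12\int|p-q|\,\dt\lambda$, which is what your derivations in fact establish. So your proof is right, but your claim that each computation ``produces the stated inequality'' is not, as written; in the final version you should either adopt the conventional $\tfrac12$-normalization explicitly and note the discrepancy with the paper's displayed definition, or state the inequalities with the constants your argument actually yields.
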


Let $\mathcal{Q}$ be a class of conditional probability measures for $\theta\mid S$. NPE attempts to find the closest $Q\in\mathcal{Q}$ to $\Pi(\cdot\mid S)$, where closeness is defined in terms of $\KL(\Pi\|Q)$. However, since $\Pi(\cdot\mid S)$ is intractable, so is $\KL(\Pi\|Q)$. NPE circumvents this intractability by replacing $\KL(\Pi\|Q)$ with a Monte Carlo approximation based on $N$ iid simulated datasets where $(\theta^i,S^i)\stackrel{iid}{\sim} p(\theta)g_n(S\mid\theta)$. 
\begin{defn}[NPE]\label{def:snpe}Let $\mathcal{Q}$ be a family of conditional distributions for $\theta\mid S$.  At termination of the algorithm, $\widehat{Q}_N$ denotes the $Q\in\mathcal{Q}$ with density $q=\dt Q/\dt\lambda$ that satisfies 
\begin{equation}\label{eq:snpe}
-\frac{1}{N}\sum_{i=1}^{N}\log \wh q_N(\theta^i\mid S^i)K(\theta^i)\le \inf_{Q\in\mathcal{Q}} -\frac{1}{N}\sum_{i=1}^{N}\log q(\theta^i\mid S^i)K(\theta^i),
\end{equation}for a known importance function $K(\theta)$. Given observed summaries $S_n$, the NPE is $\wh Q_N(\cdot\mid S_n)$.
\end{defn}
\begin{remark}
$\widehat{Q}_N$ is the solution to a variational optimization problem that seeks to minimize a Monte Carlo estimate of the (possibly weighted) forward KLD between $\Pi(\cdot\mid S)$ and $Q(\cdot\mid S)$ over $(\theta,S)$. In contrast to standard variational Bayesian inference, NPE minimizes the forward KLD rather than the reverse KLD, i.e., $\KL(Q\|\Pi)$, using simulations from the assumed model. Definition \ref{def:snpe} clarifies that as $N\rightarrow\infty$ it is the difference between $\Pi(\cdot\mid S)$ and $Q(\cdot\mid S)\in\mathcal{Q}$  in forward KLD that NPE attempts to control.  
\end{remark}

Definition \ref{def:snpe} clarifies that NPE learns a flexible approximation to the conditional distribution 
$$
\pi(\theta\mid S)={g_n(S\mid \theta)p(\theta)}/{p(S)},\quad p(S)=\int_\Theta g_n(S\mid\theta)p(\theta)\dt\theta,
$$
and evaluates this approximation at $S=S_n$ to produce $\widehat{Q}_n(\cdot\mid S_n)$. Hence, NPE  may not recover the exact posterior $\Pi(\cdot\mid S_n)$ for at least two reasons. First, if $\mathcal{Q}$ is not rich enough, it may be that $\dt_{\mathrm{TV}}(\widehat{Q}_N, \Pi)>0$ even as $N\rightarrow\infty$, which some have referred to as the expressiveness of $\mathcal{Q}$. However, so long as $\mathcal{Q}$ is a class of universal conditional density approximators and if $\Pi(\cdot\mid S)$ is well-behaved, then NPE will satisfy
$
\dt_{\mathrm{TV}}(\Pi,\wh Q_N)\rightarrow0$ as ${N\rightarrow\infty}$.

More critically, $S\mapsto\wh Q_N(\cdot\mid S)$ is a global approximation to the map $S\mapsto \Pi(\cdot\mid S)$, and so it may be a poor approximation to $\Pi(\cdot\mid S)$ at the point $S=S_n$. To see why this can be the case, note that if $S_n$ is in the tail of $g_n(S\mid\theta)$, then $S\mapsto\wh Q_N(\cdot\mid S)$ will be trained on many values of $S$ that are unlike $S_n$, and NPE will encounter the classical issue of ``extrapolation bias''. The way in which $\wh Q_N(\cdot\mid S)$ is trained represents a fundamental difference with other SBI methods like ABC: $\wh Q_N(\cdot\mid S_n)$ is trained on samples from the entire prior predictive space $\mathcal{S}$; ABC is trained only on values of $S$ in the restricted prior predictive space $\mathcal{S}_\delta=\{S\in\mathcal{S}:\dt(S_n,S)\le\delta\}$, for some known tolerance $\delta\ge0$. Hence, if $\mathcal{S}$ has little mass near $S_n$ under the assumed model density $g_n(\cdot\mid\theta)$ for any $\theta\in\Theta$, then $\wh Q_N(\cdot\mid S_n)$ will be less accurate than ABC methods for approximating $\Pi(\cdot\mid S_n)$. 

To minimize the potential for extrapolation bias, $S\mapsto\wh Q_N(\cdot\mid S)$ must be trained on samples from $g_n(S\mid\theta)p(\theta)$ where $S$ is sufficiently close to $S_n$. One way to minimize possible extrapolation bias is to modify NPE training to include a large number of data points for which $(\theta,S)\sim p(\theta) g_n(S\mid\theta) 1(S\in\mathcal{S}_\delta)$ for some $\delta>0$; the latter idea has been termed preconditioned NPE by \cite{wang2024preconditioned}. %Such a training regime can be incorporated in Definition \ref{def:snpe} by taking $K(\theta)=p(\theta)1(\theta\in\Theta: S\in\mathcal{S}_\delta)$ for some pre-specified $\delta>0$.

The above discussion clarifies that to ensure NPE is not corrupted by extrapolation bias the observed summary $S_n$ cannot be far in the tail of the assumed model density $g_n(S\mid\theta)$ for some $\theta\in\Theta$. Heuristically, this will require that the summaries can be matched by the assumed model, for some value of $\theta$, for all $n$ large enough. This condition has been called model compatibility by \cite{marin2014relevant}, \cite{frazier2020model}, and \cite{frazier2024synthetic}. When model compatibility is satisfied, and under further regularity conditions, we can derive general results on the statistical accuracy of $\wh Q_N(\cdot\mid S_n)$.

\subsection{Statistical Guarantees for NPE}\label{sec:general}	
Before presenting the theoretical behavior of $\wh Q_n(\cdot\mid S_n)$, we present several notations used throughout the remainder of the paper. The letter $C$ denotes a generic positive constant (independent
of $n$), whose value may change from one occurrence to the next, but whose precise value is unimportant.  For two sequences $\{a_n\},\{b_n\}$ of real numbers, 
$a_n \lesssim b_n$ (resp. $\gtrsim$) means $a_n \leq C b_n$ (resp. $a_n \geq Cb_n)$ for all $n$ large enough. Similarly,
$a_n \sim b_n$ means that 
$$
{1/C} \leq \liminf_{n \rightarrow \infty} |a_n/b_n|\leq \limsup_{n \rightarrow \infty} |a_n/b_n| \leq C \;.
$$  For two sequences $a_n,b_n\rightarrow0$, the notation $a_n\ll b_n$ means that $b_n/a_n\rightarrow0$. 
Recall that $P_0^{(n)}$ denotes the true distribution of the data and let $F^{(n)}_0$ denote the implied distribution of the summaries $S_n(y)$, with density $f_0^{(n)}=\dt F_0^{(n)}/\dt\lambda$. The symbol $\Rightarrow$ denotes convergence in distribution  under $P_0^{(n)}$.  For sequences $a_n$, the notation $o_p(a_n)$ and $O_p(a_n)$ have their usual connotations. We let $N(x;\mu,\Sigma)$ denote the Gaussian density at $x$ with mean $\mu$ and variance $\Sigma$. Expectations under $P^{(n)}_0$ are denoted by $\E_{0}^{(n)}$. 

The following regularity conditions on the observed and simulated summaries are used to deduce posterior concentration. These conditions are similar to those maintained in \cite{marin2014relevant} and \cite{FMRR2016} to derive the behavior of ABC posteriors. 

\begin{assumption}\label{ass:obs_sum}
	There exist a positive sequence $\nu_n\rightarrow\infty$ as $n\rightarrow\infty$, a vector $\mu_0 \in \mathbb{R}^{d_s}$, and a covariance matrix $V$ such that
	$
	\nu_n  (S_n-b_0) \Rightarrow \mathcal{N}(0,V)
	$. For some $\alpha\ge d_s+1$, $x_0>0$, and all $x$ such that $0<x< \nu_nx_0$, 	
	\begin{equation}\label{tests}
		F_0^{(n)} \left[ \nu_n \|S_n - b_0\| > x \right] \leq C x^{-\alpha}\,.
	\end{equation}
	
\end{assumption}
\begin{assumption}\label{ass:sim_sum}
	The mean of the summaries under $g_n(\cdot\mid \theta)$, $b_{n}(\theta)=\E_{S\sim g_n(S\mid\theta)}(S)$, exists. For $\alpha$ as in Assumption \ref{ass:obs_sum}, 	and some $x_0>0$, 
	\begin{equation}\label{tests}
		G_n\left\{\nu_n\|S_n(z) - b_{n}(\theta)\| > x \mid\theta\right\} \leq C(\theta)x^{-\alpha}\,\text{ for all }0<x\le C \nu_nx_0,
	\end{equation}where $C(\theta)$ satisfies $\int_\Theta C(\theta)p(\theta)\dt\theta<\infty$. 
	
\end{assumption}
\begin{remark}\label{rem:regulars}
Assumptions \ref{ass:obs_sum}-\ref{ass:sim_sum} are general regularity conditions that are satisfied for many different choices of summaries and are discussed in detail in \cite{marin2014relevant} and \cite{FMRR2016}. Essentially, Assumption \ref{ass:obs_sum} requires that the observed summaries satisfy a central limit theorem and have at least a polynomial tail; Assumption \ref{ass:sim_sum} requires that, uniformly over $\Theta$, the mean of the simulated summaries exists, and that the summaries on which the NPE are trained have a polynomial tail. The tail concentration in Assumptions \ref{ass:obs_sum}-\ref{ass:sim_sum} is what will ultimately drive the concentration of the NPE. These assumptions are common in the literature on SBI, and satisfied for many different choices of summaries. Given this, we defer to \cite{marin2014relevant} and \cite{FMRR2016} for further details. 
\end{remark}
Assumptions \ref{ass:obs_sum}-\ref{ass:sim_sum} do not directly link the assumed model, $g_n(\cdot\mid\theta)$, and the observed summaries in Assumption \ref{ass:obs_sum}. To do so, we rely on the following condition that ensures the assumed model is `compatible' with the value of the observed summary statistic at some $\theta\in\Theta$. 

\begin{assumption}\label{ass:compat}
There exist $\theta_0\in\Theta$ such that $\lim_{n \rightarrow \infty}\inf_{\theta\in\Theta}\|b_{n}(\theta)-b_0\|=0$.	For any $\epsilon > 0$ there exist $\delta >0$  and a set 
	$ \mathcal{E}_n$ such that for all $\theta \in \Theta$ with $\dt(\theta,\theta_0)\le \epsilon$, 
	\begin{equation*}
		\mathcal{E}_n \subseteq \{ S\in\mathcal{S}: g_n(S\mid\theta)  \geq \delta  f_0^{(n)}(S) \},\quad P_0^{(n)}\left( \mathcal{E}_n^c \right) < \epsilon.
	\end{equation*} 
\end{assumption}
\begin{remark}
Assumption \ref{ass:compat} essentially requires that the assumed model and the true DGP are similar, and that the observed summaries can be replicated under the assumed model. In particular, the second part of the assumption states that, for all values of $\theta$ that are close to $\theta_0$, the assumed and true model densities must also be close. Note, that Assumption \ref{ass:compat} does not require that the true DGP $P^{(n)}_0$ is equal to $P^{(n)}_{\theta_0}$, which is a sufficient condition, but stronger than necessary, only that the density of the summaries under the true model is close enough to that of the assumed model for the summaries.  A version of Assumption \ref{ass:compat} has been used in all theoretical studies of SBI methods of which we are aware. 
 \end{remark}
\begin{remark}\label{rem:compat1}
Assumption \ref{ass:compat} makes rigorous our earlier discussion that for $\wh Q_N(\cdot\mid S_n)$ to be an accurate approximation of $\Pi(\cdot\mid S_n)$ it must be trained on samples from the prior predictive that are sufficiently close to $S_n$. Without this requirement there is no hope that $\wh Q_N(\cdot\mid S_n)$ will be an accurate approximation to $\Pi(\cdot\mid S_n)$. When $g_n(\cdot\mid \theta)$ has little support near $S_n$, we suggest to apply the preconditioned NPE approach of \cite{wang2024preconditioned}.	
\end{remark}

We also require that the prior places sufficient mass on the value $\theta_0\in\Theta$ under which the assumed model can match the observed summaries, and that the prior does not put proportionally more mass on values away from $\theta_0$. Let $\epsilon_n=o(1)$ be such that $\epsilon_n\gtrsim \nu_n^{-1}$, take $M>0$, and define $\Theta_n=\{\theta\in\Theta:\dt(\theta,\theta_0)\le M\epsilon_n\}$. 
\begin{assumption}\label{ass:prior_concentration}
	There are constants $\tau$ and $d$, with $\tau\ge d+1$, such that
	\begin{flalign*}
		\int_{\Theta_{n}^c}p(\theta)\dt\theta\lesssim M^{\tau}\epsilon_n^{\tau},\quad	
		\int_{\Theta_{n}}p(\theta)\dt\theta\gtrsim M^{d}\epsilon_n^{d}.
	\end{flalign*}
\end{assumption}

Assumptions \ref{ass:obs_sum}-\ref{ass:prior_concentration} allow us to control the behavior of the (infeasible) partial posterior $\Pi(\cdot\mid S_n)$, but do not control the accuracy with which $\widehat{Q}_N(\cdot\mid S_n)$ approximates $\Pi(\cdot\mid S_n)$. To understand how we can control the accuracy of this approximation, first recall that NPE is trained using $N$ randomly simulated datasets each generated independent from $g_n(\cdot\mid\theta)p(\theta)$. Therefore, $\widehat{Q}_N(\cdot\mid S_n)$ contains two separate sources of randomness: one source from $S_n$, with law $P^{(n)}_0$, and one source from the $N$ iid datasets $(S^i,\theta^i)$ generated from $g_n(\cdot\mid\theta)p(\theta)$. To accommodate the randomness of this synthetic training data, we must integrate out its influence within $\widehat{Q}_N(\cdot\mid S_n)$. To this end, we let $\E^{(N,n)}_{0}$ denote expectations calculated under the training data $\{(\theta^i,S^i)\stackrel{iid}{\sim}g_n(\cdot\mid\theta)p(\theta):i\le N\}$ and $P^{(n)}_0$. 

While it is possible to entertain several different assumptions regarding the accuracy with which $\wh Q_N(\cdot\mid S_n)$ approximates $\Pi(\cdot\mid S_n)$, we note that the concentration in Assumption \ref{ass:obs_sum} implies that as $n$ becomes large we do not require an accurate approximation to $\Pi(\cdot\mid S_n)$ over the entirety of $\mathcal{S}$, but only in a neighborhood of $b_0=\mathbb{E}_0^{(n)}(S_n)$. This observation, and the representation of NPE as a variational optimizer based on forward KLD  motivates the following assumption.\footnote{A version of Assumption \ref{ass:sieve} is also possible to entertain if we replace $b_0$ with $S_n$. However, since $\|S_n-b_0\|=o_p(1)$, we believe the stated version of Assumption \ref{ass:sieve} to be more interpretable.} 

\begin{assumption}\label{ass:sieve}For some $\gamma_{N}=o(1)$, 
	$
	\E_0^{(N,n)}\KL\{\Pi(\cdot\mid b_0)||\wh Q_N(\cdot\mid b_0)\}\le \gamma_N^2 .
	$
\end{assumption}

\begin{remark}\label{rem:compat2}
	Assumption \ref{ass:sieve} places a condition on the rate at which $\wh Q_N(\cdot\mid S)$ learns about $\Pi(\cdot\mid S)$, at the point $S=b_0$. Note that if Assumption \ref{ass:compat} is not satisfied, there is no hope that Assumption \ref{ass:sieve} can be satisfied: if Assumption \ref{ass:compat} is violated, $b_0\not\in\mathcal{S}$, then $b_0$ is not in the support of $g_n(S\mid\theta)$ for any $\theta\in\Theta$, and NPE cannot learn about $b_0\mapsto \Pi(\cdot\mid b_0)$. This clarifies that without Assumption \ref{ass:compat}  there is no reason to suspect $\wh Q_N(\cdot\mid S_n)$ will be a useful approximation to $\Pi(\cdot\mid S_n)$.  
\end{remark} 

Up to a constant, and for $n$ large, $\KL\{\Pi(\cdot\mid b_0)||Q(\cdot\mid b_0)\}$ is the term we minimize in NPE when $N\rightarrow\infty$.\footnote{This constant term depends purely on the assumed model, and is independent of the class $\mathcal{Q}$.} That is, $\KL\{\Pi(\cdot\mid b_0)||Q(\cdot\mid b_0)\}$ is precisely the criterion that NPE would minimize if $b_0$ were known. Hence, Assumption \ref{ass:sieve} is both intuitive and natural to maintain.
\begin{remark}\label{rem:other_ass}
	{While Assumption \ref{ass:sieve} may be natural to maintain given the definition of NPE, it can be cumbersome to check in practice. Equivalent theoretical results to those presented later can be obtained if Assumption \ref{ass:sieve} is replaced with a similar condition in Hellinger distance; e.g., $\E_0^{(N,n)}\dt_{\mathrm{H}}\{\Pi(\cdot\mid b_0)||\wh Q_N(\cdot\mid b_0)\}\le \gamma_N$. See Appendix \ref{app:results} and Corollary \ref{corr:main} for further details.} 	
\end{remark}

The last assumption we maintain guarantees that the mappings $S\mapsto\Pi(\cdot\mid S)$ and $S\mapsto Q(\cdot\mid S)$ have sufficient regularity in terms of their conditioning argument $S$, and requires smoothness of the conditional density, wrt the conditioning argument, in the total variation distance. This condition has been referred to as total-variation smoothness by \cite{neykov2021minimax} and \cite{li2022minimax}, and was shown by the latter authors to be valid for smooth densities of the form $p(\theta,s)/\int p(\theta,s)\dt\theta$.

\begin{assumption}\label{ass:lipz} For some $\delta>0$, all $\theta\in\Theta$ and $s, s^{\prime} \in\mathcal{S}_\delta(b_0)$: for $p(\theta\mid s)$ denoting $\pi(\theta\mid s)$ or $q(\theta\mid s)$,  
	$
	\int |p(\theta\mid s)-p(\theta\mid s')|\dt \theta \leq C\left\|s-s^{\prime}\right\|,
	$
	for some sufficiently large $C>0$ such that $C\perp(\theta,s,s')$ and $\E_{0}^{(N,n)}(C)<\infty$. 
\end{assumption}
Under Assumptions \ref{ass:obs_sum}-\ref{ass:lipz} we prove that,  for a generic approximating class $\mathcal{Q}$, the NPE $\wh Q_N(\cdot\mid S_n)$ has theoretical behavior that makes it useful as a posterior approximation. 

\begin{theorem}\label{thm:main}Let $\epsilon_n=o(1)$ be a positive sequence such that $\nu_n\epsilon_n\rightarrow\infty$.  Under Assumptions \ref{ass:obs_sum}-\ref{ass:lipz}, for any positive sequence $M_n\rightarrow\infty$,  and any loss function $:L:\mathcal{S}\times\mathcal{S}\rightarrow\mathbb{R}_+$, 
	$$
	\E_0^{(N,n)}\wh Q_N\left[ L\{b_{n}(\theta),b_0\}>M_n (\epsilon_n+\gamma_{N})\mid S_n\right]=o(1)\quad (\text{as }n,N\rightarrow\infty).
	$$
\end{theorem}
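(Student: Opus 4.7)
The natural starting point is the triangle-type decomposition
\begin{equation*}
\wh Q_N\left[A_n \mid S_n\right] \leq \Pi(A_n \mid S_n) + \dt_{\mathrm{TV}}\{\wh Q_N(\cdot\mid S_n), \Pi(\cdot\mid S_n)\},
\end{equation*}
where $A_n = \{L(b_n(\theta), b_0) > M_n(\epsilon_n + \gamma_N)\}$. Taking $\E_0^{(N,n)}$ on both sides reduces the claim to showing that each of the two resulting terms is $o(1)$; the first term depends only on $S_n$ (through the exact partial posterior), while the second term captures the error from neural approximation.

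\textbf{Concentration of the partial posterior.} I would first show $\E_0^{(N,n)}\Pi(A_n \mid S_n) = o(1)$ by adapting the ABC-style concentration argument of \cite{FMRR2016}. The polynomial tail bounds in Assumptions \ref{ass:obs_sum}--\ref{ass:sim_sum}, together with the compatibility condition in Assumption \ref{ass:compat} and the prior mass condition in Assumption \ref{ass:prior_concentration}, deliver a standard test construction that yields $\Pi[\,\|b_n(\theta)-b_0\| > M \epsilon_n \mid S_n\,] = o_{p}(1)$ under $P_0^{(n)}$ for any $M>0$ sufficiently large. Since $\Pi(\cdot\mid S_n)$ is independent of the $N$ training draws, bounded convergence promotes this to $\E_0^{(N,n)}\Pi(A_n\mid S_n) = o(1)$. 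Translation from the Euclidean statement to a generic loss $L$ is immediate given that the rate $M_n(\epsilon_n+\gamma_N)$ grows strictly faster than the concentration radius $\epsilon_n$.

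\textbf{Control of the NPE approximation error.} For the TV term, I would insert $b_0$ as an intermediate conditioning value:
\begin{equation*}
\dt_{\mathrm{TV}}\{\wh Q_N(\cdot\mid S_n), \Pi(\cdot\mid S_n)\}
\le \dt_{\mathrm{TV}}\{\wh Q_N(\cdot\mid S_n), \wh Q_N(\cdot\mid b_0)\}
 + \dt_{\mathrm{TV}}\{\wh Q_N(\cdot\mid b_0), \Pi(\cdot\mid b_0)\}
 + \dt_{\mathrm{TV}}\{\Pi(\cdot\mid b_0), \Pi(\cdot\mid S_n)\}.
\end{equation*}
Assumption \ref{ass:lipz} bounds the first and third terms by $C\|S_n - b_0\|$ on the event $\{\|S_n-b_0\| \le \delta\}$, which has $P_0^{(n)}$-probability $1-o(1)$ by Assumption \ref{ass:obs_sum}; using $\|S_n-b_0\| = O_p(\nu_n^{-1}) = O_p(\epsilon_n)$ and the integrability of $C$ controls these in expectation at rate $\epsilon_n$. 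The middle term is handled by Pinsker (Lemma \ref{lem:pinsk}) followed by Jensen's inequality:
\begin{equation*}
\E_0^{(N,n)}\dt_{\mathrm{TV}}\{\wh Q_N(\cdot\mid b_0), \Pi(\cdot\mid b_0)\}
\le \E_0^{(N,n)}\sqrt{\tfrac{1}{2}\KL\{\Pi(\cdot\mid b_0)\|\wh Q_N(\cdot\mid b_0)\}}
\le \tfrac{1}{\sqrt{2}}\gamma_N,
\end{equation*}
where the final bound invokes Assumption \ref{ass:sieve}. Summing the three contributions yields $\E_0^{(N,n)}\dt_{\mathrm{TV}}\{\wh Q_N(\cdot\mid S_n), \Pi(\cdot\mid S_n)\} = O(\epsilon_n + \gamma_N) = o(1)$. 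Combining this with the partial posterior concentration completes the proof via Markov's inequality if needed.

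\textbf{Anticipated obstacle.} The main technical hurdle is the second term: Assumption \ref{ass:sieve} only controls NPE at the single point $b_0$, whereas the quantity of interest evaluates NPE at the random observed summary $S_n$. The triangle-through-$b_0$ device makes this workable, but it hinges on (i) restricting attention to the high-probability event where $S_n$ lies in the neighborhood $\mathcal{S}_\delta(b_0)$ on which Assumption \ref{ass:lipz} applies, and (ii) properly separating the two sources of randomness so that the KL bound under $\E_0^{(N,n)}$ can be converted, via Pinsker and Jensen, into a TV bound of the correct order. Once that book-keeping is in place, the remaining steps are essentially the standard ABC concentration analysis plus routine manipulation.
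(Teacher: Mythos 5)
Your proposal is correct and follows essentially the same route as the paper: the same decomposition $\wh Q_N[A_n\mid S_n]\le \Pi(A_n\mid S_n)+\dt_{\mathrm{TV}}\{\wh Q_N(\cdot\mid S_n),\Pi(\cdot\mid S_n)\}$, the same triangle inequality through the intermediate conditioning value $b_0$ combined with Assumption \ref{ass:lipz}, Pinsker, Jensen and Assumption \ref{ass:sieve}, and the same ABC-style concentration argument (the paper's Lemmas \ref{lem:contmap}--\ref{lem:denom}) for the exact partial posterior term. Your explicit restriction to the event $\{\|S_n-b_0\|\le\delta\}$ before invoking Assumption \ref{ass:lipz} is in fact slightly more careful than the paper's own write-up.
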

\begin{remark}
	Theorem \ref{thm:main} implies that for $\wh Q_N(\cdot\mid S_n)$ to concentrate at rate $\epsilon_n$ we must asymptotically learn the map $S\mapsto \Pi(\cdot\mid S)$ fast enough so that $\gamma_N$ in Assumption \ref{ass:sieve} satisfies $\gamma_N\lesssim \epsilon_n$. For instance, if $\gamma_N\lesssim 1/\nu_n$, then Theorem \ref{thm:main} implies that NPE will concentrate at the usual rate. That is, if the practitioner chooses $N$ large enough so that $\nu_n\gamma_N=o(1)$ as $n,N\rightarrow\infty$, then $\wh Q_N(\cdot\mid S_n)$ attains the standard posterior concentration rate. To obtain precise conclusions on how large $N$ must be to achieve this rate, we require more structure on $\mathcal{Q}$, which we analyze in Section \ref{sec:specific}. 
\end{remark}

If $\Pi(\cdot\mid S_n)$ correctly quantifies uncertainty, then the regularity conditions in Assumptions \ref{ass:obs_sum}-\ref{ass:lipz} are sufficient to prove that NPE correctly quantifies uncertainty. Let $\theta_n=\argmin_{\theta\in\Theta}-\log g_n(S_n\mid \theta)$, $t=\nu_n(\theta-\theta_n)$, $\mathcal{B}=\lim_n\partial b_n(\theta_0)/\partial\theta^\top$, $\Sigma=\mathcal{B}^\top V^{-1}\mathcal{B}$, and let $\pi(t\mid S_n)$ denote the posterior for $t$. 
 
\begin{theorem}\label{thm:bvm}Under Assumptions \ref{ass:obs_sum}-\ref{ass:lipz}, if $\mathcal{B}$ is non-singular, $\nu_n\gamma_N=o(1)$, and $\int |\pi(t\mid S_n)-N\{t;0,\Sigma^{-1}\}|\dt\theta=o_p(1)$, then
	$\int |\wh q_N(t\mid S_n)-N\{t;0,\Sigma^{-1}\}|\dt t=o_p(1).
$
\end{theorem}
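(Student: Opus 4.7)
The plan is to prove the Bernstein--von Mises (BvM) phenomenon for the NPE by reducing it, via the triangle inequality, to a TV comparison between $\wh Q_N(\cdot\mid S_n)$ and the infeasible partial posterior $\Pi(\cdot\mid S_n)$, and then using the assumed BvM for $\Pi(\cdot\mid S_n)$. Since the total variation distance is invariant under the measurable bijection $\theta\mapsto t=\nu_n(\theta-\theta_n)$, I can work in the $\theta$-scale throughout and translate back at the end. Concretely, the triangle inequality gives
$$
\int\bigl|\wh q_N(t\mid S_n)-N(t;0,\Sigma^{-1})\bigr|\dt t\;\le\;\int\bigl|\wh q_N(\theta\mid S_n)-\pi(\theta\mid S_n)\bigr|\dt\theta+\int\bigl|\pi(t\mid S_n)-N(t;0,\Sigma^{-1})\bigr|\dt t,
$$
so it suffices to show that the first term on the right is $o_p(1)$, since the second is $o_p(1)$ by hypothesis.

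The first term I would bound via a second triangle inequality, going through the densities evaluated at the deterministic point $b_0$:
$$
\int\bigl|\wh q_N(\theta\mid S_n)-\pi(\theta\mid S_n)\bigr|\dt\theta\le \int\bigl|\wh q_N(\theta\mid S_n)-\wh q_N(\theta\mid b_0)\bigr|\dt\theta+\int\bigl|\wh q_N(\theta\mid b_0)-\pi(\theta\mid b_0)\bigr|\dt\theta+\int\bigl|\pi(\theta\mid b_0)-\pi(\theta\mid S_n)\bigr|\dt\theta.
$$
For the middle term, Assumption \ref{ass:sieve} combined with Pinsker's inequality (Lemma \ref{lem:pinsk}) and Jensen's inequality yields $\E_0^{(N,n)}\dt_{\mathrm{TV}}\{\Pi(\cdot\mid b_0),\wh Q_N(\cdot\mid b_0)\}\lesssim \gamma_N$, so by Markov the middle term is $O_p(\gamma_N)=o_p(1)$. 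For the outer two terms, I would appeal to Assumption \ref{ass:lipz}: since Assumption \ref{ass:obs_sum} gives $\|S_n-b_0\|=O_p(\nu_n^{-1})$, with probability tending to one we have $S_n\in\mathcal{S}_\delta(b_0)$, and the TV-Lipschitz condition (which Assumption \ref{ass:lipz} imposes on both $\pi(\theta\mid s)$ and $\wh q_N(\theta\mid s)$, with an integrable random Lipschitz constant) bounds each outer term by $C\|S_n-b_0\|=O_p(\nu_n^{-1})=o_p(1)$.

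Combining, $\int|\wh q_N(\theta\mid S_n)-\pi(\theta\mid S_n)|\dt\theta=O_p(\gamma_N+\nu_n^{-1})=o_p(1)$, which, after the change of variables $t=\nu_n(\theta-\theta_n)$ (TV is invariant under this bijection), finishes the proof. The condition $\nu_n\gamma_N=o(1)$ is not needed for the $o_p(1)$ statement itself; it is inherited from the broader theoretical framework and would become essential only if one wanted a rate of convergence in TV rather than mere BvM consistency.

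The main obstacle is really just a bookkeeping one: verifying that the random Lipschitz constant $C$ in Assumption \ref{ass:lipz}, which is allowed to depend on the training sample via $\wh q_N$, can be combined with the probabilistic bound $\|S_n-b_0\|=O_p(\nu_n^{-1})$ to give $C\|S_n-b_0\|=o_p(1)$. This follows from the finiteness of $\E_0^{(N,n)}(C)$ and independence of the source of randomness in $C$ (the training data) from the source of randomness in $S_n-b_0$ (the observed data), which together with Markov's inequality make $C=O_p(1)$ and hence $C\|S_n-b_0\|=O_p(\nu_n^{-1})$. Similarly, one must also verify that $S_n$ lies in the set $\mathcal{S}_\delta(b_0)$ on which Assumption \ref{ass:lipz} applies with probability going to one, which again follows from Assumption \ref{ass:obs_sum}.
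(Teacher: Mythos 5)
Your proposal is correct and follows essentially the same route as the paper's proof: the same two-stage triangle inequality through $b_0$, with Assumption \ref{ass:lipz} controlling the two outer terms via $\|S_n-b_0\|=O_p(\nu_n^{-1})$ and Pinsker plus Assumption \ref{ass:sieve} controlling the middle term, concluded by Markov's inequality. Your side remark that $\gamma_N=o(1)$ alone suffices for the bare $o_p(1)$ conclusion (with $\nu_n\gamma_N=o(1)$ only needed for a rate) is also consistent with how the paper's argument actually closes.
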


Theorem \ref{thm:bvm} demonstrates that if the intractable posterior $\Pi(\cdot\mid S_n)$ is asymptotically Gaussian and calibrated, then so long as $\nu_n^{}\gamma_{N}=o(1)$, the NPE is also asymptotically Gaussian and calibrated. Theorem \ref{thm:bvm} implies that NPE behaves similarly to other SBI methods like ABC and BSL under model compatibility, and gives a rigorous basis for their continued use as inference engines.

\subsection{The Impact of Compatibility on NPE}

As discussed in Remarks \ref{rem:compat1}-\ref{rem:compat2}, the statistical behavior of NPE depends on the validity of the compatibility condition in Assumption \ref{ass:compat}: if NPE does not see training data like $S_n$, which occurs when Assumption \ref{ass:compat} is violated, then Assumption \ref{ass:sieve} will not be satisfied in practice and NPE will not be an accurate approximation to $\Pi(\cdot\mid S_n)$ in general. In this section, we illustrate that even when $N$ is very large, if $S_n$ differs from the values under which $\wh{Q}_N$ was trained, then NPE will not accurately approximate $\Pi(\cdot\mid S_n)$. 

We illustrate this phenomenon using a simple moving average model of order two: the observed data $y_t$ is generated according to the model
$$
y_t=\theta_1\epsilon_{t-1}+\theta_2\epsilon_{t-2}+\epsilon_t,\quad \epsilon_t\stackrel{iid}{\sim}N(0,1),\quad(t=1,\dots,T),
$$where the unknown parameters $\theta_1$ and $\theta_2$ control the autocorrelation in the observed data, and the prior on $\theta=(\theta_1,\theta_2)^\top$ is uniform over
$$
-1<\theta_1<1,\quad \theta_1+\theta_2>-1,\quad\theta_1-\theta_2<1.
$$The uniform prior over the above triangular region is enforced to ensure the model generates a process that is invertible, so that the true unknown value of $\theta$ can be recovered from the data. Consider as our observed summaries for inference on $\theta$ the sample variance, the first and second order auto-covariances: $S_n(y)=(\delta_0(y),\delta_1(y),\delta_2(y))^\top$ where $\delta_0(y)=T^{-1}\sum_{t=1}^{T}y_t^2$, $\delta_1(y)=T^{-1}\sum_{t=2}^{T}y_ty_{t-1}$ and $\delta_2(y)=T^{-1}\sum_{t=3}^{T}y_ty_{t-2}$. Under the assumed model, the mean of these summaries is given by 
$$b_n(\theta)=b(\theta)=(b_0(\theta),b_1(\theta),b_2(\theta))^\top=((1+\theta_1^2+\theta_2^2),\theta_1(1+\theta_2),\theta_2)^\top.$$ 

The term $b_0(\theta)$ is lower bounded by unity. Therefore, in large samples, if we encounter a value of $\delta_0(y)$ that is below unity, we will enter a situation of incompatibility (i.e., Assumption \ref{ass:compat} is invalid), and Assumptions \ref{ass:sieve} is likely violated. To illustrate the impact of incompatibility on NPE, and its relevance for Assumption \ref{ass:sieve}, we fix the value of $\delta_0(y)$ and compare $\KL\{\Pi(\cdot\mid S_n)\|\wh Q_N(\cdot\mid S_n)\}$ for various choices of $N$. We set $\delta_0(y) = 0.01$ to mimic a case of extreme incompatibility and $\delta_0(y) = 0.99$ to represent minor incompatibility.  %Taking values of $\delta_0(y)\in\{0.25,0.50,0.75,1\}$ then considers a case where the model is incompatible, tending towards compatibility. 

In Figure \ref{fig:ma2_kld_compare}, we plot the KLD between the exact posterior $\Pi(\cdot\mid S_n)$ and the NPE $\wh{Q}_n(\cdot\mid S_n)$ for both of our choices of $\delta_0(y)$, and for our four difference choices of $N\in\{n,n\log(n),n^{3/2},n^2\}$.
Details on the computation of the KLD and the exact partial posterior sampling are provided in Appendix~\ref{sec:ma2_appendix}.
The results clearly show that under major incompatibility, larger choices of $N$ do not deliver a more accurate approximation of the posterior, which violates Assumption \ref{ass:sieve}. However, in the case of minor incompatibility we see that the value of the KLD between $\Pi(\cdot\mid S_n)$  and $\wh Q_N(\cdot\mid S_n)$ decreases as $N$ increases, suggesting that Assumption \ref{ass:sieve} is satisfied, or is close to being satisfied. These results empirically demonstrate that for Assumption \ref{ass:sieve} to be satisfied NPE must be trained on data that is close to the observed summaries $S_n$.

% \begin{figure}
%     \centering
%     \includegraphics[width=0.5\linewidth]{untitled folder/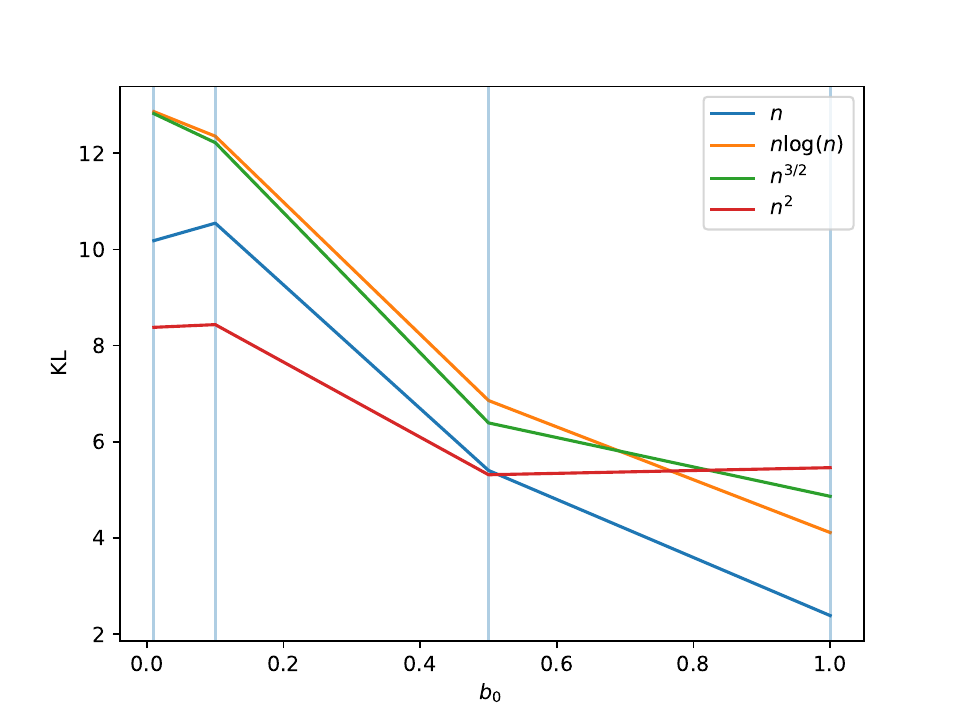}
%     \caption{Comparison of $\KL\{\Pi(\cdot\mid S_n)\|\widehat{Q}(\cdot\mid S_n)\}$  across different choices of $N$. }
%     \label{fig:mat2_kld_compare}
% \end{figure}

\begin{figure}[htbp]
    \centering
    % Incompatible Case
    \begin{subfigure}[b]{0.48\textwidth}
        \centering
        \includegraphics[width=\textwidth]{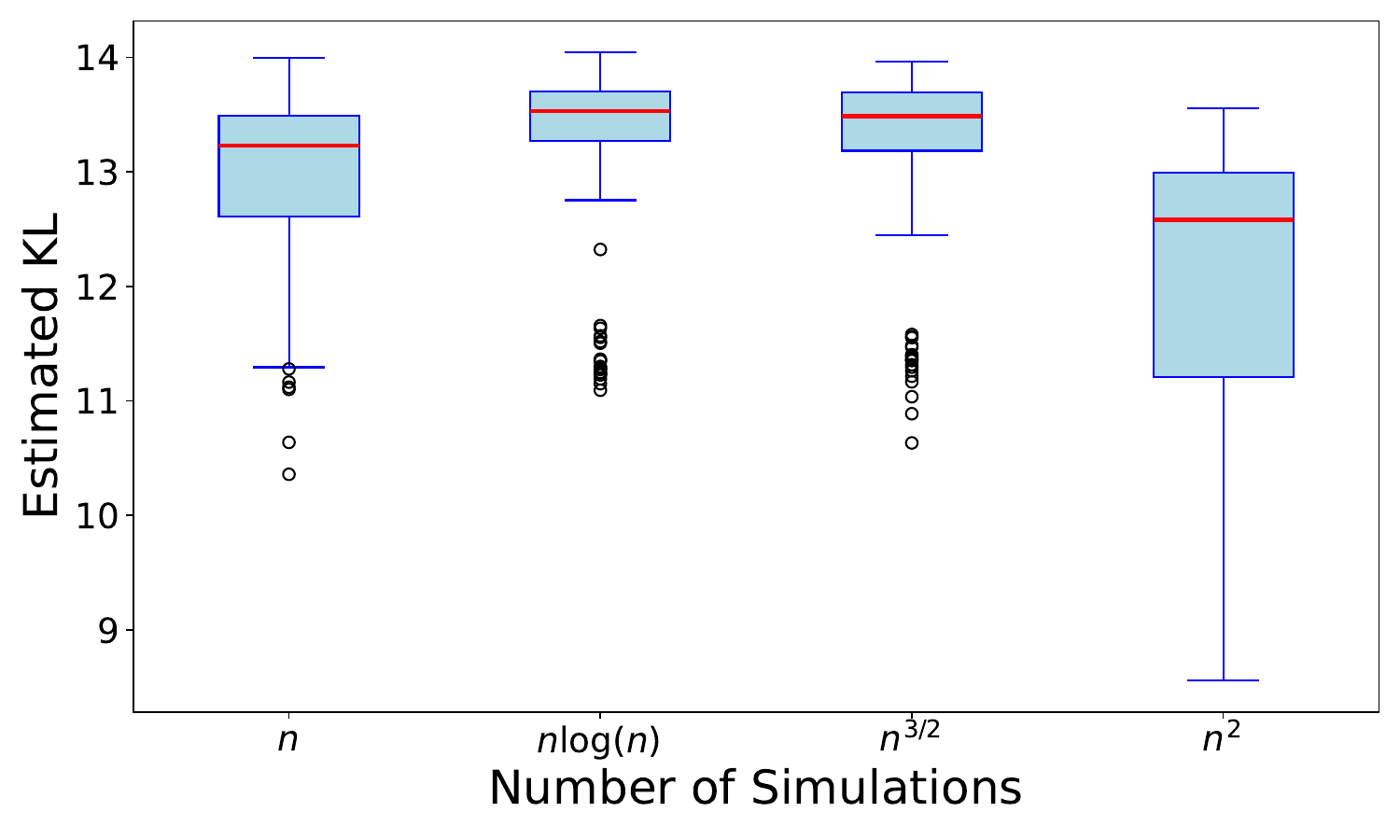}
        \caption{$\delta_0(y) = 0.01$}
        \label{fig:ma2_kld_incompatible}
    \end{subfigure}
    \hfill
    % Compatible Case
    \begin{subfigure}[b]{0.48\textwidth}
        \centering
        \includegraphics[width=\textwidth]{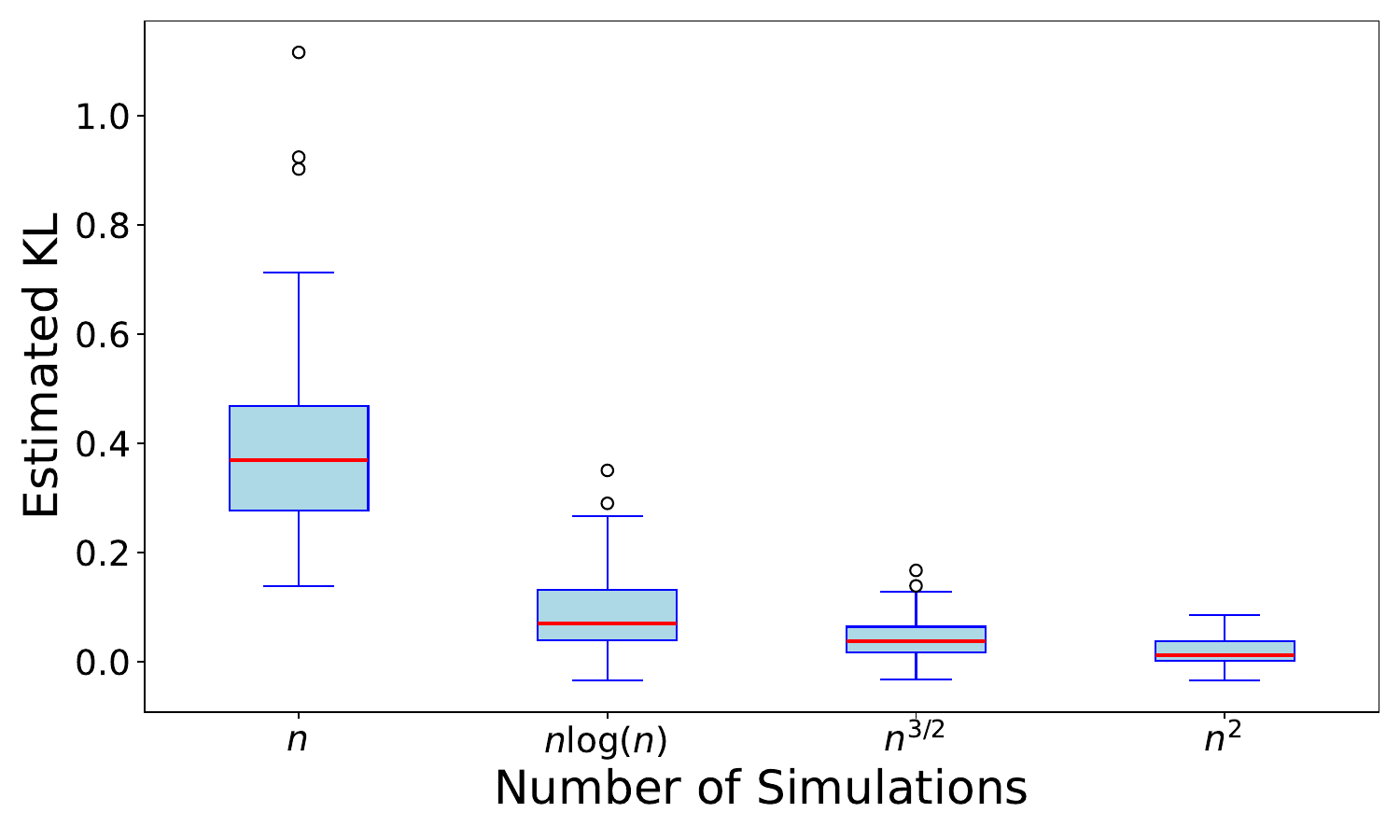}
        \caption{$\delta_0(y) = 0.99$}
        \label{fig:ma2_kld_compatible}
    \end{subfigure}
    \caption{Comparison of $\KL\{\Pi(\cdot\mid S_n)\|\widehat{Q}_n(\cdot\mid S_n)\}$ across different choices of $N$ for extreme (a) and minor (b) cases of incompatibility. Please note that the scales in Panels (a) and (b) differ markedly.}
    \label{fig:ma2_kld_compare}
\end{figure}

\section{The Choice of $\mathcal{Q}$ and its Implications}\label{sec:specific}

Theorem \ref{thm:main} implies that for NPE to deliver reliable inference on $\theta$, it must be that: 1) the assumed model is compatible with $S_n$ (Assumption \ref{ass:compat}); 2) elements in $\mathcal{Q}$ must be a good approximation to $\Pi(\cdot\mid S_n)$ (Assumption \ref{ass:sieve}). 
In our framework the accuracy of $\widehat{Q}_N(\cdot\mid S_n)$, as an approximation to $\Pi(\cdot\mid S_n)$, is encapsulated by $\gamma_N$,  and  depends on the smoothness of $ \Pi(\cdot\mid S)$ and the complexity of $\Theta\times\mathcal{S}$. To obtain specific requirements on $N$ to ensure that $\wh Q_N(\cdot\mid S_n)$ attains the usual posterior concentration rate, we must choose $\mathcal{Q}$ and make assumptions about the smoothness of $\Pi(\cdot\mid S_n)$.

While neural methods and normalizing flows are common choices for the approximating class $\mathcal{Q}$, the diversity of such methods, along with their complicated tuning and training regimes, makes establishing theoretical results on the rate of convergence, $\gamma_N$ in Assumption \ref{ass:sieve}, for such methods difficult. Without such a result, obtaining a specific requirement on $N$ is infeasible. We circumvent this issue and obtain concrete requirements on the value of $N$ for NPE by making a high-level assumption that the chosen conditional density estimation method used to approximate $\Pi(\cdot\mid S_n)$ attain the minimax rate of convergence for conditional density estimation in the class of locally $\beta$-H\"{o}lder functions: for any $\beta>0,\tau_0\ge0$ and a non-negative function $L:\mathbb{R}^d\rightarrow\mathbb{R}_+$, let $\mathcal{C}^{\beta,L,\tau_0}(\mathbb{R}^d)$ be the set 
$$
\mathcal{C}^{\beta,L,\tau_0}(\mathbb{R}^d)=\{f:  |(\partial^kf)(x+y)-(\partial^kf)(x)|\le L(x)\exp\{\tau_0\|y\|^2\}\|y\|^{\beta-\lfloor \beta\rfloor}\;(y,x\in\mathbb{R}^d)\},
$$where $\partial^k f$ denotes mixed partial derivatives up to order $k$, and $\lfloor x\rfloor$ the largest integer strictly smaller than $x$. When $\Pi(\theta\mid S)\in\mathcal{C}^{\beta,L,\tau_0}(\mathbb{R}^{d_\theta+d_s})$, there are different possible choices of $\mathcal{Q}$ that one could obtain to deduce a specific rate for $\gamma_N$. In what follows, we assume that $\mathcal{Q}$ is chosen so that $\wh Q_N(\cdot\mid S_n)$ in Definition \ref{def:snpe} achieves the minimax rate for density estimation over $\mathcal{C}^{\beta,L,\tau_0}(\mathbb{R}^{d_\theta+d_s})$.    
\begin{assumption}\label{ass:sieve2}
	For all $n$ large enough, $\pi(\theta\mid S_n)\in \mathcal{C}^{\beta,L,0}(\mathbb{R}^{d_\theta+d_s})$. For $\gamma_N$ as in Assumption \ref{ass:sieve}, the class $\mathcal{Q}$ is such that, for some $\beta>0$, $\gamma_N\asymp N^{-\beta/(2\beta+d_\theta+d_s)}\log (n)^\kappa$ for some $\kappa>0$. 
\end{assumption}
%\begin{remark}
The rate in Assumption \ref{ass:sieve2}, without the $\log(n)^\kappa$ term, is minimax for estimation when $\Pi(\cdot\mid S_n)\in\mathcal{C}^{\beta,L,0}(\mathbb{R}^{d_\theta+d_s})$ (\citealp{barron1999risk}); see, also Theorem 1 of \cite{efromovich2010dimension} and Theorem 1 of \cite{shen2013adaptive} for further examples. That is, Assumption \ref{ass:sieve2} is a high-level assumption on the class $\mathcal{Q}$ and the estimator $\wh Q_N\in\mathcal{Q}$ that assumes they deliver the best possible estimator when all that is known is that $\Pi(\cdot\mid S_n)\in\mathcal{C}^{\beta,L,0}(\mathbb{R}^{d_\theta+d_s})$. While we are unaware of any theoretical results which state that the classes of neural-nets and normalizing flows commonly used for NPE satisfy Assumption \ref{ass:sieve2}, in practice the accuracy of these methods suggests that such an assumption is reasonable to maintain.

The following result, which is a direct consequence of Theorem \ref{thm:main}, Assumption \ref{ass:sieve2} and some algebra, provides an explicit requirement on the number of simulations, $N$, required for NPE to attain the usual posterior concentration rate. 
\begin{corollary}\label{cor:one}
	Under Assumptions \ref{ass:obs_sum}-\ref{ass:sieve2}, $\wh Q_N(\cdot\mid S_n)$ concentrates at rate $\epsilon_n$ so long as  $N\gtrsim \nu_n^\alpha\log(n)^\kappa$, for $\alpha> (2\beta+d_\theta+d_s)/\beta$. 	
\end{corollary}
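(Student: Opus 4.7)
The plan is to apply Theorem \ref{thm:main} directly, substitute the minimax rate for $\gamma_N$ supplied by Assumption \ref{ass:sieve2}, and solve the resulting inequality for $N$. The corollary is, at heart, exponent arithmetic laid on top of the posterior concentration result, so no fresh probabilistic argument is required.

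First, I would invoke Theorem \ref{thm:main}, which asserts that
\[
\E_0^{(N,n)} \wh Q_N\!\left[ L\{b_n(\theta),b_0\} > M_n (\epsilon_n + \gamma_N) \mid S_n \right] = o(1)
\]
for any $M_n \to \infty$ and any loss $L$. Assumption \ref{ass:prior_concentration} permits the choice $\epsilon_n \asymp 1/\nu_n$ (the smallest rate consistent with $\epsilon_n \gtrsim \nu_n^{-1}$), so concentrating at the parametric rate reduces to arranging $\gamma_N \lesssim 1/\nu_n$, i.e., $\nu_n \gamma_N = o(1)$, exactly as flagged in the remark following Theorem \ref{thm:main}.

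Second, Assumption \ref{ass:sieve2} gives $\gamma_N \asymp N^{-\beta/(2\beta+d_\theta+d_s)} \log(n)^\kappa$. Substituting the hypothesis $N \gtrsim \nu_n^{\alpha} \log(n)^{\kappa}$ and collecting exponents yields
\[
\nu_n \gamma_N \;\lesssim\; \nu_n^{\,1 - \alpha\beta/(2\beta + d_\theta + d_s)} \, \log(n)^{\,\kappa(\beta + d_\theta + d_s)/(2\beta + d_\theta + d_s)}.
\]
Since $\nu_n \to \infty$ polynomially while $\log(n)^{\kappa'}$ grows only polylogarithmically, the right-hand side is $o(1)$ precisely when the exponent on $\nu_n$ is negative, i.e., when $\alpha \beta /(2\beta + d_\theta + d_s) > 1$, which rearranges to the stated threshold $\alpha > (2\beta + d_\theta + d_s)/\beta$. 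Combining this with the first step yields the conclusion that $\wh Q_N(\cdot\mid S_n)$ concentrates at rate $\epsilon_n$.

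There is no substantial obstacle: the substantive content is carried by Theorem \ref{thm:main} and by the assumed minimax rate. The one subtle point is the \emph{strict} inequality on $\alpha$: it is precisely this slack that lets the polynomial factor $\nu_n^{1 - \alpha\beta/(2\beta+d_\theta+d_s)}$ dominate, and thereby absorb, the leftover $\log(n)^{\kappa(\beta + d_\theta + d_s)/(2\beta+d_\theta+d_s)}$ factor, so that the lower bound on $N$ advertised in the corollary needs only the single polylog factor $\log(n)^\kappa$ inherited from Assumption \ref{ass:sieve2} rather than a slightly larger exponent that a naive exponent-matching calculation would produce.
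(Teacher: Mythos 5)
Your proposal is correct and is essentially the argument the paper intends: the paper presents Corollary \ref{cor:one} as a direct consequence of Theorem \ref{thm:main} and Assumption \ref{ass:sieve2} ``and some algebra,'' and that algebra is exactly your substitution of $\gamma_N \asymp N^{-\beta/(2\beta+d_\theta+d_s)}\log(n)^\kappa$ into the requirement $\nu_n\gamma_N = o(1)$, with the strict inequality on $\alpha$ absorbing the residual polylogarithmic factor. The only implicit point worth keeping in mind is that this absorption relies on $\nu_n$ growing polynomially in $n$ (as in the paper's running case $\nu_n=\sqrt{n}$), and that Theorem \ref{thm:main} requires $\nu_n\epsilon_n\to\infty$, so ``rate $\epsilon_n$'' is to be read up to the arbitrary diverging factor $M_n$.
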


\subsection{The impact of dimension and computational efficiency}
Corollary \ref{cor:one} makes clear, for the first time, the impact of dimension on the behavior of NPE. In the absence of any modeling guidance on the choice of $\mathcal{Q}$ for NPE to concentrate at the standard rate we must train $\wh Q_N(\cdot\mid S_n)$ using at least $N\gtrsim \nu_n^\alpha\log(n)^\kappa$ model simulations.

To see how dimension impacts this requirement, consider that we have $d_\theta=5$ parameters and $d_s=10$ summaries, and that the summaries converge at the $\sqrt{n}$ rate (i.e., $\nu_n=\sqrt{n}$). When $\Pi(\cdot\mid S_n)\in\mathcal{C}^{\beta,L,0}(\mathbb{R}^{d_\theta+d_s})$, for $\widehat Q_N(\cdot\mid S_n)$ to concentrate at rate $\epsilon_n=\log(n)/\sqrt{n}$ our theoretical results suggest that we require at least $N\gtrsim \log(n) n^{\frac{2\beta+15}{2\beta}}$ simulated datasets to train the NPE. For example, if $n=1000$, and $\beta=10$,  this requires simulating at least $N\gtrsim 178,000$ simulated datasets.

The above requirement on $N$ is much larger than the number of simulations routinely employing in NPE, e.g., \cite{lueckmann2021benchmarking} report good results using $N\approx 10K$ datasets in NPE, albeit in experiments with fewer than $1000$ observations. That being said, the requirement that NPE must simulate $N\asymp 10^4$ datasets is orders of magnitude smaller than the requirement in ABC where it is not uncommon to simulate $10^6$ or $10^7$ datasets to obtain reliable posterior approximations. 

If one is willing to impose more onerous smoothness conditions, lower requirements on $N$ can be achieved. If we assume $\beta\ge d_s+d_\theta$, then NPE must only simulate $N\gtrsim \log(n)n^{3/2}$ datasets to achieve the parametric concentration rate. Considering this bound, it is then clear that our choices of $N$ used throughout the numerical experiments, $N\in\{n,\log(n)n,n^{3/2},n^2\}$, directly reflect the assumed smoothness of the approximation class, with larger choices of $N$ correlating to a lower degree of assumed smoothness.

Corollary \ref{cor:one} also allows us to compare the computational requirements of NPE to those of ABC. For the ABC posterior to concentrate at the $\sqrt{n}$-rate, as $n$ diverges, we require at least $N_{\text{ABC}}\gtrsim \log(n) n^{d_\theta/2}$ simulated draws (Corollary 1, \citealp{FMRR2016}). If the posterior we are approximating is smooth, so that we may take $\beta\ge(d_\theta+d_s)$, then for $n$ large NPE will be more efficient than ABC so long as $d_\theta\ge3$.\footnote{Plugging $\beta=(d_\theta+d_s)$ into the requirement in Corollary \ref{cor:one} yields the bound $N\gtrsim \log(n)n^{3/2}$.} Critically, this latter result is irrespective of the dimension of the summaries $d_s$, and we note that in practice obtaining samples from the ABC posterior when $d_s$ is large is computationally cumbersome.

Lastly, we would like to clarify that Corollary \ref{cor:one} can deliver a larger value than necessary for $N$ in certain cases. In particular, as we discuss in Section \ref{app:accQ} in the supplementary material, if we are willing to entertain that $\Pi(\cdot\mid S_n)\in\mathcal{Q}$, then to obtain the parametric rate of posterior concentration we only need to train NPE on $N\gtrsim \log(n)n$ datasets. We refer the interested reader to Section \ref{app:accQ} in the supplementary material for further details.

\subsection{Example: Illustrating the impact of dimensional on NPE}
In the following example, we use a relatively simple model to illustrate the impact of dimension (for parameters and summaries) on the accuracy of NPE (as predicted by Corollary \ref{cor:one}). The g-and-k distribution is a flexible univariate distribution that is popular as an illustrative example in SBI.
% The g-and-k model is governed by four parameters A (location), B (scale), g (skewness) and k (kurtosis).
The g-and-k distribution is defined via the quantile function,
\begin{equation}
     Q_{\text{gk}}(z(p); A, B, g, k) = A + B \left( 1 + c \tanh \left (\frac{g z(p)}{2} \right) \right) \times (1 + z(p)^2)^k z(p),
\end{equation}
where $z(p)$ is the $p$th quantile of the standard normal distribution.
Typically, we fix $c=0.8$ and conduct inference over the model parameters $\theta = (A, B, g, k)$.
It is trivial to simulate pseudo-data for a given $\theta$ by simulating $z(p) \sim \mathcal{N}(0, 1)$ and computing $Q_{\text{gk}}$.

Herein we take $n=1000$ and $n=5000$ and set the true parameter values as $A=3, B=1, g=2$ and $k=0.5$.
We set the summary statistics to be the octiles of the sample data as in \citet{drovandi+p11}.\footnote{Further details and results for the g-and-k model are presented in Appendix~\ref{sec:gnk_appendix}.}

We focus attention on the $g$ parameter, which corresponds to the skewness of the distribution, and is known to be the most challenging parameter to infer.
In Figure~\ref{fig:gnk_boxplot_all}, we illustrate the bias between the posterior mean and the true parameter value across 100 replicated datasets.
Similar to the motivating example, we again see more accurate results as we increase $n$, or for fixed $n$ as $N$ is increased. However, unlike the motivating example, we see that improvements are still obtainable, in terms of decreased bias and variance of the posterior mean, when choosing $N > n^{3/2}$.
\begin{figure}[!htb]
    \centering
    \begin{subfigure}[b]{0.45\textwidth}
        \centering
        \includegraphics[width=\textwidth]{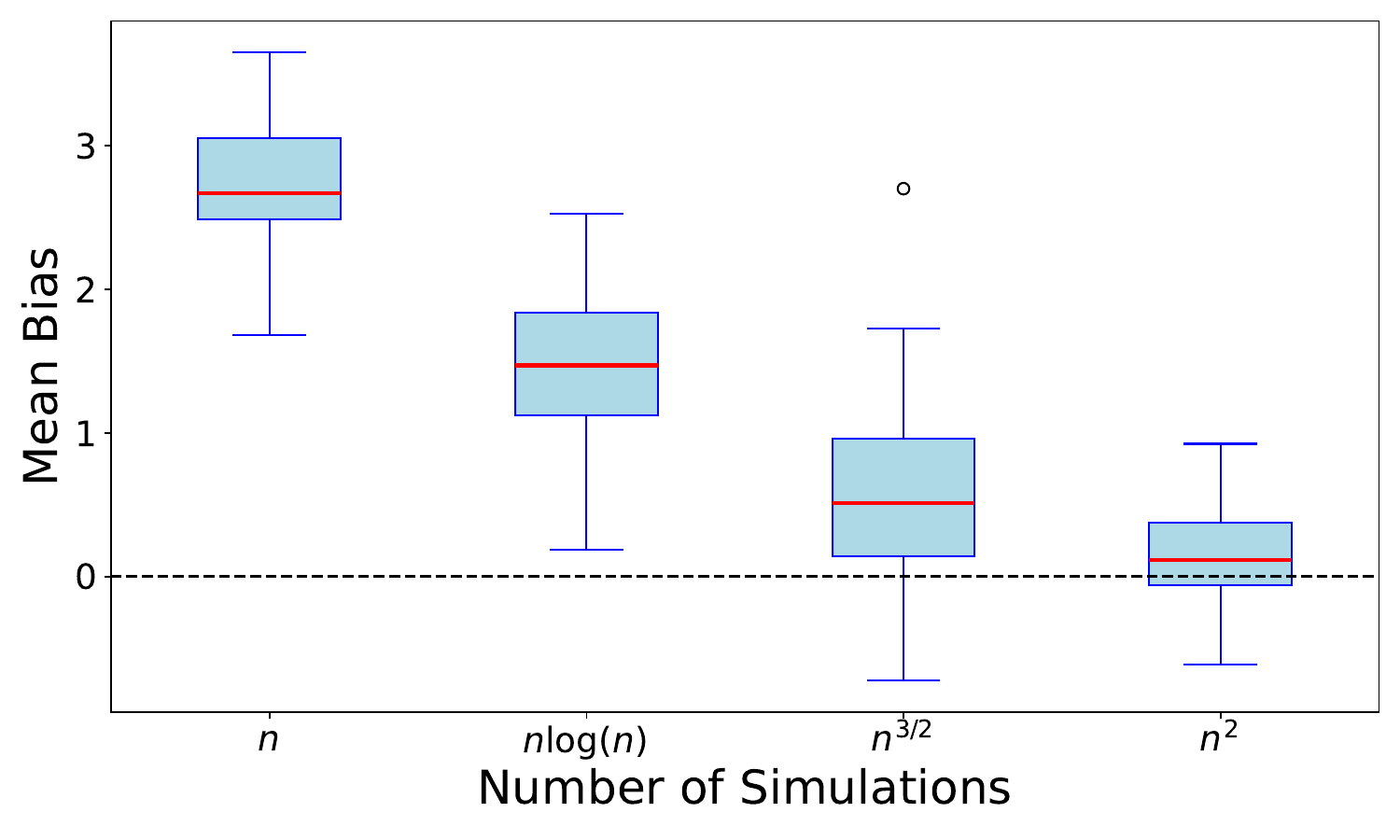}
        \caption{Boxplot of Mean Biases for $n=1000$.}
        \label{fig:boxplot_n_obs_100}
    \end{subfigure}
    \hfill
    \begin{subfigure}[b]{0.45\textwidth}
        \centering
        \includegraphics[width=\textwidth]{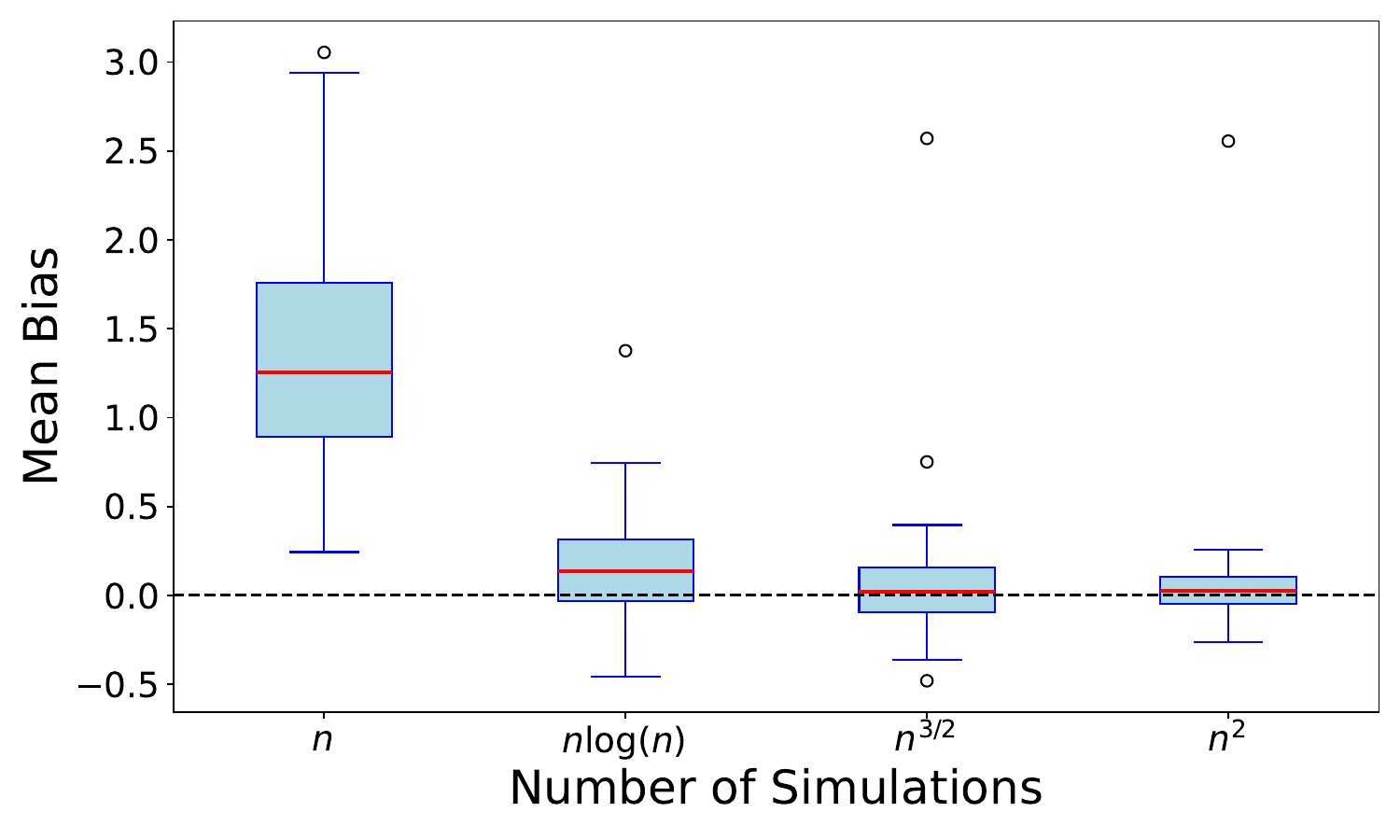}
        \caption{Boxplot of Mean Biases for $n=5000$.}
        \label{fig:boxplot_n_obs_500}
    \end{subfigure}
    
    \caption{Bias of the posterior mean for $g$ visualized through boxplots for $n=1000$ and $n=5000$ across varying $N$.}
    \label{fig:gnk_boxplot_all}
\end{figure}

A similar story is given in Table~\ref{tab:gnk_g_coverage}, which presents the marginal Monte Carlo coverage for the 80\%, 90\% and 95\% NPE credible intervals based on 100 replicated datasets. Unlike the  stereological example, which for $N\ge n^{3/2}$ produced reasonably accurate coverage results, the results under the g-and-k are more mixed: reliable coverage seems only to be delivered when we choose $N=n^2$. 

The reason why a higher value of $N$ is required to deliver accurate posteriors in this example has already been elucidated in Corollary \ref{cor:one}: all else equal, the larger the dimension of the summaries, $d_s$, and/or the parameters, $d_\theta$, the higher $N$ must be to obtain an accurate approximation. The motivating example in Section \ref{sec:motiv} featured three unknown parameters and four summaries, so that $d_s+d_\theta=7$, while in the g-and-k example, $d_s+d_\theta=11$. All else equal, this increase in dimension necessitates a higher number of simulations in order to accurately approximate the posterior, and so appreciable gains in accuracy are still achievable when we use $N=n^{3/2}$ simulations with NPE. 

\begin{table}[!htb]
\centering
\begin{tabular}{@{}l l l l l @{} }\toprule
 & $N=n$ &  $N=n\log(n)$ & $N=n^{3/2}$ & $N=n^2$ \\ 
\hline
$n=100$ &0.70/0.96/1.00 &0.35/1.00/1.00 &0.26/0.96/1.00 &0.58/0.98/1.00 \\ 
$n=500$ &0.58/0.96/1.00 &0.93/0.99/1.00 &0.98/0.99/1.00 &0.92/0.97/0.99 \\ 
$n=1000$ &0.89/1.00/1.00 &0.99/1.00/1.00 &0.96/0.99/1.00 &0.91/0.97/0.99 \\ 
$n=5000$ &1.00/1.00/1.00 &0.98/1.00/1.00 &0.91/0.98/0.99 &0.89/0.96/0.98 \\ 
\bottomrule
\end{tabular}
\caption{Monte Carlo coverage of 80\%, 90\%, and 95\% credible intervals for $g$ across different choices of $N$ and sample sizes $n$, based on 100 repeated runs.}

\label{tab:gnk_g_coverage}
\end{table}

Before concluding this section, we further illustrate the impact of dimension on the accuracy of NPE by comparing $\KL\{\Pi(\cdot\mid S_n)\|\widehat{Q}(\cdot\mid S_n)\}$ for two difference choices of summaries and across the four different choices of $N$ in the g-and-k model.
Specifics on the computation of the KLD and the exact partial posterior sampling are detailed in Appendix~\ref{sec:gnk_appendix}.
The first set of summaries is the octiles used previously, for which $d_s=7$, while we also consider inferences based on the hexadeciles, for which $d_s=15$ summaries.  Figure \ref{fig:gnk_kld_compare} shows that for our choices of $N$ and summaries, the KLD, relative to the respective partial posterior, based on the smaller number of summaries is always smaller than that based on the larger number of summaries. This clearly illustrates the behavior described in Corollary \ref{cor:one}: to obtain an accurate NPE approximation, all else equal, as the dimension of the summaries increases so must the number of simulations used to train NPE. 

\begin{figure}[htbp]
    \centering
    % Plot for n_obs = 100
    \begin{subfigure}[b]{0.48\textwidth}
        \centering
        \includegraphics[width=\textwidth]{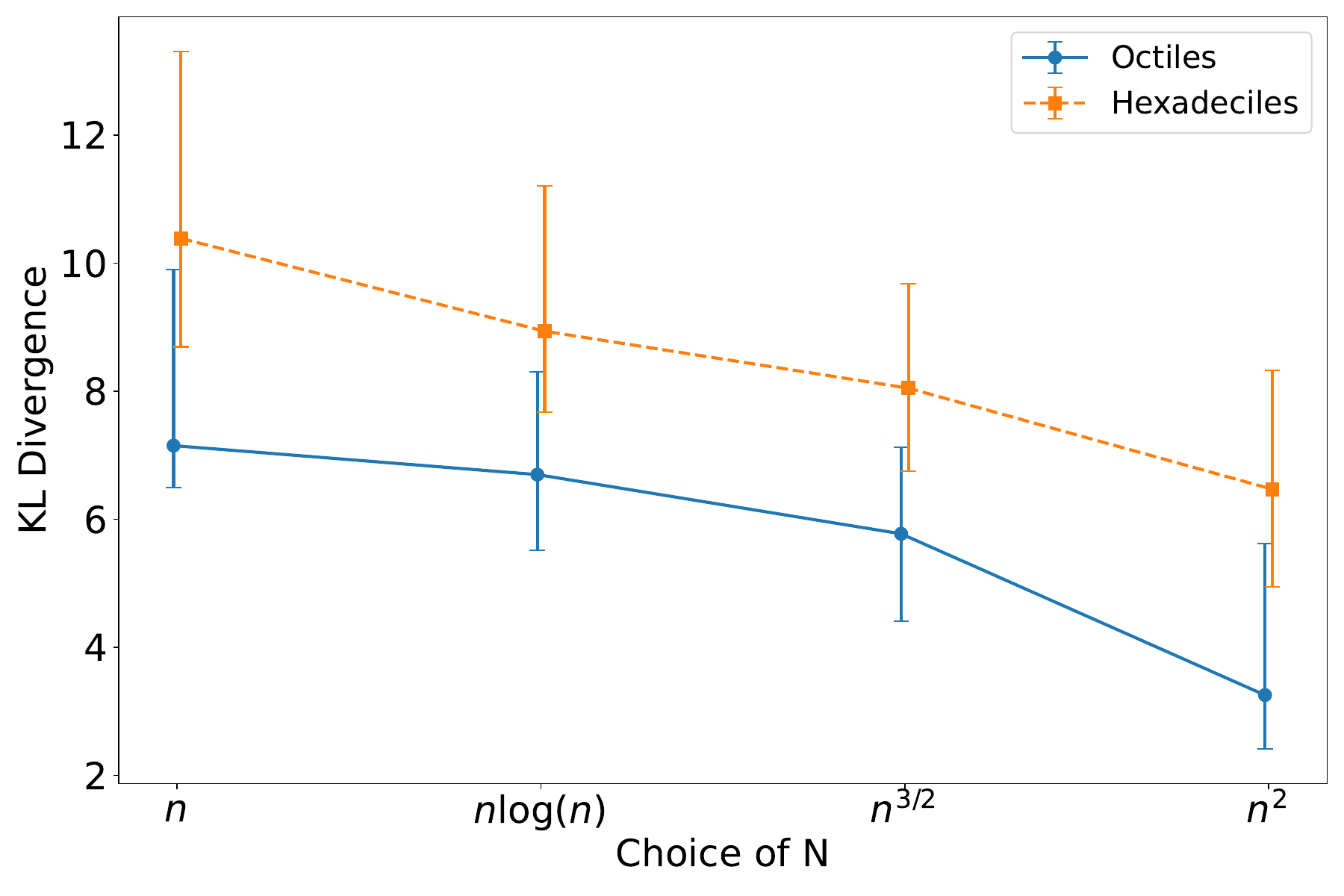}
        \caption{Number of observations $n = 100$}
        \label{fig:gnk_summaries_100}
    \end{subfigure}
    \hfill
    % Plot for n_obs = 1000
    \begin{subfigure}[b]{0.48\textwidth}
        \centering
        \includegraphics[width=\textwidth]{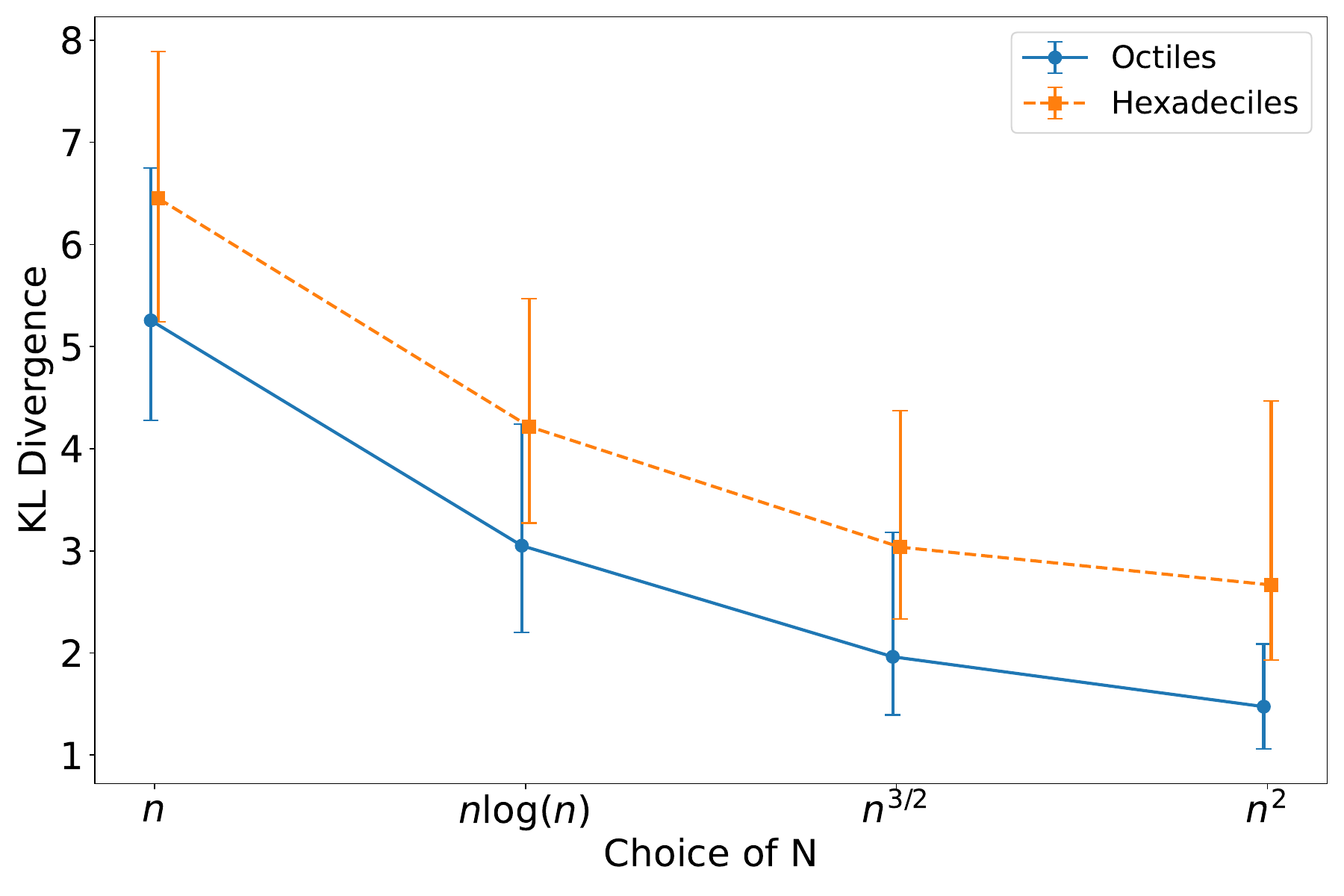}
        \caption{Number of observations $n = 1000$}
        \label{fig:gnk_summaries_1000}
    \end{subfigure}
    \caption{Comparison of $\KL\{\Pi(\cdot\mid S_n)\|\widehat{Q}_n(\cdot\mid S_n)\}$ across different choices of $N$, for summaries using octiles and hexadeciles.}
    \label{fig:gnk_kld_compare}
\end{figure}

\section{Neural Likelihood}\label{sec:snl}
In contrast to NPE, neural likelihood estimation (NLE) uses an estimated density within MCMC to produce posterior samples. As such, the theoretical results obtained for NPE do not directly transfer to NLE. We formally define the neural likelihood (NL) and NLE below.
\begin{defn}[NLE]\label{def:snl}
Let $\mathcal{Q}$ be a family of conditional density estimators for $S\mid\theta$. At termination of the algorithm, the estimator $ \wh q_N(S\mid\theta)$ satisfies 
\begin{equation}\label{eq:snle}
-\frac{1}{N}\sum_{i=1}^{n}K(\theta^i)\log \wh q_N(S^i\mid\theta^i)	\le \inf_{Q\in\mathcal{Q}} -\frac{1}{N}\sum_{i=1}^{n}K(\theta^i)\log q(S^i\mid\theta^i),
\end{equation}for some known function $K(\theta)$. We then call $\wh q_N(S_n\mid\theta)$ the neural likelihood (NL) and define the NLE posterior as $\wh \pi_N(\theta\mid S_n)\propto{\wh q_N(S_n\mid\theta)p(\theta)}$.
\end{defn}

As $N\rightarrow\infty$, NLE obtains an estimator of the intractable $g_n(S\mid\theta)$ as the minimizer of the conditional KL divergence (up to a constant that does not depend on $q$) 
$$
\KL_{\theta\sim p(\theta)}\{g_n(\cdot\mid\theta)\|q(\cdot\mid\theta)\}=\int_\Theta \int_{\mathcal{S}}p(\theta)g_n(s\mid\theta)\log\frac{p(\theta)g_n(s\mid\theta)}{q(s\mid\theta)p(\theta)}\dt\theta\dt s.
$$Consequently, to formalize the accuracy of NLE, we must maintain an assumption on the accuracy with which $\wh q_N(S\mid\theta)$ approximates $g_n(S\mid\theta)$ at $S=S_n$.
\begin{assumption}\label{ass:sieve3}
	For some $\epsilon_n,\gamma_N>0$,  $\gamma_N=o(1)$, $\epsilon_n=o(1)$, and $\epsilon_n\gtrsim \gamma_N$, 	with probability at least $1-\epsilon_n$,
$
\KL_{\theta\sim p(\theta)}\{g_n(S_n\mid\cdot)||\wh q_{N}(S_n\mid\cdot)\}\le \gamma_{N}^2.
$
\end{assumption}
\begin{remark}
	Assumption \ref{ass:sieve3}  bounds the accuracy of the corresponding NLE and is the NLE equivalent to Assumption \ref{ass:sieve} for NPE. The requirement in Assumption \ref{ass:sieve3} is precisely the quantity that NLE seeks to minimize when $N\rightarrow\infty$, and as such is a natural requirement to maintain.\footnote{Similar to the discussion after Assumption \ref{ass:sieve}, the requirement in KLD maintained in Assumption \ref{ass:sieve3} can be replaced with a version based on Hellinger distance without altering the conclusions of the stated results.} Furthermore, depending on the class of approximations used, the rate of convergence for $\wh q_N(S_n\mid\theta)$ can be the same as that in Assumption \ref{ass:sieve2}, or as fast as $\gamma_N\asymp \log(N)/N^{1/2}$ if we are willing to assume that $\Pi(\cdot\mid S_n)\in\mathcal{Q}$ (see, e.g., Theorem 2 in \citealp{forbes2024bayesian} for an example of this latter assumption). We refer the interested reader to  Section \ref{sec:specific} for further details.\footnote{ In particular, the discussion in Section \ref{sec:specific} is also applicable when phrased in terms of the conditional density estimator of $q(S\mid\theta)$, and so the rates described in that section remain applicable to the case of NLE.}
\end{remark}

Under Assumption \ref{ass:sieve3}, we obtain the following result on the concentration of the NLE.

\begin{theorem}\label{thm:two}
Assumptions \ref{ass:obs_sum}-\ref{ass:prior_concentration} are satisfied. If Assumption \ref{ass:sieve3} is also satisfied, then with probability converging to one the NLE, $\wh\pi_N(\theta\mid S_n)\propto p(\theta) \wh q_{N}(S_n\mid\theta)$  concentrates onto $\theta_0$ at rate $\epsilon_n=\log(n)/\nu_n$. If, in addition, Assumption \ref{ass:sieve2} is also satisfied for $g_n(S_n\mid\theta)$ and $q(S_n\mid\theta)$, then the result is valid so long as we train NLE on at least $N\gtrsim \nu_n^{(2\beta+d_\theta+d_s)/\beta}\log(n)^\kappa$ datasets.
\end{theorem}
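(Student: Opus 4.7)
The plan is to deduce concentration of the NLE posterior from that of the infeasible partial posterior $\Pi(\cdot\mid S_n)\propto p(\theta)g_n(S_n\mid\theta)$, by treating $\wh{\Pi}_N(\cdot\mid S_n)$ as a change-of-measure perturbation of $\Pi(\cdot\mid S_n)$ whose discrepancy is controlled through Assumption \ref{ass:sieve3}. Throughout, set $\epsilon_n:=\log(n)/\nu_n$ and $\Theta_n:=\{\theta\in\Theta:\dt(\theta,\theta_0)\le M\epsilon_n\}$ for large $M$. First I would establish $\Pi(\Theta_n^c\mid S_n)=o_{p}(1)$ under $P_0^{(n)}$ using only Assumptions \ref{ass:obs_sum}--\ref{ass:prior_concentration}. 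The argument mirrors Proposition 1 of \cite{FMRR2016}: the polynomial tail bound on the simulated summaries in Assumption \ref{ass:sim_sum} combined with the prior tail-mass bound in Assumption \ref{ass:prior_concentration} controls $\int_{\Theta_n^c}p(\theta)g_n(S_n\mid\theta)\dt\theta$, while the lower prior-mass bound in Assumption \ref{ass:prior_concentration}, compatibility (Assumption \ref{ass:compat}), and the concentration of $S_n$ from Assumption \ref{ass:obs_sum} yield a matching lower bound on the marginal likelihood $\int p(\theta)g_n(S_n\mid\theta)\dt\theta$, so the ratio vanishes.

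Next I would translate Assumption \ref{ass:sieve3} into a total variation bound between the two posteriors. Since $\Pi(\cdot\mid S_n)$ and $\wh{\Pi}_N(\cdot\mid S_n)$ share the same prior but differ only in their likelihood, the standard two-posteriors TV inequality gives
\begin{equation*}
\dt_{\mathrm{TV}}\bigl(\wh{\Pi}_N(\cdot\mid S_n),\Pi(\cdot\mid S_n)\bigr)\le 2\,\frac{\int p(\theta)\,\bigl|\wh{q}_N(S_n\mid\theta)-g_n(S_n\mid\theta)\bigr|\,\dt\theta}{\int p(\theta)\,g_n(S_n\mid\theta)\,\dt\theta}.
\end{equation*}
Applying Pinsker's inequality (Lemma \ref{lem:pinsk}) to the prior-integrated KL-type bound in Assumption \ref{ass:sieve3} bounds the numerator by a constant multiple of $\gamma_N$ times the marginal likelihood; combined with the lower bound on the marginal likelihood from Step 1, this yields $\dt_{\mathrm{TV}}\bigl(\wh{\Pi}_N,\Pi\bigr)\lesssim \gamma_N$ on an event of $P_0^{(n)}$-probability at least $1-\epsilon_n$. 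The triangle inequality then delivers $\wh{\Pi}_N(\Theta_n^c\mid S_n)\le \Pi(\Theta_n^c\mid S_n)+\dt_{\mathrm{TV}}=o_p(1)$, which since $\gamma_N\lesssim\epsilon_n$ (by Assumption \ref{ass:sieve3}) proves the first claim at rate $\epsilon_n$.

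For the second claim, Assumption \ref{ass:sieve2} applied to the likelihood estimator fixes $\gamma_N\asymp N^{-\beta/(2\beta+d_\theta+d_s)}\log(n)^\kappa$, so enforcing $\gamma_N\lesssim \epsilon_n=\log(n)/\nu_n$ and solving for $N$ yields the stated requirement $N\gtrsim \nu_n^{(2\beta+d_\theta+d_s)/\beta}\log(n)^{\kappa}$, up to an adjustment of the logarithmic exponent. The main obstacle is Step 2: converting the prior-averaged KL-type bound in Assumption \ref{ass:sieve3} into an $L^1(p)$ bound on the pointwise likelihood discrepancy $|\wh{q}_N(S_n\mid\theta)-g_n(S_n\mid\theta)|$ at the sharp rate $\gamma_N$, while simultaneously maintaining the correct normalization by the marginal likelihood in the denominator. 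The subtlety is that both the numerator and the denominator in the TV inequality are $n$- and $N$-dependent quantities operating on different scales (the denominator shrinking through the prior-concentration factor from Assumption \ref{ass:prior_concentration} and the peaked shape of $g_n(S_n\mid\cdot)$), so the rates in Assumptions \ref{ass:prior_concentration}, \ref{ass:compat}, and \ref{ass:sieve3} must interlock so that the ratio remains of order $\gamma_N$ uniformly on the high-probability event; tracking these scales under the joint randomness of $S_n$ and the training data is where the bulk of the technical work resides.
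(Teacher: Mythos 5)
Your overall strategy is workable and your second step's conclusion (solving $\gamma_N\asymp N^{-\beta/(2\beta+d_\theta+d_s)}\log(n)^\kappa\lesssim\epsilon_n$ for $N$) matches the paper exactly, but your route to the concentration claim is genuinely different from the paper's. You first prove concentration of the exact partial posterior $\Pi(\cdot\mid S_n)$ and then transfer it to $\wh\pi_N(\cdot\mid S_n)$ via the two-posteriors total-variation inequality, which forces you to control the ratio $\int_\Theta p\,|\wh q_N-g_n|\,\dt\theta\big/\int_\Theta p\,g_n\,\dt\theta$ over all of $\Theta$. The paper instead bounds $\wh\Pi_N(\Theta_n^c\mid S_n)$ directly by $\int_{\Theta_n^c}\wh q_N\,p\,\dt\theta\big/\int_{\Theta_n}\wh q_N\,p\,\dt\theta$ (mirroring Lemma \ref{lem:contmap}), and the key move is that the numerator is never compared to $g_n$ at all: taking $\E_0^{(n)}$ and using Fubini together with $\int_{\mathcal S}\wh q_N(s\mid\theta)\,\dt s=1$ collapses the $\Theta_n^c$ term to $P(\Theta_n^c)/\{c_{\epsilon_n}(1-\epsilon_n)P(\Theta_n)\}$, which Assumption \ref{ass:prior_concentration} controls directly; Assumption \ref{ass:sieve3} enters only once, in Lemma \ref{lem:denom_ext}, to lower-bound the denominator restricted to $\Theta_n$. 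That cancellation is precisely what removes the ``interlocking scales'' obstacle you flag at the end of your proposal.

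One claim in your Step 2 is wrong as stated: Pinsker applied to Assumption \ref{ass:sieve3} yields the \emph{absolute} bound $\int_\Theta p(\theta)\,|\wh q_N(S_n\mid\theta)-g_n(S_n\mid\theta)|\,\dt\theta\le\gamma_N$, not ``a constant multiple of $\gamma_N$ times the marginal likelihood.'' Your TV bound is therefore $\lesssim\gamma_N/m(S_n)$ with $m(S_n)=\int_\Theta p\,g_n(S_n\mid\cdot)\,\dt\theta$, and you must separately show $m(S_n)$ does not shrink faster than $\gamma_N/\epsilon_n$. Compatibility and Assumption \ref{ass:prior_concentration} give $m(S_n)\gtrsim c_{\epsilon_n}\epsilon_n^{d}f_0^{(n)}(S_n)$ with high probability, and since $f_0^{(n)}(S_n)$ scales like $\nu_n^{d_s}$ under Assumption \ref{ass:obs_sum}, the ratio can be controlled in the typical regime; so your route can be completed, but it requires exactly the extra bookkeeping you defer, whereas the paper's decomposition sidesteps it entirely.
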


\begin{remark} 
Both Bayesian synthetic likelihood (BSL) and NLE use an approximate likelihood estimator within MCMC to produce samples from a posterior approximation: BSL uses a Gaussian parametric approximation that is updated at each MCMC iteration; NLE learns a flexible approximation that is fixed across MCMC iterations. To see how the two methods compare in terms of computational cost, note that BSL simulates $m$ independent sets of data at each MCMC evaluation, and runs $N_{\text{MCMC}}$ evaluations, which requires $N_{\text{BSL}}=m \times N_{\text{MCMC}}$ total model simulations. If we take $m=\sqrt{n}$ datasets, when running BSL we then generate $\sqrt{n}\times N_{\text{BSL}}$ datasets (of size $n$) across the entire MCMC chain. The computational cost of BSL, in terms of generated datasets,  will then be more costly than NLE whenever
$$
N_{\text{BSL}}=\sqrt{n}N_{\text{MCMC}}>n^{\frac{2\beta+d_s+d_\theta}{2\beta}}\quad\implies N_{\text{MCMC}}>n^{\frac{\beta+d_s+d_\theta}{2\beta}}.
$$
If $n=1000$, $\beta=10$, $d_\theta=5$ and $d_s=10$, BSL will be more costly than NPE, in terms of model simulations, when we must run more than $6,000$ MCMC evaluations, which will almost always be the case in practice. This result implies that in many settings, NLE will be much more computationally efficient than BSL. 
\end{remark}

Similar to NPE, if $\pi(\theta\mid S_n)$ is asymptotically Gaussian, and if $N$ is chosen large enough, then $\wh \pi(\theta\mid S_n)$ will also be asymptotically Gaussian. Recall that $\theta_n=\arg\min_{\theta\in\Theta}-\log g_n(S_n\mid\theta)$, $t=\nu_n(\theta-\theta_n)$, and consider the posterior
\begin{flalign*}
\pi(t\mid S_n)=\frac{1}{\nu_n^{d_\theta}}\pi(\theta_n+t/\nu_n\mid S_n)&=\frac{g_n(S_n\mid\theta_n+t/\nu_n)p(\theta_n+t/\nu_n)}{\int g_n(S_n\mid\theta_n+t/\nu_n)p(\theta_n+t/\nu_n)\dt t}.%\\&=\frac{g_n(S_n\mid\theta_n+t/\nu_n)\pi(\theta_n+t/\nu_n)}{Z_{n,t}}.
\end{flalign*}

\begin{theorem}\label{thm:bvm-snl}If Assumptions \ref{ass:obs_sum}-\ref{ass:prior_concentration}, and \ref{ass:sieve3} are satisfied, and if 
$$
\int \left|g_n(S_n\mid\theta_n+t/\sqrt{n})\frac{p(\theta_n+t/\sqrt{n})}{p(\theta_n)}
-\exp\left(-t^\top\Sigma t/2\right)\right|\dt t=o_p(1),
$$ with $p(\theta_n)>0$ for all $n$,	
then 
$
\int |\wh \pi_N(t\mid S_n)-N\{t;0,\Sigma^{-1}\}|\dt t=o_p(1).
$
\end{theorem}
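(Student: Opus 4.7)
The plan is to insert the exact rescaled partial posterior $\pi(t\mid S_n)$ as a pivot and apply the triangle inequality,
\[
\int |\wh\pi_N(t\mid S_n)-N\{t;0,\Sigma^{-1}\}|\dt t \;\le\; \mathrm{(I)}+\mathrm{(II)},
\]
with $\mathrm{(I)}=\int |\wh\pi_N(t\mid S_n)-\pi(t\mid S_n)|\dt t$ and $\mathrm{(II)}=\int |\pi(t\mid S_n)-N\{t;0,\Sigma^{-1}\}|\dt t$. Term (II) will be dispatched directly from the hypothesis on the local shape of the exact likelihood, while term (I) will be handled using Pinsker's inequality applied to Assumption \ref{ass:sieve3}.

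For (II), write the unnormalized rescaled density as $f_n(t)=g_n(S_n\mid\theta_n+t/\sqrt n)\,p(\theta_n+t/\sqrt n)/p(\theta_n)$, so that $\pi(t\mid S_n)=f_n(t)/Z_n$ with $Z_n=\int f_n(s)\dt s$. The hypothesis asserts $\int|f_n(t)-\exp(-t^\top\Sigma t/2)|\dt t=o_p(1)$, which by the reverse triangle inequality forces $Z_n\to(2\pi)^{d_\theta/2}|\Sigma|^{-1/2}$ in probability. The elementary identity $|f/Z-g/Z'|\le |f-g|/Z + g\,|Z-Z'|/(ZZ')$ then promotes the unnormalized $L^1$ convergence into total variation convergence of the corresponding normalized densities, giving $\mathrm{(II)}=o_p(1)$.

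For (I), I would apply Pinsker's inequality from Lemma \ref{lem:pinsk}. Under the natural reading of Assumption \ref{ass:sieve3} as a bound on the Kullback-Leibler divergence between the exact posterior $\pi(\cdot\mid S_n)$ (induced by prior $p$ and likelihood $g_n(S_n\mid\cdot)$) and the NLE posterior $\wh\pi_N(\cdot\mid S_n)$ (induced by $p$ and $\wh q_N(S_n\mid\cdot)$), Pinsker yields
\[
\dt_{\mathrm{TV}}\bigl(\pi(\cdot\mid S_n),\wh\pi_N(\cdot\mid S_n)\bigr)\;\le\; \gamma_N/\sqrt{2} \;=\; o(1),
\]
with probability at least $1-\epsilon_n$. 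Because total variation distance is invariant under the bijective affine change of variables $\theta\mapsto t=\sqrt n(\theta-\theta_n)$, this is precisely $\mathrm{(I)}=o_p(1)$, and combining with (II) via the triangle inequality delivers the claim.

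The main obstacle is the precise reading of Assumption \ref{ass:sieve3}. If one interprets the notation literally in the manner of Definition \ref{def:snl}, the bounded quantity is the likelihood-weighted integral $\int p(\theta)g_n(S_n\mid\theta)\log[g_n(S_n\mid\theta)/\wh q_N(S_n\mid\theta)]\dt\theta$, which differs from the posterior Kullback-Leibler divergence used above by a factor of $p(S_n)=\int p(\theta)g_n(S_n\mid\theta)\dt\theta$ and by an additive log-ratio of normalizing constants $\log[\wh p_N(S_n)/p(S_n)]$. I would control both pieces using compatibility (Assumption \ref{ass:compat}) and the concentration result of Theorem \ref{thm:two}: compatibility keeps $p(S_n)$ bounded away from zero in probability, while concentration of both $\pi(\cdot\mid S_n)$ and $\wh\pi_N(\cdot\mid S_n)$ onto $\theta_0$, together with Jensen's inequality applied to $\log[\wh p_N(S_n)/p(S_n)]$, reduces this log-ratio to an $O_p(\gamma_N)$ quantity. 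With these refinements the bound on (I) remains $o_p(1)$ and the conclusion of Theorem \ref{thm:bvm-snl} follows.
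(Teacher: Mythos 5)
Your proposal follows essentially the same route as the paper's proof: the same triangle inequality with the exact rescaled posterior $\pi(t\mid S_n)$ as pivot, the same treatment of term (II) by showing the hypothesis forces $Z_{n,t}\to(2\pi)^{d_\theta/2}p(\theta_n)|\Sigma|^{-1/2}$ and promoting unnormalized $L^1$ convergence to total-variation convergence, and the same use of Pinsker together with Assumption \ref{ass:sieve3} for term (I). The one place you diverge is in handling the normalization mismatch you correctly flag at the end: rather than reformulating Assumption \ref{ass:sieve3} as a posterior-level Kullback--Leibler bound and controlling $p(S_n)$ and $\log[\wh p_N(S_n)/p(S_n)]$ via compatibility, concentration, and Jensen, the paper stays with the unnormalized objects, decomposes $\wh\pi_N-\pi$ algebraically into a $(\wh q_N-g_n)p/Z_{N,t}$ piece plus a $g_n\,p\,(1/Z_{n,t}-1/Z_{N,t})$ piece, and bounds both through the single inequality $\int_\Theta|\wh q_N(S_n\mid\theta)-g_n(S_n\mid\theta)|p(\theta)\,\dt\theta\le\sqrt{\KL\{g_n(S_n\mid\cdot)p\,\|\,\wh q_N(S_n\mid\cdot)p\}}\le\gamma_N$, which simultaneously gives $|Z_{n,t}-Z_{N,t}|\lesssim\epsilon_n$. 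That route avoids your log-ratio detour entirely and is more direct; your workaround would also go through, but the Jensen step on $\log[\wh p_N(S_n)/p(S_n)]$ gives the bound in the wrong direction for an upper bound on the posterior KL, so you would in any case fall back on the same $L^1$ bound $|\wh p_N(S_n)-p(S_n)|\le\int|\wh q_N-g_n|p\,\dt\theta\lesssim\gamma_N$ that the paper uses.
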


	Theorems \ref{thm:bvm} and \ref{thm:bvm-snl} provide the first rigorous basis for the literature's existing preference for NPE over NLE. Both approaches require similar regularity conditions and requirements on $N$, and both ultimately deliver posterior approximations with the same level of theoretical accuracy. However, NLE requires additional MCMC steps to produce a posterior approximation, whereas NPE produces a posterior approximation directly and does not require any additional sampling.  Therefore, based on these considerations of computation and theoretical accuracy, and when the smoothness assumptions that underpin Assumptions \ref{ass:sieve} and \ref{ass:sieve3} are satisfied, NPE should be preferable to NLE in most cases where practitioners employ SBI methods.

\section{Discussion}\label{sec:discuss}

We have shown that there is a clear link between the number of simulations used in NPE and the resulting accuracy of the posterior. While there are clear choices for too small a number of simulations one should use, we have shown that for the posterior to deliver reliable inferences the number of simulations must depend on the dimension of the problem - the number of parameters and summaries used in the analysis - as well as the flexibility of the chosen posterior approximation class. We have also shown that after choosing an appropriate number of simulations for the problem at hand, there are little to no benefits from using a larger number of simulations. 

An essential insight from our study is the critical importance of the compatibility assumption in ensuring the reliability of the NPE posterior. When this assumption is violated, it does not appear feasible to deliver theoretical guarantees on the accuracy or trustworthiness of the NPE approximation, even as the number of observations ($n$) and simulations ($N$) increase. This observation aligns with existing empirical findings that demonstrate the poor performance of NPE under model misspecification \citep{cannon2022investigating, schmitt2023detecting}. While some robust methods have been proposed to mitigate these issues \citep{huang2024learning, kelly2024misspecification, ward2022robust}, there remains a lack of rigorous theoretical treatment of the impact of incompatibility and model misspecification on NPE. In future work, we plan to explore the theoretical behavior of the NPE when the compatibility assumption is violated, and the ability of the proposed robust methods to mitigating the poor behavior of the NPE in these settings.

While our current analysis focuses on NPE methods, we conjecture that these results may serve as an upper bound for SNPE methods. Since SNPE iteratively refines the posterior estimate through multiple rounds of simulation, it is plausible that the required number of simulations $N$ can be reduced in subsequent rounds without sacrificing accuracy. This iterative refinement could potentially lead to more efficient inference compared to NPE, especially in complex models. However, formalizing this relationship and determining optimal strategies for adjusting $N$ across rounds necessitates further investigation, which we leave for future work.

{\footnotesize
\spacingset{1.0}
    \bibliographystyle{chicago}
	\bibliography{refs_mispecCDF}
}

\appendix

\section{Accurate $\mathcal{Q}$ Implies Faster Convergence Rates}\label{app:accQ} 
%\end{remark}

If one is willing to assume that $\pi(\theta\mid S_n)\in\mathcal{Q}$ for all $n$ large enough, then even faster rates for $\gamma_N$ than those obtained in Corollary \ref{cor:one} are feasible, which will entail that NPE \textit{is even more computational efficient than ABC}. However, to work out such cases, we will need an explicit guarantee on the rate at which consistent density estimation is feasible in a chosen class $\mathcal{Q}$. Since these rates are not yet known for many classes of neural-based conditional density approximations, we restrict our analysis in this section to simpler choices but note that similar results would follow once such rates become available. 

Consider that $\pi(\theta\mid S_n)\in\mathcal{M}_k$, where $\mathcal{M}_k$ is the class of $k$-component multivariate Gaussian location-scale mixtures:  for $\vartheta=(\theta,S)$,
$$
\mathcal{M}_k=\left\{\Gamma_{k}=\sum_{j=1}^{k}\pi_j\delta_{\phi_j},\;\phi_j\in\Phi_{j},\;\Phi=\cup_{j=1}^{k}\Phi_j:q(\vartheta)=\sum_{j=1}^{k}\pi_j\mathcal{N}(\vartheta;\mu_j,\Sigma_j)\right\},
$$	 where $\Sigma_1,\dots,\Sigma_d$ satisfy 
$$
\sigma_{\text{min}}^2\le \min_{j=1,\dots,d}\lambda_{\text{min}}(\Sigma_j)\le  \max_{j=1,\dots,d}\lambda_{\text{max}}(\Sigma_j)\le \sigma^2_{\text{max}}, 
$$and where $\lambda_{\text{min}}(\Sigma)$ is the smallest eigenvalue of $\Sigma$. The results of Theorem 2.5 in \cite{saha2020nonparametric}, and Theorem 1.2 in \cite{doss2023optimal}, imply that $\gamma_{N}^2\asymp (k/N)\log(N)^{d+1}C_d$, where $C_d$ is a constant that depends on $d$. While these results require that the value of $k$ is known, if instead only an upper bound on $k$, say $\bar{k}$, is known, the results of \cite{ghosal2001entropies} imply that the rate of convergence remains similar, namely $\gamma_N^2\asymp \log(N)^{2\kappa}/N^{}$, for some $\kappa>0$.\footnote{Technically, \cite{ghosal2001entropies} obtain this rate for $\gamma_N$ under the Hellinger metric, however, as we have argued in Remark \ref{rem:other_ass}, the results of Theorem \ref{thm:main} remain applicable when we have available rates for the accuracy of $\wh Q_N(\cdot\mid S_n)$ in either the total variation metric or the  Hellinger divergence. See Corollary \ref{corr:main} in Appendix \ref{app:results} for further details.} Therefore,  if $\pi(\theta\mid S_n)\in \mathcal{M}_k$ for some $k$ large, and we consider over-fitted mixtures, then the parametric rate of concentration - for the approximation itself - will be achieved so long as we take $N_{{}}\gtrsim \log(n) n$ in NPE. Moreover,  this rate requirement is true regardless the value of $d_\theta$ and $d_s$. 

A secondary class of useful posterior approximations that come with known theoretical guarantees on the possible rate of convergence of the approximation, i.e., $\gamma_N$, is the class of Gaussian mixtures of experts (\citealp{jacobs1991adaptive}). This class can be viewed as conditional Gaussian location-scale mixtures, $\mathcal{M}_k$, but where the component means and variances are covariate dependent: 
$$
\mathcal{GM}_k=\left\{\Gamma_{k}=\sum_{j=1}^{k}\pi_j\delta_{\phi_j},\;\phi_j\in\Phi_{j},\;\Phi=\cup_{j=1}^{k}\Phi_j:q(\theta\mid  S)=\sum_{j=1}^{k}\pi_j\mathcal{N}[\theta;\mu(S,\phi_{1j}),\Sigma(S,\phi_{2j})]\right\}
$$where 
$\mu(S,\phi_{1j})$ and $\Sigma(S,\phi_{2j})$ are known functions of the unknown parameters $\phi_{1j}$, $\phi_{2j}$, which are usually taken as regression functions. 

As shown in Proposition 3 of \cite{ho2022convergence}, under weak smoothness conditions on $\mu(\cdot)$ and $\Sigma(\cdot)$, when $\wh q_N(\theta \mid S)\in\mathcal{GM}_{k}$ and $\pi(\theta\mid S)\in\mathcal{GM}_{\bar{k}}$ with $k\ge \bar{k}$, with probability converging to one 
$\dt_{\mathrm{H}}\{\wh q_N(\theta\mid S),\pi(\theta\mid S)\}\le \gamma_N=\log(N)^{1/2}/N^{1/2}$. Hence, if we are willing to assume that $\pi(\theta\mid S)\in\mathcal{GM}_{\bar{k}}$, the NPE will concentrate at the parametric rate so long as we take $N\gtrsim \log(n)n$, regardless of $d_s$ or $d_\theta$.

The above discussion clarifies that the rate $\gamma_{N}$ can decay very fast to zero if we assume that $\pi(\theta\mid S_n)\in\mathcal{Q}$ for a particular class $\mathcal{Q}$. In such cases, NPE will require orders of magnitude fewer model simulations than ABC, and will deliver the same theoretical guarantees. For example, if $\pi(\theta\mid S_n)\in\mathcal{Q}$  and $d_\theta=3$, ABC must generate a reference table of size $N_{\text{ABC}}\gtrsim n^{3/2}$, while NPE must only generate a table of size $N_{}\gtrsim \log(n)n$. 

We do remark, however, that if one is only willing to assume a general form of regularity for the posterior, namely that $\pi(\theta\mid S_n)\in \mathcal{C}^{\beta,L,0}(\mathbb{R}^d)$, then the slower rate of $\gamma_{N}\asymp N^{-\beta/(2\beta+(d_\theta+d_s))}\log(n)$ should be assumed. The latter would be important in situations where the dependence of the posterior cannot be easily speculated on, such as, e.g., situations where the parameters have differing supports, non-elliptical dependence, or when some of the summary statistics have discrete support. 

\section{Further Details and Results for the Stereological Example}\label{stereo}
We provide additional details here on the stereological example described in Section~\ref{sec:motiv}.\footnote{Code to reproduce the results for all examples is available at \url{https://github.com/RyanJafefKelly/npe_convergence}.} Direct observation of the three-dimensional inclusions within steel is not possible; instead, only their two-dimensional cross-sections can be observed in practice. Consequently, a mathematical model is required to relate the observed cross-sectional diameter, denoted by $S$, to the true latent diameter of the inclusion, denoted by $V$.

We follow the mathematical model described in \citet{bortot2007inference}. Since the largest inclusions are primarily responsible for fatigue in the material, modeling focuses on inclusions larger than a threshold $\nu_0$ (which we set  $\nu_0=5$). These large inclusions are modeled using a generalized Pareto distribution:

\begin{equation*} P(V \leq \nu \mid V > \nu_0) = 1 - \left\{ 1 + \frac{\xi(\nu - \nu_0)}{\sigma} \right\}_{+}^{-1/\xi}, \end{equation*}

where $\sigma > 0$ is the scale parameter, $\xi$ is the shape parameter, and $\{\cdot\}_{+}$ denotes the positive part function, ensuring the distribution is defined for $\nu \geq \nu_0$. We assume that inclusions occur independently.

Early work on this problem assumed that inclusions are spherical in shape. However, following the approach of \citet{bortot2007inference}, we model the inclusions as ellipsoids to more accurately reflect their geometric properties. In this model, each inclusion is characterized by its three principal diameters $(V_1, V_2, V_3)$, where we can assume without loss of generality that $V_3$ is the largest. The smaller diameters are proportional to $V_3$, defined as $V_1 = U_1 V_3$ and $V_2 = U_2 V_3$, where $U_1$ and $U_2$ are independent random variables uniformly distributed on $(0, 1)$. When an ellipsoid is intersected by a random two-dimensional plane (the cross-section), the resulting shape is an ellipse. The observed quantity $S$ is then the largest principal diameter of this elliptical cross-section.

We use the same four-dimensional vector of summaries as in \citet{an2020robust} to conduct inference on the unknown parameters. These summaries are the number of inclusions, and the logarithms of the mean, minimum, and maximum of the two-dimensional cross-sectional diameters of the inclusions.
We specify uniform prior distributions for the model parameters, $\lambda \sim \mathcal{U}(30, 200)$, $\sigma \sim \mathcal{U}(0, 15)$ and $\xi \sim \mathcal{U}(-3, 3)$.

Additional results for the parameters $\sigma$ and $\xi$ are provided in Tables \ref{tab:stereo_coverage_sigma} and \ref{tab:stereo_coverage_xi}, which present the Monte Carlo coverage of the 80\%, 90\%, and 95\% credible intervals across different 
choices of $N$ and $n$, based on 100 repeated runs.
Figures \ref{fig:stereo_sigma_boxplot} and \ref{fig:stereo_xi_boxplot} display the bias of the posterior mean for $\sigma$ and $\xi$, 
respectively, visualized through boxplots across varying $n$ and $N$.
Univariate posterior approximations of $\sigma$ and $\xi$ for a single dataset with $n = 1000$ observations are shown in Figures \ref{fig:stereo_sigma_posterior} and \ref{fig:stereo_xi_posterior}, comparing NPE approximations using different numbers of simulations.

\begin{table}[!htbp]
\centering
\begin{tabular}{@{}l l l l l @{} }\toprule
 & $N=n$ &  $N=n\log(n)$ & $N=n^{3/2}$ & $N=n^2$ \\ 
\hline
$n=100$ &1.00/1.00/1.00 &1.00/1.00/1.00 &0.98/1.00/1.00 &0.88/0.96/0.99 \\ 
$n=500$ &1.00/1.00/1.00 &0.96/0.98/0.99 &0.90/0.97/0.99 &0.81/0.91/0.96 \\ 
$n=1000$ &0.99/1.00/1.00 &0.95/0.99/1.00 &0.85/0.94/0.98 &0.82/0.92/0.96 \\ 
$n=5000$ &0.97/0.99/1.00 &0.90/0.97/0.99 &0.81/0.91/0.96 &0.84/0.94/0.99 \\ 
\bottomrule
\end{tabular}
\caption{Monte Carlo coverage of 80\%, 90\%, and 95\% credible intervals for $\sigma$ across different choices of $N$ and $n$, based on 100 repeated runs.}
\label{tab:stereo_coverage_sigma}
\end{table}

\begin{table}[!htb]
\centering
\begin{tabular}{@{}l l l l l @{} }\toprule
 & $N=n$ &  $N=n\log(n)$ & $N=n^{3/2}$ & $N=n^2$ \\ 
\hline
$n=100$ & 0.26/0.49/0.79 &0.34/0.67/0.95 &0.83/0.88/0.98 &0.93/0.98/1.00 \\ 
$n=500$ & 0.30/0.68/0.96 &1.00/0.99/1.00 &0.94/0.98/1.00 &0.83/0.93/0.97 \\ 
$n=1000$ & 0.69/0.84/0.99 &0.99/0.99/1.00 &0.90/0.97/0.99 &0.84/0.93/0.97 \\ 
$n=5000$ & 0.98/0.99/1.00 &0.91/0.97/0.99 &0.89/0.97/0.99 &0.92/0.97/0.99 \\ 
\bottomrule
\end{tabular}
\caption{Monte Carlo coverage of 80\%, 90\%, and 95\% credible intervals for $\xi$ across different choices of $N$ and $n$, based on 100 repeated runs.}
\label{tab:stereo_coverage_xi}
\end{table}

\begin{figure}[!htb]
    \centering
    \begin{subfigure}[b]{0.49\textwidth}
        \centering
        \includegraphics[width=\textwidth]{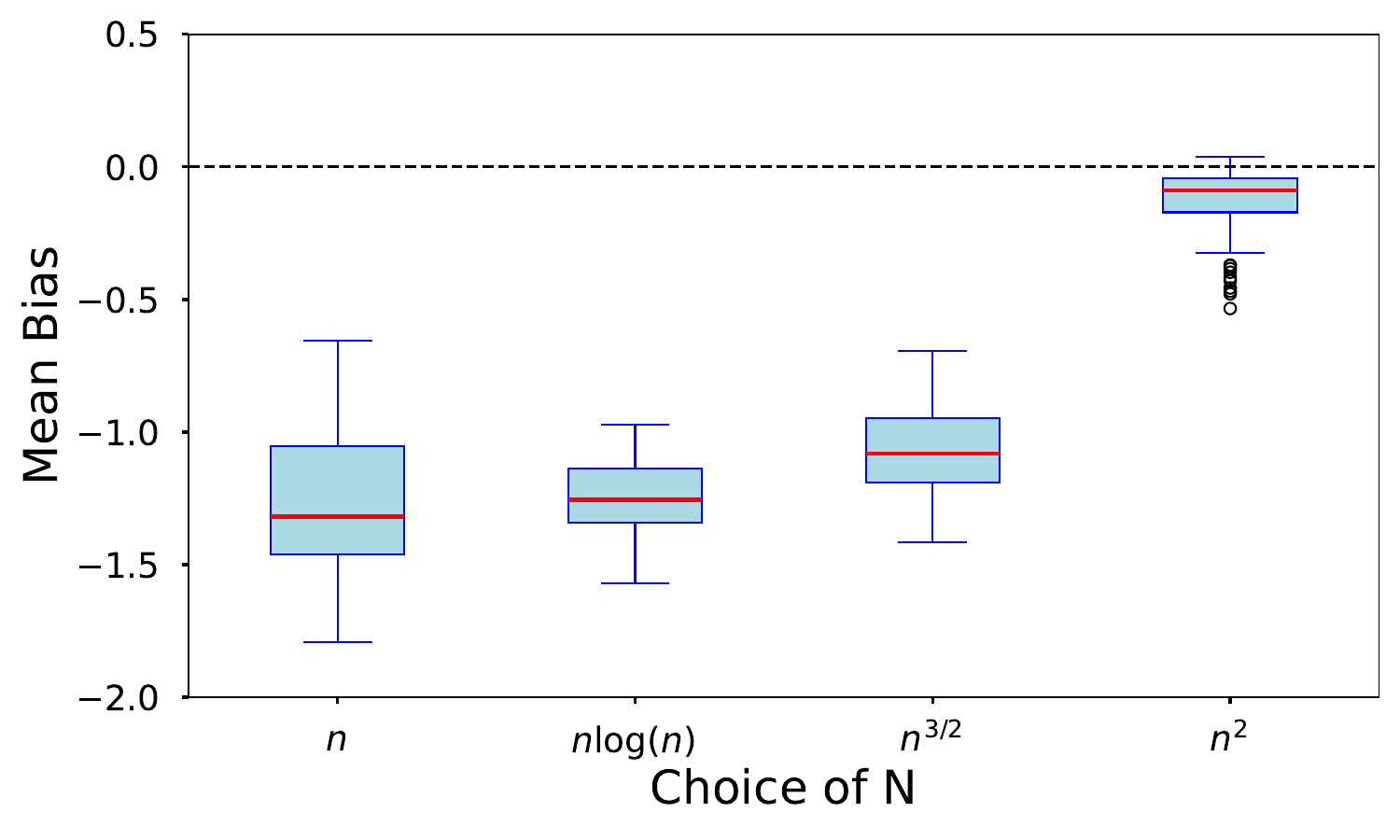}
        \caption{$n=100$}
    \end{subfigure}
    \hfill
    \begin{subfigure}[b]{0.49\textwidth}
        \centering
        \includegraphics[width=\textwidth]{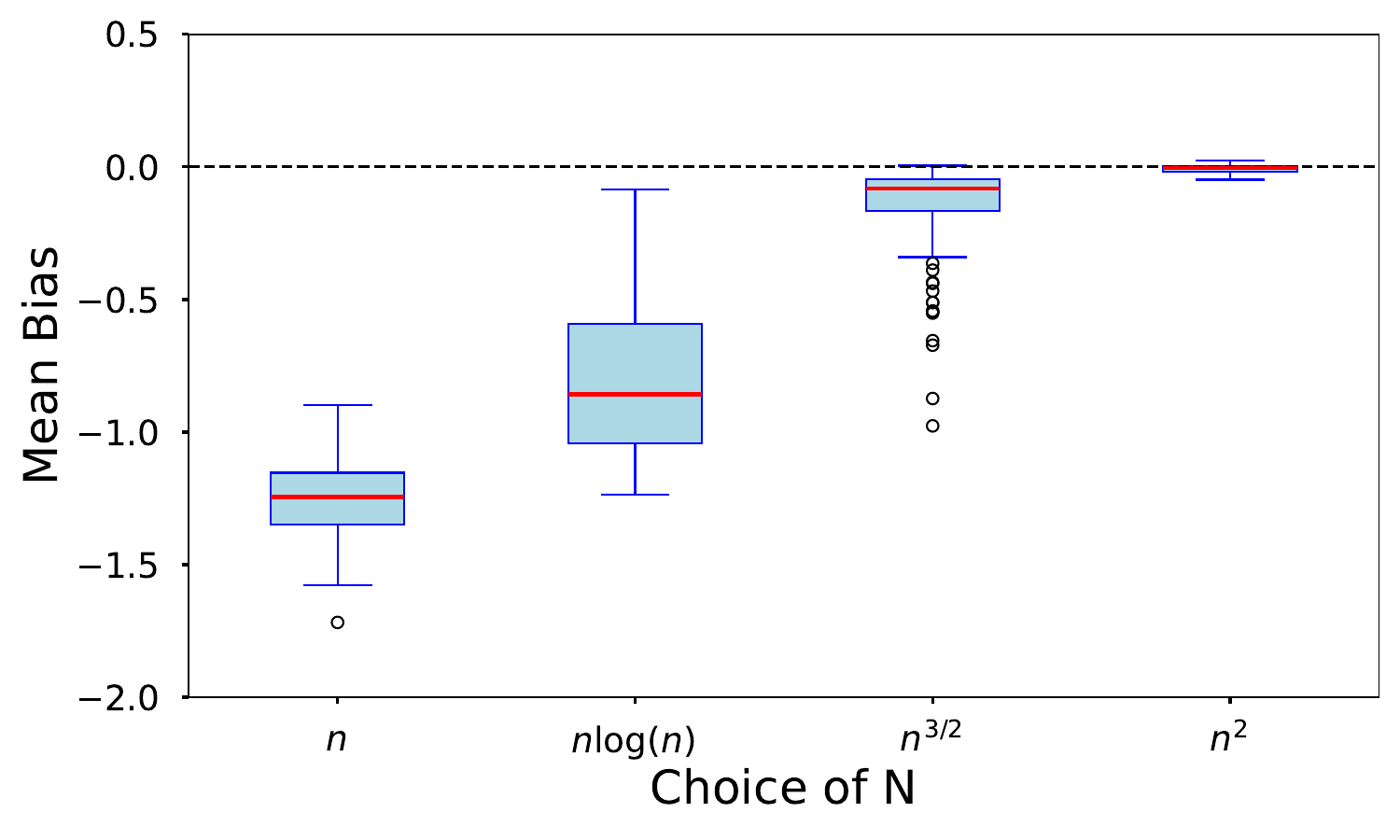}
        \caption{$n=500$}
    \end{subfigure}
    
    \vspace{5pt}

    \begin{subfigure}[b]{0.49\textwidth}
        \centering
        \includegraphics[width=\textwidth]{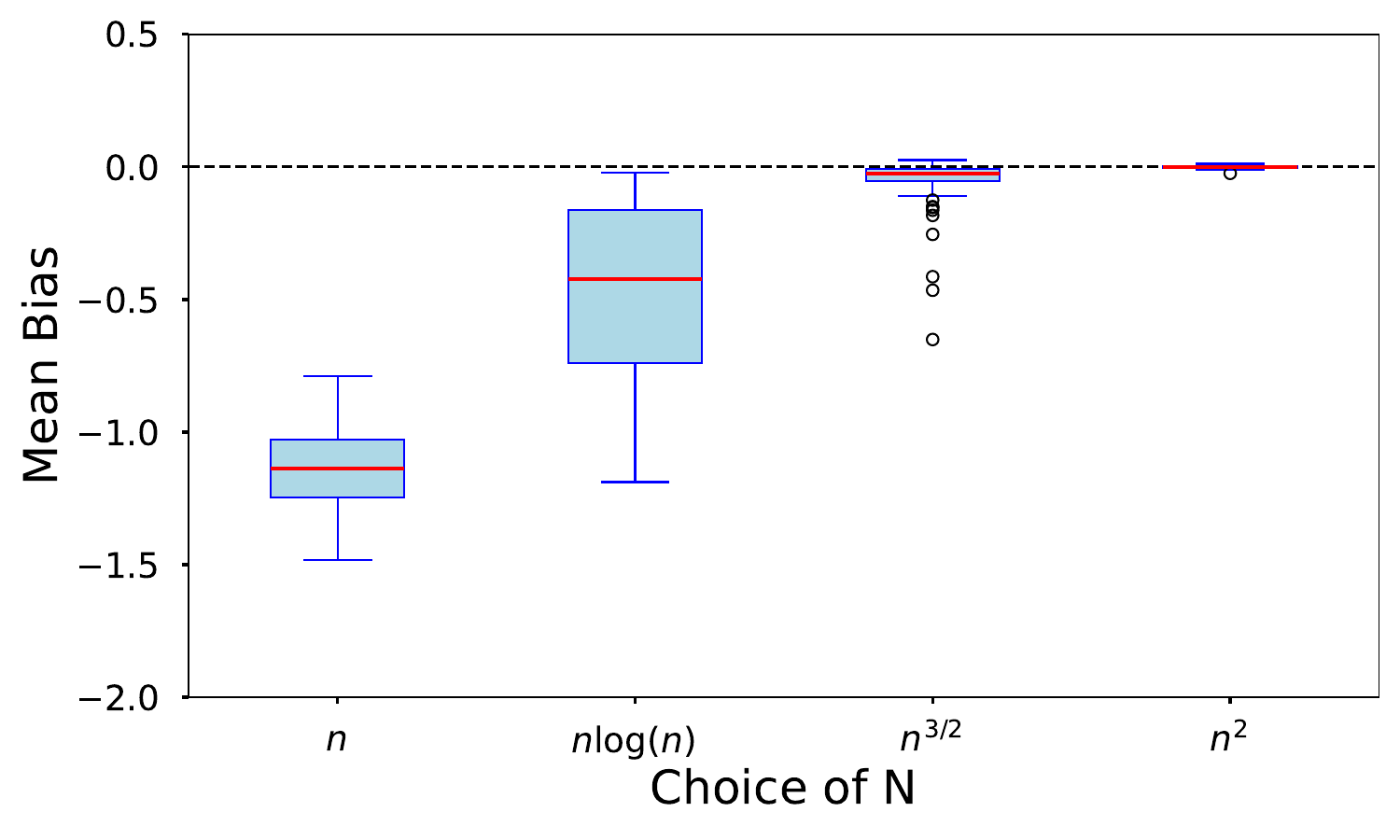}
        \caption{$n=1000$}
    \end{subfigure}
    \hfill
    \begin{subfigure}[b]{0.49\textwidth}
        \centering
        \includegraphics[width=\textwidth]{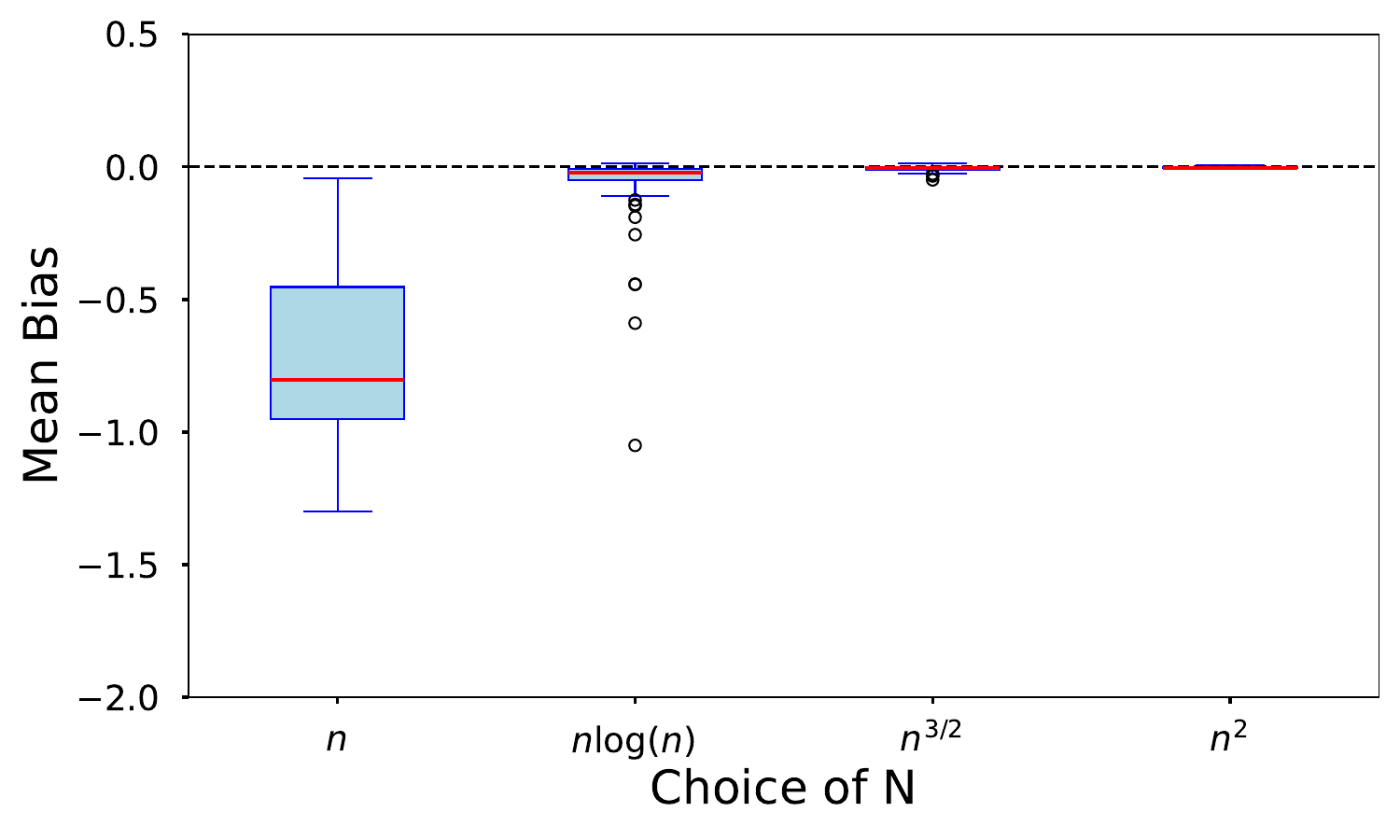}
        \caption{$n=5000$}
    \end{subfigure}
    
    \caption{Bias of the posterior mean for $\sigma$ visualized through boxplots across varying $n$ and $N$.}
    \label{fig:stereo_sigma_boxplot}
\end{figure}

\begin{figure}[!htbp]
    \centering
    \begin{subfigure}[b]{0.49\textwidth}
        \centering
        \includegraphics[width=\textwidth]{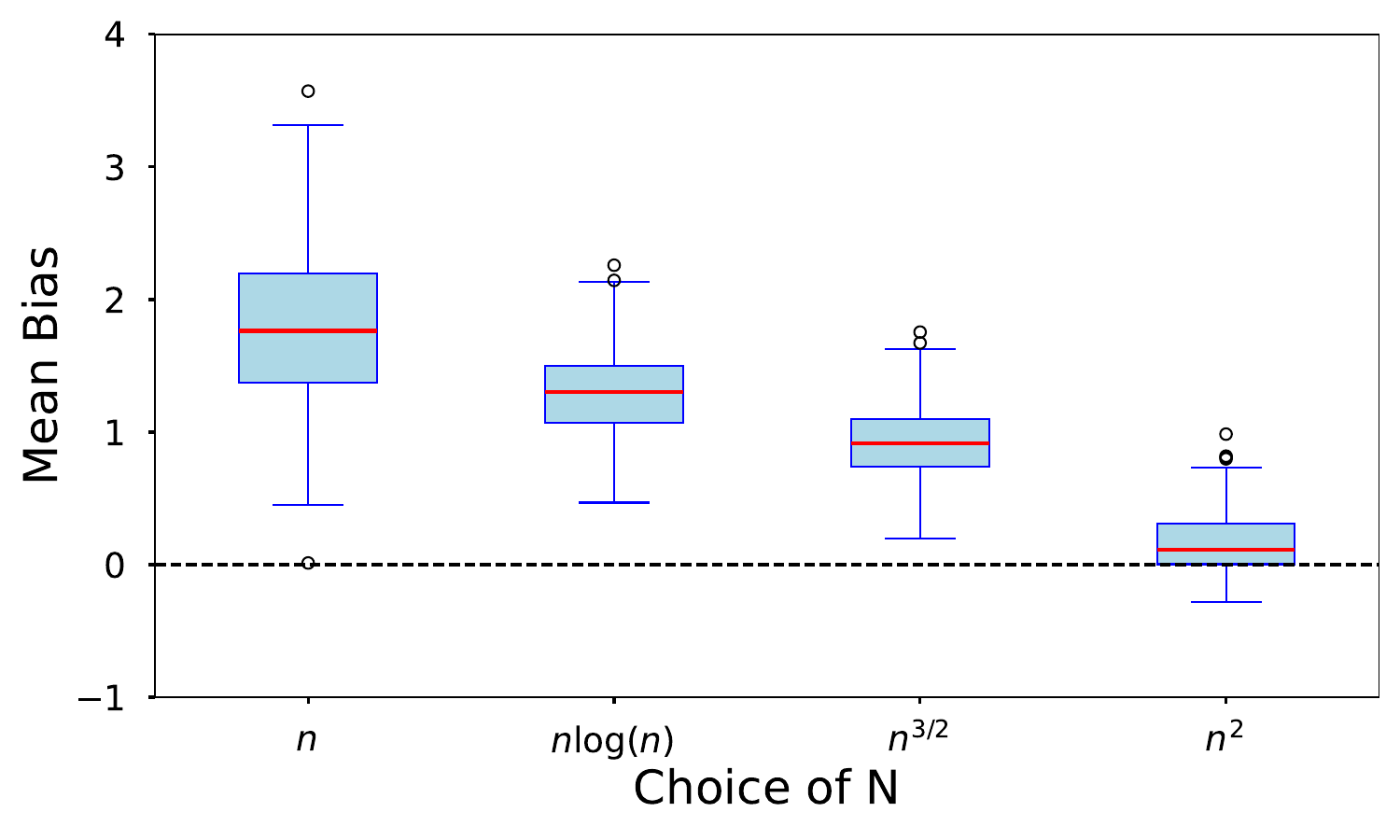}
        \caption{$n=100$}
    \end{subfigure}
    \hfill
    \begin{subfigure}[b]{0.49\textwidth}
        \centering
        \includegraphics[width=\textwidth]{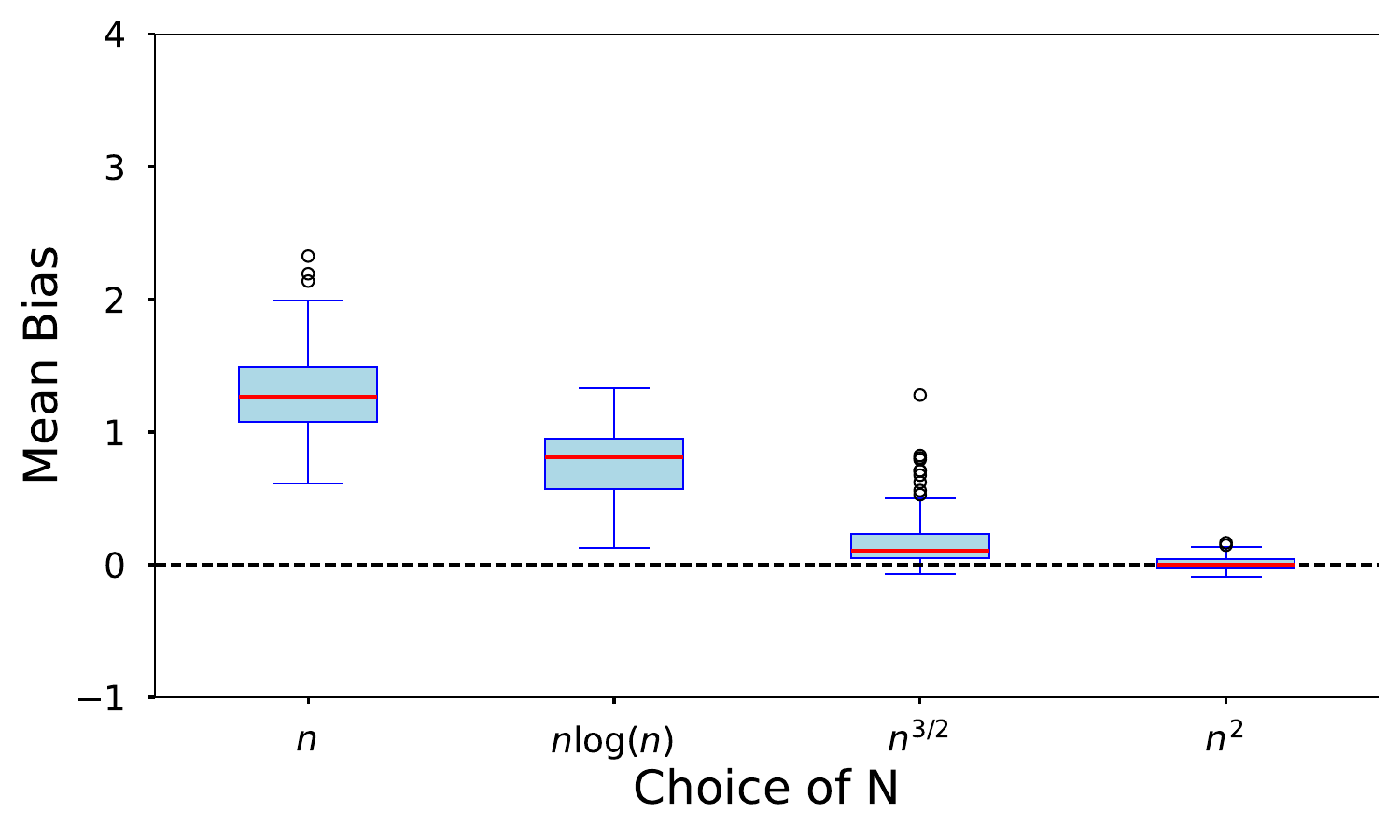}
        \caption{$n=500$}
    \end{subfigure}
    
    \vspace{5pt}
    
    \begin{subfigure}[b]{0.49\textwidth}
        \centering
        \includegraphics[width=\textwidth]{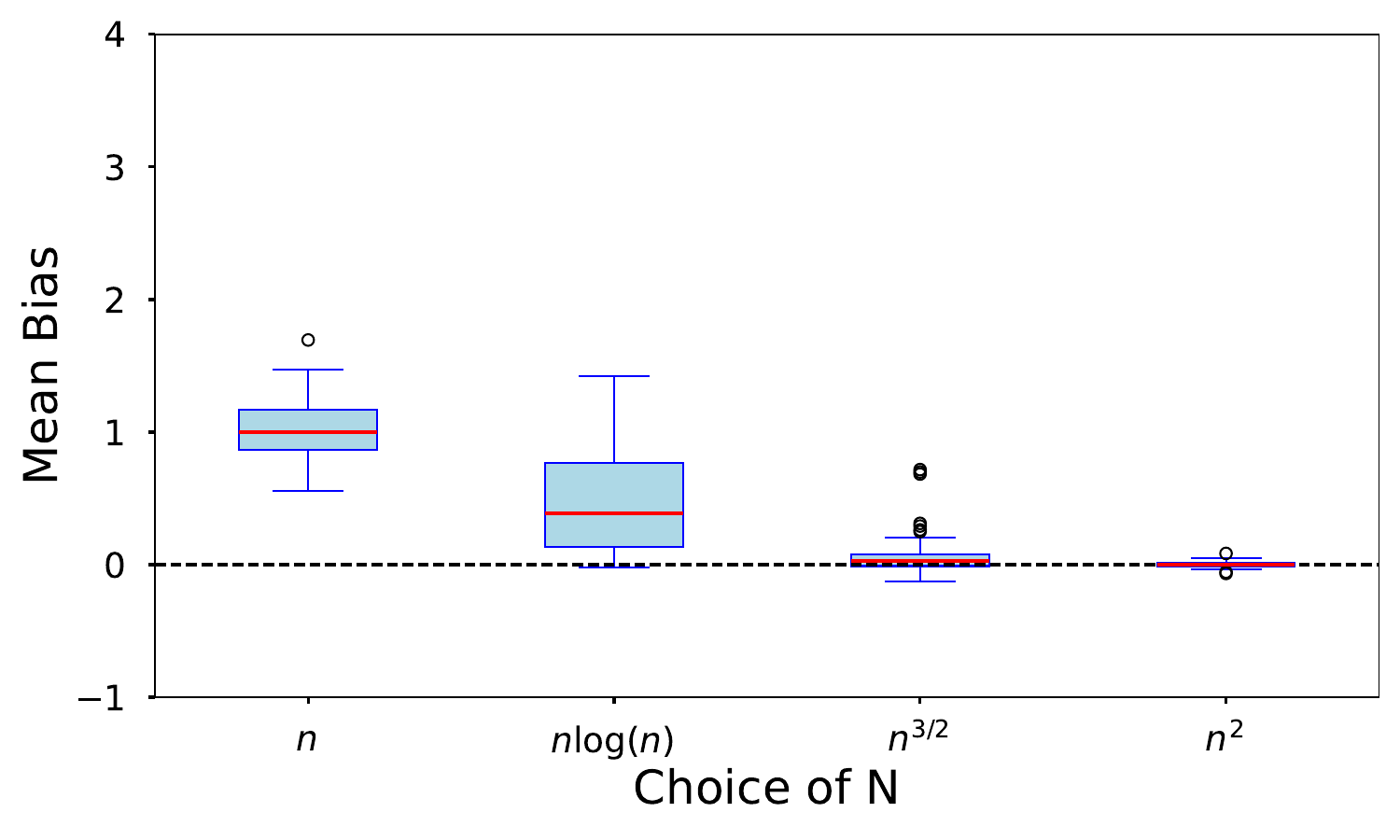}
        \caption{$n=1000$}
    \end{subfigure}
    \hfill
    \begin{subfigure}[b]{0.49\textwidth}
        \centering
        \includegraphics[width=\textwidth]{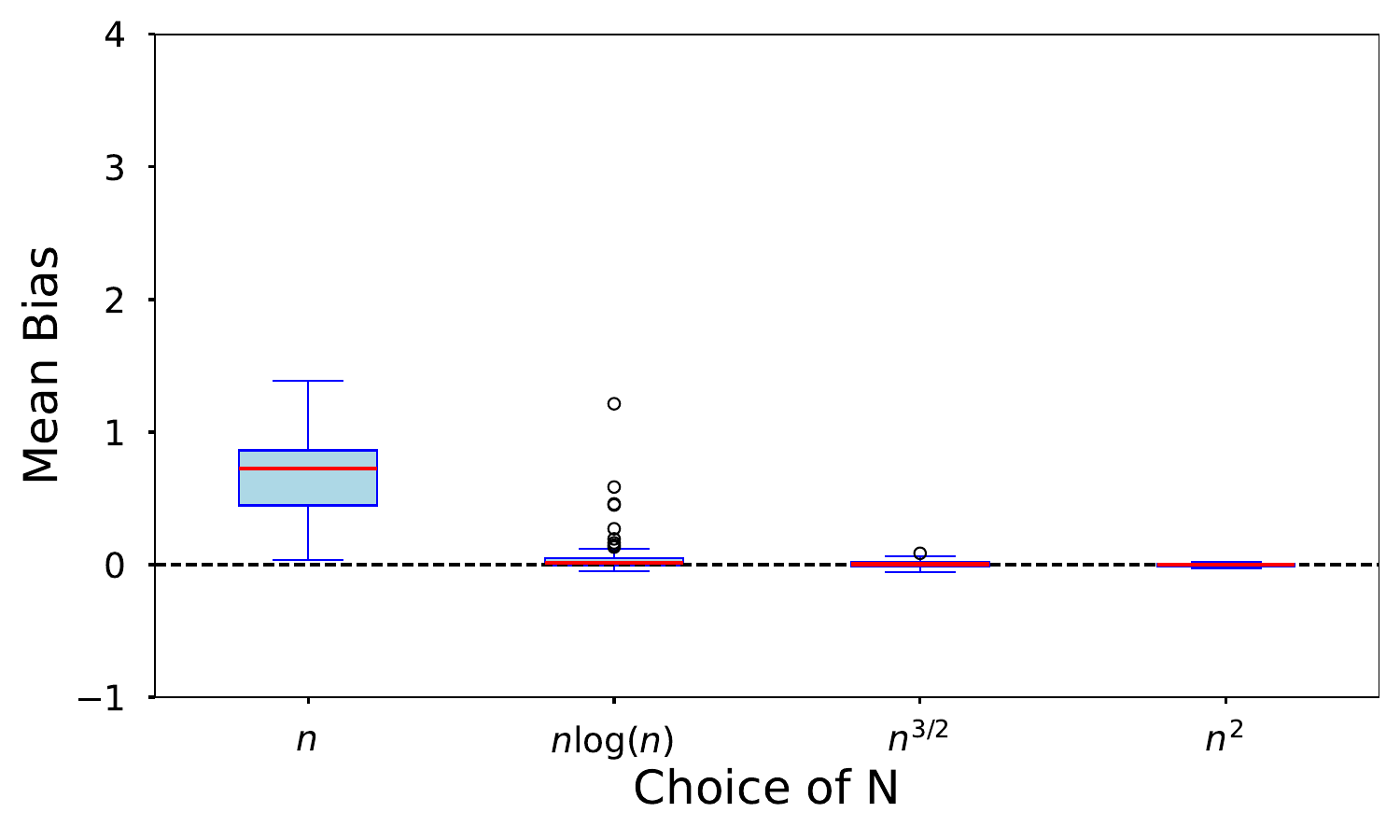}
        \caption{$n=5000$}
    \end{subfigure}
    
    \caption{Bias of the posterior mean for $\xi$ visualized through boxplots across varying $n$ and $N$.}
    \label{fig:stereo_xi_boxplot}
\end{figure}
\begin{figure}[!htbp]
    \centering
    \includegraphics[width=0.6\textwidth]{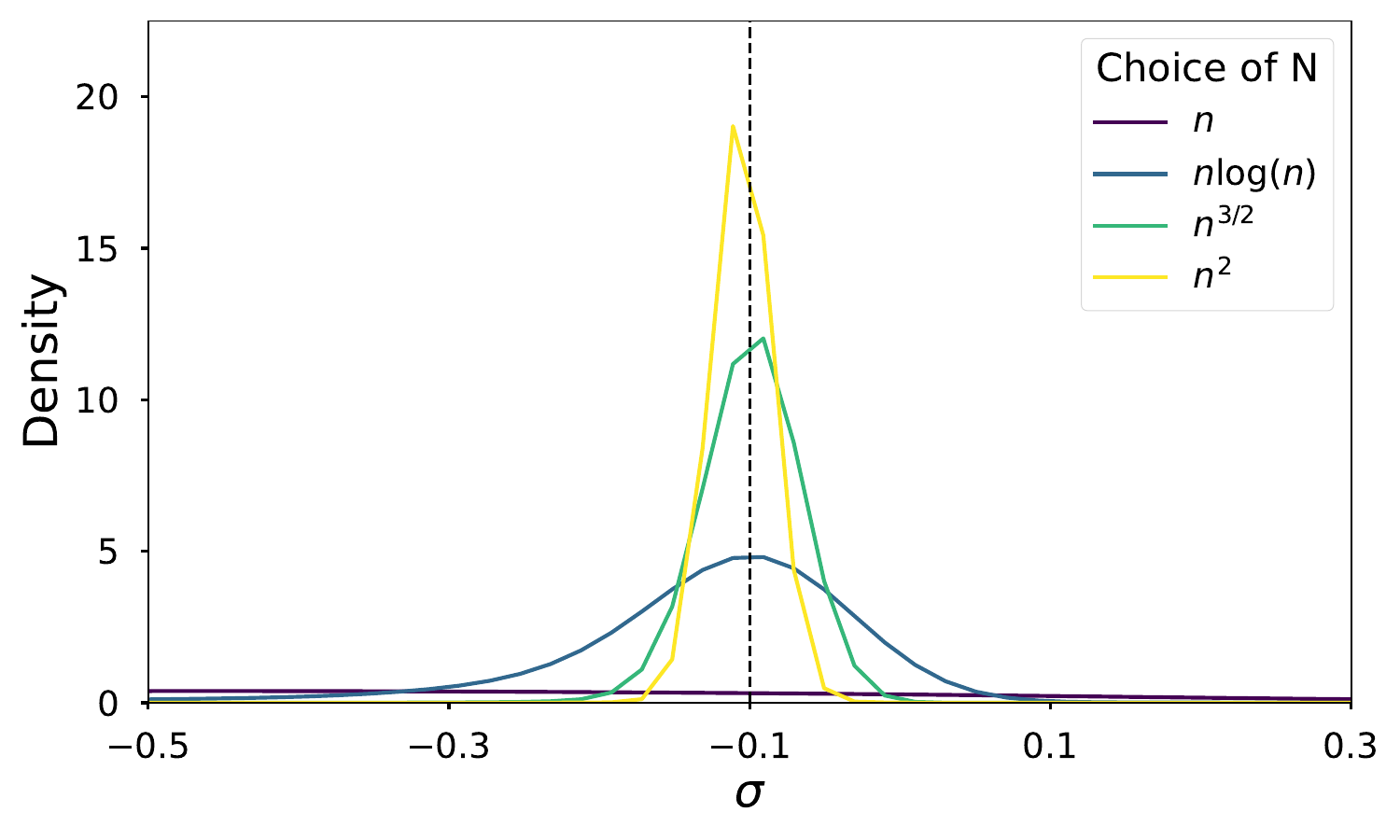}
    \caption{Univariate posterior approximations of $\sigma$ for a single dataset generated from the stereological model with $n = 1000$ observations, comparing NPE approximations using different numbers of simulations.}
    \label{fig:stereo_sigma_posterior}
\end{figure}

\begin{figure}[!htbp]
    \centering
    \includegraphics[width=0.6\textwidth]{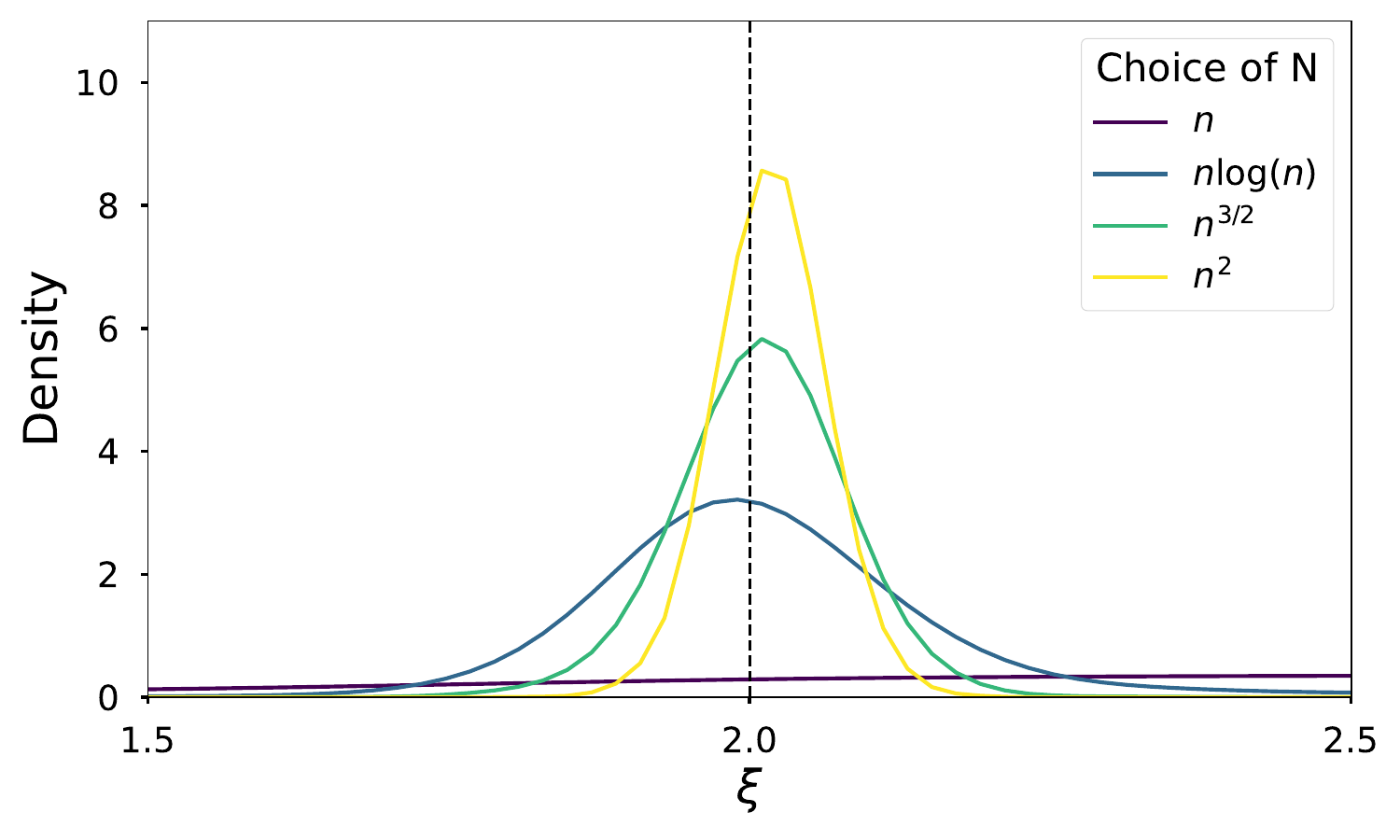}
    \caption{Univariate posterior approximations of $\xi$ for a single dataset generated from the stereological model with $n = 1000$ observations, comparing NPE approximations using different numbers of simulations.}
    \label{fig:stereo_xi_posterior}
\end{figure}

\FloatBarrier

\section{Further Details and Results for the MA(2) example}\label{sec:ma2_appendix}

In Figure~\ref{fig:ma2_kld_compare}, we estimated the KLD using the estimator proposed by \citet{perez-cruz_kullback-leibler_2008}, which computes the KLD from samples of two distributions via a $k$-nearest neighbor density estimation method. Samples from the approximate posterior are straightforwardly obtained from our NPE posterior. For the exact posterior, we sampled from the partial posterior using tempered sequential Monte Carlo \citep[see][chap. 17]{chopin2020introduction}, which is able to handle the bi-modality present in this example when the summaries have high incompatibility. The exact partial posterior is asymptotically given as follows.

The observed summaries---the sample autocovariances of lags 1 and 2 and the sample variance---are asymptotically normally distributed. Specifically, the means of these summaries, for a white noise variance of one, are: $S_n(y) = [\delta_0(y), \delta_1(y), \delta_2(y)] = [1 + \theta_1^2 + \theta_2^2, \theta_1 + \theta_1\theta_2, \theta_2$]. The variance of the summaries are computed using asymptotic results for the variance of sample autocovariances \citep{shumway2000time}. The large sample covariance between the sample autocovariances at lags $k_1$ and $k_2$:
\[
\operatorname{Cov}(\wh{\delta}_{k_1}, \wh{\delta}_{k_2}) = \frac{1}{n} \left[ \sum_{h=-q}^{q} \delta_h \delta_{h + k_1 - k_2} + \sum_{i=-q}^{q} \delta_{k_1 + i} \delta_{k_2 - i} \right],
\]
where we set $q=2$ is the order of the MA(2) model, and $\delta_h=0$ for $h \notin \{0, 1, 2\}$. 

\section{Further Details and Results for the g-and-k Example}\label{sec:gnk_appendix}
Tables \ref{tab:gnk_coverage_A}, \ref{tab:gnk_coverage_B}, and \ref{tab:gnk_coverage_k} present the Monte Carlo coverage of the 80\%, 90\%, and 95\% credible intervals for the parameters $A$, $B$, and $k$, respectively, across different choices of $N$ and sample sizes $n$, based on 100 repeated runs.

To evaluate the performance of the posterior approximation, we estimate the KLD between samples drawn from the exact partial posterior and the NPE posterior approximation.
Since the summary statistics used are order statistics, which are asymptotically normal \citep[see][chap. 21]{van2000asymptotic}, we assume normality and conduct sampling using MCMC. 
The mean vector of the multivariate normal distribution is obtained by evaluating the g-and-k quantile function at the chosen quantile levels.  The covariance matrix entries are given by the asymptotic variances and covariances of the order statistics, calculated using:
\[
\operatorname{Var}(X_{(i)}) = \frac{p_i (1 - p_i)}{n [f(Q(p_i))]^2},
\]

and for \( i \neq j \):

\[
\operatorname{Cov}(X_{(i)}, X_{(j)}) = \frac{\min(p_i, p_j) - p_i p_j}{n\, f(Q(p_i)) f(Q(p_j))},
\]

where \( f(Q(p_i)) \) is the density of the g-and-k distribution evaluated at the quantile \( Q(p_i) \), and \( n \) is the sample size.

To compute the KLD, we again employ the \citet{perez-cruz_kullback-leibler_2008} sample-based estimator between samples from the NPE approximation \(\hat{\mathcal{Q}}(\cdot \mid S_n)\) and the exact posterior, \(\Pi(\cdot \mid S_n)\). Table \ref{tab:gnk_kl} reports the estimated KLD values, illustrating that the divergence tends to decrease as both $n$ and $N$ increase, indicating improved posterior approximation accuracy with larger sample sizes and more simulations.

\begin{table}[!htbp]
\centering
\begin{tabular}{@{}l l l l l @{} }\toprule
 & $N=n$ &  $N=n\log(n)$ & $N=n^{3/2}$ & $N=n^2$ \\ 
\hline
$n=100$ &0.99/1.00/1.00 &1.00/1.00/1.00 &0.99/1.00/1.00 &0.85/0.94/0.98 \\ 
$n=500$ &1.00/1.00/1.00 &0.89/0.97/1.00 &0.78/0.90/0.95 &0.76/0.87/0.93 \\ 
$n=1000$ &0.99/1.00/1.00 &0.83/0.92/0.97 &0.78/0.88/0.94 &0.80/0.90/0.94 \\ 
$n=5000$ &0.94/0.99/1.00 &0.83/0.93/0.97 &0.78/0.89/0.94 &0.80/0.90/0.95 \\ 
\bottomrule\end{tabular}
\caption{Monte Carlo coverage of 80\%, 90\%, and 95\% credible intervals for $A$ across different choices of $N$ and $n$, based on 100 repeated runs.}
\label{tab:gnk_coverage_A}
\end{table}

\begin{table}[!htbp]
\centering
\begin{tabular}{@{}l l l l l @{} }\toprule
 & $N=n$ &  $N=n\log(n)$ & $N=n^{3/2}$ & $N=n^2$ \\ 
\hline
$n=100$ &0.96/1.00/1.00 &1.00/1.00/1.00 &1.00/1.00/1.00 &0.97/1.00/1.00 \\ 
$n=500$ &1.00/1.00/1.00 &0.97/1.00/1.00 &0.91/0.99/1.00 &0.84/0.92/0.97 \\ 
$n=1000$ &0.95/0.99/1.00 &0.91/0.98/1.00 &0.89/0.97/0.99 &0.84/0.94/0.97 \\ 
$n=5000$ &0.94/0.99/1.00 &0.90/0.97/0.99 &0.87/0.95/0.98 &0.84/0.93/0.97 \\ 
\bottomrule\end{tabular}
\caption{Monte Carlo coverage of 80\%, 90\%, and 95\% credible intervals for $B$ across different choices of $N$ and $n$, based on 100 repeated runs.}
\label{tab:gnk_coverage_B}

\end{table}

\begin{table}[!htbp]
\centering
\begin{tabular}{@{}l l l l l @{} }\toprule
 & $N=n$ &  $N=n\log(n)$ & $N=n^{3/2}$ & $N=n^2$ \\ 
\hline
$n=100$ &0.80/0.89/0.94 &0.98/1.00/1.00 &1.00/1.00/1.00 &1.00/1.00/1.00 \\ 
$n=500$ &0.92/0.99/1.00 &0.99/1.00/1.00 &0.97/1.00/1.00 &0.92/0.98/1.00 \\ 
$n=1000$ &0.97/1.00/1.00 &0.97/1.00/1.00 &0.97/1.00/1.00 &0.87/0.96/0.99 \\ 
$n=5000$ &0.96/1.00/1.00 &0.98/1.00/1.00 &0.94/0.99/1.00 &0.87/0.94/0.97 \\ 
\bottomrule\end{tabular}
\caption{Monte Carlo coverage of 80\%, 90\%, and 95\% credible intervals for $k$ across different choices of $N$ and $n$, based on 100 repeated runs.}
\label{tab:gnk_coverage_k}
\end{table}

\begin{table}[!htbp]
\centering
\begin{tabular}{@{}lcccc@{}}
\toprule
 & $N=n$ & $N=n\log(n)$ & $N=n^{3/2}$ & $N=n^2$ \\ 
\midrule
$n=100$  & 7.71 (1.21) & 6.71 (0.87) & 5.71 (0.95) & 3.62 (1.13) \\ 
$n=500$  & 5.87 (0.79) & 4.00 (0.48) & 2.52 (0.43) & 1.55 (0.30) \\ 
$n=1000$  & 5.40 (0.80) & 3.18 (0.77) & 2.12 (0.69) & 1.53 (0.51) \\ 
$n=5000$  & 4.84 (0.91) & 2.40 (0.61) & 1.79 (0.62) & 1.77 (0.58) \\ 
\bottomrule
\end{tabular}
\caption{Estimated KL divergence between $\Pi(\cdot \mid S_n)$ and $\hat{\mathcal{Q}}(\cdot \mid S_n)$ for the g-and-k example. The table shows the mean of the estimated KL across repeated runs and its standard deviation in parentheses.}
\label{tab:gnk_kl}
\end{table}

\FloatBarrier

\section{Proofs of Technical Results}

\subsection{Lemmas}\label{app:lemmas}
In this section, we give several lemmas that are used to help prove the main results. The following result bounds the expectation of the NPE.  
\begin{lemma}\label{lem:general1_TV}
	For $\wh Q_N(\cdot\mid S_n)$ defined in \eqref{eq:snpe}, 
	\begin{flalign*}
		&\E_{0}^{(N,n)}\wh Q_N\left [L\{b_n(\theta), b_0\}>M_n(\epsilon_n+\gamma_N)\mid S_n\right] \\&\le \left[\E_{0}^{(N,n)}\KL\{\Pi(\cdot\mid b_0),\widehat{Q}_N(\cdot\mid b_0)\}\right]^{1/2}+\sqrt{\tr \E_{0}^{(n)}\{(S_n-b_0)(S_n-b_0)^\top\}}\\&+  \E_{0}^{(n)}\Pi[L\{b_n(\theta), b_0\}>M_n(\epsilon_n+\gamma_N)|S_n].
	\end{flalign*}
\end{lemma}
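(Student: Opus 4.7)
The plan is to convert the bound on $\widehat{Q}_N(\cdot \mid S_n)$ into a bound involving the true partial posterior $\Pi(\cdot\mid S_n)$, and then to propagate the conditioning argument from $S_n$ down to $b_0$, where Assumption \ref{ass:sieve} gives direct control. Concretely, let $A_n = \{\theta\in\Theta: L\{b_n(\theta),b_0\}> M_n(\epsilon_n+\gamma_N)\}$. The first step is the trivial inequality
\begin{equation*}
\widehat{Q}_N(A_n\mid S_n) \;\le\; \bigl|\widehat{Q}_N(A_n\mid S_n) - \Pi(A_n\mid S_n)\bigr| + \Pi(A_n\mid S_n)
\;\le\; \dt_{\mathrm{TV}}\!\bigl(\widehat{Q}_N(\cdot\mid S_n),\Pi(\cdot\mid S_n)\bigr) + \Pi(A_n\mid S_n),
\end{equation*}
so that after taking $\E_0^{(N,n)}$, the third term of the lemma appears immediately.

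The second step is a triangle inequality in total variation, inserting the fictitious conditioning argument $b_0$:
\begin{equation*}
\dt_{\mathrm{TV}}\!\bigl(\widehat{Q}_N(\cdot\mid S_n),\Pi(\cdot\mid S_n)\bigr)
\le \dt_{\mathrm{TV}}\!\bigl(\widehat{Q}_N(\cdot\mid S_n),\widehat{Q}_N(\cdot\mid b_0)\bigr)
+ \dt_{\mathrm{TV}}\!\bigl(\widehat{Q}_N(\cdot\mid b_0),\Pi(\cdot\mid b_0)\bigr)
+ \dt_{\mathrm{TV}}\!\bigl(\Pi(\cdot\mid b_0),\Pi(\cdot\mid S_n)\bigr).
\end{equation*}
For the two outer terms I would apply the total-variation Lipschitz property (Assumption \ref{ass:lipz}) at $s = S_n$, $s' = b_0$, which yields the bound $C\|S_n - b_0\|$ for each. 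For the middle term I would use Pinsker's inequality (Lemma \ref{lem:pinsk}) to pass from total variation to KLD, giving $\dt_{\mathrm{TV}}\{\widehat{Q}_N(\cdot\mid b_0),\Pi(\cdot\mid b_0)\}\le \sqrt{\KL\{\Pi(\cdot\mid b_0)\|\widehat{Q}_N(\cdot\mid b_0)\}/2}$, which is exactly the quantity Assumption \ref{ass:sieve} controls.

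The final step is to take the expectation $\E_0^{(N,n)}$ and apply Jensen and Cauchy--Schwarz: the Lipschitz contribution becomes $\E_0^{(N,n)}[C\|S_n-b_0\|]\le \sqrt{\E_0^{(N,n)}C^2}\sqrt{\E_0^{(n)}\|S_n-b_0\|^2}$, where $\E_0^{(n)}\|S_n-b_0\|^2 = \tr\E_0^{(n)}\{(S_n-b_0)(S_n-b_0)^\top\}$; similarly, moving the expectation inside the square root for the Pinsker term gives $\sqrt{\E_0^{(N,n)}\KL\{\Pi(\cdot\mid b_0)\|\widehat{Q}_N(\cdot\mid b_0)\}}$. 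Since $\Pi(A_n\mid S_n)$ is deterministic in the training randomness, its outer expectation reduces to $\E_0^{(n)}\Pi(A_n\mid S_n)$, producing the third term of the stated bound.

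The only subtle point is the treatment of the random Lipschitz constant $C$ from Assumption \ref{ass:lipz}, which is only assumed integrable under $\E_0^{(N,n)}$; this finiteness is precisely what allows the Cauchy--Schwarz step to absorb $C$ into the generic constant multiplying $\sqrt{\tr\E_0^{(n)}\{(S_n-b_0)(S_n-b_0)^\top\}}$. Aside from bookkeeping of this constant, the argument is a straightforward triangle-inequality-plus-Pinsker calculation, and the main decision is simply the choice to insert the \textit{nonrandom} anchor $b_0$ (rather than $S_n$ itself) so that Assumption \ref{ass:sieve} applies directly without further approximation.
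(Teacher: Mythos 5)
Your proposal is correct and follows essentially the same route as the paper's proof: the same split of $\wh Q_N(A_n\mid S_n)$ into a total-variation term plus $\Pi(A_n\mid S_n)$, the same triangle inequality through the nonrandom anchor $b_0$, Assumption \ref{ass:lipz} on the two outer terms, Pinsker's inequality on the middle term, and Jensen's inequality after taking expectations. The only cosmetic difference is your use of Cauchy--Schwarz to absorb the random Lipschitz constant (which would require $\E_0^{(N,n)}(C^2)<\infty$), whereas the paper's argument only needs $\E_0^{(N,n)}(C)<\infty$ together with the independence $C\perp(\theta,s,s')$ stated in Assumption \ref{ass:lipz}.
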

\begin{proof}
	Write
	\begin{flalign*}
		&\wh Q_N\left [L\{b_n(\theta), b_0\}>M_n(\epsilon_n+\gamma_N)\mid S_n\right]\\&=\Pi[L\{b_n(\theta), b_0\}>M_n(\epsilon_n+\gamma_N)|S_n]\\&+\wh Q_N\left [L\{b_n(\theta), b_0\}>M_n(\epsilon_n+\gamma_N)\mid S_n\right]-\Pi[L\{b_n(\theta), b_0\}>\epsilon_n|S_n],		
	\end{flalign*}so that
	\begin{flalign}
		&\wh Q_N\left [L\{b_n(\theta), b_0\}>M_n(\epsilon_n+\gamma_N)\mid S_n\right]\nonumber\\\le& \dt_{\mathrm{TV}}\{\Pi(\cdot\mid S_n),\widehat{Q}_N(\cdot\mid S_n)\}+\Pi[L\{b_n(\theta), b_0\}>M_n(\epsilon_n+\gamma_N)|S_n].\label{eq:newbound}		
	\end{flalign}
	Bound the first term using, in turn, the triangle inequality, Assumption \ref{ass:lipz} and Lemma \ref{lem:pinsk}:
	\begin{flalign*}
		\dt_{\mathrm{TV}}\{\Pi(\cdot\mid S_n),\widehat{Q}_N(\cdot\mid S_n)\}\le& 		\dt_{\mathrm{TV}}\{\Pi(\cdot\mid S_n),\Pi(\cdot\mid b_0)\}+\dt_{\mathrm{TV}}\{\widehat{Q}_N(\cdot\mid S_n),\wh Q_N(\cdot\mid b_0)\}\\+&	\dt_{\mathrm{TV}}\{\Pi(\cdot\mid b_0),\wh Q_N(\cdot\mid b_0)\}\\\le& 	2C\|S_n-b_0\|+\dt_{\mathrm{TV}}\{\Pi(\cdot\mid b_0),\wh Q_N(\cdot\mid b_0)\}\\\le& 2C\|S_n-b_0\|+\sqrt{\KL\{\Pi(\cdot\mid b_0)||\wh Q_N(\cdot\mid b_0)\}}.
	\end{flalign*}
	Hence, we have the following bound for \eqref{eq:newbound}:
	\begin{flalign*}
		&\wh Q_N\left [L\{b_n(\theta), b_0\}>M_n(\epsilon_n+\gamma_N)\mid S_               n\right]\\&\le \sqrt{\KL\{\Pi(\cdot\mid b_0),\widehat{Q}_N(\cdot\mid b_0)\}}+2C\|S_n-b_0\|\\&+\Pi[L\{b_n(\theta), b_0\}>M_n(\epsilon_n+\gamma_N)|S_n].%\\
		%		\le& \KL\{\Pi(\cdot\mid S_n),\widehat{Q}_N\}^{1/2}+\Pi[L\{b_n(\theta), b_0\}>\epsilon_n|S_n]\\&+2C\|S_n-b_0\
	\end{flalign*}%where the second inequality follows since $\KL\{\Pi(\cdot\mid S_n),\widehat{Q}_N\}\ge0$. 
	Write $\|x\|=\sqrt{\|x\|^2}=\sqrt{\tr(xx^\top)}$, and apply Jensen's inequality to both terms to obtain 
	\begin{flalign*}
		&\E_{0}^{(N,n)}\wh Q_N\left [L\{b_n(\theta), b_0\}>M_n(\epsilon_n+\gamma_N)\mid S_n\right]\\\lesssim& \E_{0}^{(N,n)}\KL\{\Pi(\cdot\mid b_0),\widehat{Q}_N(\cdot\mid b_0)\}^{1/2}+\E_{0}^{(n)}\|S_n-b_0\|\\&+\E_{0}^{(n)}\Pi[L\{b_n(\theta), b_0\}>M_n(\epsilon_n+\gamma_N)|S_n]\\\lesssim&  \left[\E_{0}^{(N,n)}\KL\{\Pi(\cdot\mid b_0),\widehat{Q}_N(\cdot\mid b_0)\}\right]^{1/2}+\sqrt{\tr \E_{0}^{(n)}(S_n-b_0)(S_n-b_0)^\top}\\&+\E_{0}^{(n)}\Pi[L\{b_n(\theta), b_0\}>M_n(\epsilon_n+\gamma_N)|S_n].
	\end{flalign*}
\end{proof}

For a given $\epsilon_n\ge0$ and $\epsilon_n=o(1)$, recall the definition $\Theta_n:=\{\theta\in\Theta:L\{b_n(\theta),b_0\}\le M\epsilon_n\}$. For $A\subseteq\Theta$, let $P(A)=\int_{A}p(\theta)\dt\theta$. The next result gives a posterior concentration result for $\Pi(\Theta_n^c\mid S_n)$. 
\begin{lemma}\label{lem:contmap}
	Let $\epsilon_n=o(1)$ be a positive sequence such that, for $\nu_n$ as in Assumption \ref{ass:obs_sum},   $\nu_n\epsilon_n\rightarrow\infty$. Under Assumptions \ref{ass:obs_sum}-\ref{ass:compat}, for any $M_n\rightarrow\infty$,
	$
	\E_0^{(n)}\Pi\left[L\{b_n(\theta),b_0\}>M_n\epsilon_n\mid S_n\right]\lesssim M_n\epsilon_n .
	$	
\end{lemma}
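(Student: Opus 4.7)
The plan is to prove this posterior concentration result via the classical Bayesian strategy: write Bayes' formula explicitly, split the expectation over $S_n\sim P_0^{(n)}$ into a ``good'' event and its complement, lower bound the normalizing constant using the compatibility assumption, and upper bound the unnormalized mass on $\Theta_n^c$ using the simulated-summary tail bound. Throughout, let $N_n(S):=\int_{\Theta_n^c} g_n(S\mid\theta)p(\theta)\,d\theta$ and $D_n(S):=\int_\Theta g_n(S\mid\theta)p(\theta)\,d\theta$, so that $\Pi[\Theta_n^c\mid S_n]=N_n(S_n)/D_n(S_n)$, where $\Theta_n^c:=\{\theta: L\{b_n(\theta),b_0\}>M_n\epsilon_n\}$.

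The first step is to introduce the good event $A_n:=\{\|S_n-b_0\|\leq M_n\epsilon_n/2\}\cap\mathcal{E}_n$, with $\mathcal{E}_n$ the compatibility set from Assumption~\ref{ass:compat} taken with a slack $\epsilon$ that I will let depend on $n$. Assumption~\ref{ass:obs_sum} gives $P_0^{(n)}(\|S_n-b_0\|>M_n\epsilon_n/2)\lesssim(\nu_n M_n\epsilon_n)^{-\alpha}$ (legitimate because $\nu_n\epsilon_n\to\infty$, which eventually places the cutoff in the admissible tail range), and Assumption~\ref{ass:compat} yields $P_0^{(n)}(\mathcal{E}_n^c)<\epsilon$. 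On $A_n^c$ I simply bound $\Pi[\Theta_n^c\mid S_n]\leq 1$, so the contribution of the bad event to the expectation is at most $(\nu_n M_n\epsilon_n)^{-\alpha}+\epsilon$.

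On $A_n$, Assumption~\ref{ass:compat} together with Assumption~\ref{ass:prior_concentration} produces the pointwise lower bound $D_n(S_n)\geq c\,f_0^{(n)}(S_n)$, where $c>0$ collects $\delta$ from compatibility and the strictly positive prior mass on a fixed small ball about $\theta_0$. For the numerator, any $\theta\in\Theta_n^c$ on $A_n$ satisfies $\|S_n-b_n(\theta)\|>M_n\epsilon_n/2$ by the triangle inequality (for metric $L$; see the discussion below for generic $L$). Applying Fubini and Assumption~\ref{ass:sim_sum} yields
\begin{flalign*}
\E_0^{(n)}\!\left[\frac{N_n(S_n)\mathbb{1}_{A_n}}{f_0^{(n)}(S_n)}\right] &\leq \int_{\Theta_n^c}\! p(\theta)\,G_n\!\left\{\|S-b_n(\theta)\|>\tfrac{M_n\epsilon_n}{2}\,\big|\,\theta\right\}d\theta \\
&\lesssim (\nu_n M_n\epsilon_n)^{-\alpha}\!\int C(\theta)p(\theta)\,d\theta \;\lesssim\; (\nu_n M_n\epsilon_n)^{-\alpha},
\end{flalign*}
using the integrability of $C(\theta)$ against the prior. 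Dividing by the lower bound on $D_n$ and combining with the bad-event estimate gives $\E_0^{(n)}\Pi[\Theta_n^c\mid S_n]\lesssim(\nu_n M_n\epsilon_n)^{-\alpha}+\epsilon$. Choosing the compatibility slack $\epsilon\asymp M_n\epsilon_n$ (permitted because the quantifier in Assumption~\ref{ass:compat} is ``for any $\epsilon>0$'') and using $\alpha\geq d_s+1$ together with $\nu_n\epsilon_n\to\infty$ makes the first term negligible relative to $M_n\epsilon_n$, completing the claim.

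The main obstacle is the generic loss $L$: the triangle step that turns $L\{b_n(\theta),b_0\}>M_n\epsilon_n$ on $A_n$ into $\|S_n-b_n(\theta)\|>M_n\epsilon_n/2$ is not automatic. For the natural choices $L(x,y)=\|x-y\|$ or $L(x,y)=\|x-y\|^2$, and more generally for any Bregman-type loss that is locally comparable to a norm around $b_0$, this reduction is immediate up to rescaling of the constant in front of $M_n\epsilon_n$. I would handle the general case by recording an implicit local comparison $c_1\|b_n(\theta)-b_0\|\leq L\{b_n(\theta),b_0\}$ on the sublevel set of interest (which is essentially already imposed by Assumption~\ref{ass:lipz} via Lipschitz control of the mapping $S\mapsto \pi(\cdot\mid S)$), after which the argument above transfers verbatim with $M_n\epsilon_n$ replaced by a fixed multiple of itself.
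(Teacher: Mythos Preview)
Your proof is correct (with the same mild looseness about how the compatibility constant $\delta=\delta(\epsilon)$ behaves as $\epsilon\downarrow 0$ that the paper's own argument also carries) and it follows a genuinely different route from the paper.

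The paper does \emph{not} localize $S_n$. After lower-bounding the denominator on the compatibility event (their Lemma~\ref{lem:denom}, which is your $D_n(S_n)\ge c\,f_0^{(n)}(S_n)$ on $\mathcal{E}_n$), the paper integrates the numerator over all $s\in\mathcal{S}$, cancels $f_0^{(n)}$, and is left with $\tfrac{1}{c_{\epsilon_n}P(\Theta_n)}\int_{\Theta_n^c}\int_{\mathcal{S}}g_n(s\mid\theta)p(\theta)\,ds\,d\theta$. It then splits the $s$-integral at $\mathcal{B}_n(\theta)=\{\|s-b_n(\theta)\|\le M_n\}$: the complement is handled by Assumption~\ref{ass:sim_sum} just as you do, but on $\mathcal{B}_n(\theta)$ the inner integral is simply bounded by $1$, yielding the ratio $P(\Theta_n^c)/P(\Theta_n)$, which is then controlled via the prior-mass condition Assumption~\ref{ass:prior_concentration}. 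Your triangle-inequality device instead makes the ``$s$ close to $b_n(\theta)$'' case empty once $S_n\in A_n$ and $\theta\in\Theta_n^c$, so the prior-mass ratio never appears; your numerator bound runs entirely through the simulated-summary tail of Assumption~\ref{ass:sim_sum}, with the observed-summary tail of Assumption~\ref{ass:obs_sum} paying for $P_0^{(n)}(A_n^c)$---a use of Assumption~\ref{ass:obs_sum} the paper's proof does not need. The trade-off is exactly the one you flag: the paper's decomposition is agnostic to the form of $L$ (which enters only through Assumption~\ref{ass:prior_concentration}), whereas your argument needs $L$ to be locally comparable to the norm so the triangle step applies. One small correction: your appeal to Assumption~\ref{ass:lipz} to justify that comparison is misplaced---that assumption controls total-variation smoothness of $s\mapsto\pi(\cdot\mid s)$, not the geometry of $L$---so for a fully general loss you would need to add an explicit local-equivalence hypothesis rather than invoke Assumption~\ref{ass:lipz}.
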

\begin{proof}[Proof of Lemma \ref{lem:contmap}]
Consider that 	
	\begin{flalign*}
		\Pi(\Theta_{n}^c\mid S_n)
		&\le \frac{\int_{\Theta_{n}^c}g_n(S_n\mid\theta)p(\theta)\dt\theta}{\int_{\Theta_{n}}g_n(S_n\mid\theta)p(\theta)\dt\theta}\\
		&\le\frac{\int_{\Theta_{n}^c}g_n(S_n\mid\theta)p(\theta)\dt\theta}{\int_{\Theta_{n}}g_n(S_n\mid\theta)p(\theta)\dt\theta}\times 1\left\{\int_{\Theta_{n}}g_n(S_n\mid\theta)p(\theta)\dt\theta>P(\Theta_{n})c_{\epsilon_n} f_0^{(n)}(S_n)\right\}\\&+1\left\{\int_{\Theta_{n}}g_n(S_n\mid\theta)p(\theta)\dt\theta\le c_{\epsilon_n}  P(\Theta_{n})f_0^{(n)}(S_n)\right\}
		\\&\le \frac{\int_{\Theta_{n}^c}g_n(S_n\mid\theta)p(\theta)\dt\theta}{P(\Theta_{n})c_{\epsilon_n} f_0^{(n)}(S_n)}+1\left\{\int_{\Theta_{n}}g_n(S_n\mid\theta)p(\theta)\dt\theta\le  P(\Theta_{n})c_{\epsilon_n} f_0^{(n)}(S_n)\right\}.
	\end{flalign*}Recall that $f_0^{(n)}(S)$ is the true density of $S_n$, and take expectations on both sides wrt $S_n$ to obtain
	\begin{flalign*}
		\E_0^{(n)}\Pi(\Theta_{n}^c\mid S_n)\le &	\int_{\mathcal{S}} \frac{\int_{\Theta_{n}^c}g_n(s\mid\theta)p(\theta)\dt\theta}{ P(\Theta_{n})c_{\epsilon_n} f_0^{(n)}(s)}f_0^{(n)}(s)\dt s\\&+P_0^{(n)}\left\{\int_{\Theta_{n}}g_n(S_n\mid\theta)p(\theta)\dt\theta\le  P(\Theta_{n})c_{{\epsilon_n}} f_0^{(n)}(S_n)\right\}\\&
		\le\frac{1}{c_{\epsilon_n} P(\Theta_{n})} \int_{\mathcal{S}} {\int_{\Theta_{n}^c}g_n(s\mid\theta)p(\theta)\dt\theta\dt s}+M_n\epsilon_n,
	\end{flalign*}where the second inequality follows by Lemma \ref{lem:denom}. 
	Consider the first term on the RHS. For $M_n\rightarrow\infty$, and $M_n\lesssim \nu_n$,  let $\mathcal{B}_n(\theta):=\{s\in \mathcal{S}:\|s-b(\theta)\|\le M_n\}$, 
	\begin{flalign}
		\int_{\mathcal{S}} \frac{\int_{\Theta_{n}^c}g_n(s\mid\theta)p(\theta)\dt\theta}{ P(\Theta_{n})c_{\epsilon_n} f_0^{(n)}(s)}f_0^{(n)}(s)\dt s &\le \frac{1}{c_{\epsilon_n} P(\Theta_{n})} \int_{\Theta_{n}^c}\int_{\mathcal{B}_n(\theta)^c}g_n(s\mid\theta)p(\theta)\dt\theta\dt s\nonumber\\&+\frac{1}{c_{\epsilon_n} P(\Theta_{n})} \int_{\Theta_{n}^c}\int_{\mathcal{B}_n(\theta)}g_n(s\mid\theta)p(\theta)\dt\theta\dt s
		\nonumber\\&\le \frac{1}{c_{\epsilon_n} P(\Theta_{n})}\int_{\Theta_{n}^c}G_n\{\mathcal{B}_n(\theta)^c\mid\theta\}p(\theta)\dt\theta\nonumber\\&+\frac{1}{c_{\epsilon_n} P(\Theta_{n})} \int_{\Theta_{n}^c}\int_{\mathcal{B}_n(\theta)}g_n(s\mid\theta)p(\theta)\dt\theta\dt s
		\nonumber\\&\le \frac{1}{c_{\epsilon_n} P(\Theta_{n})}\int_{\Theta_{n}^c}G_n\{\mathcal{B}_n(\theta)^c\mid\theta\}p(\theta)\dt\theta \nonumber\\&+\frac{1}{c_{\epsilon_n} P(\Theta_{n})} \int_{\Theta_{n}^c}\int_{\mathcal{S}}g_n(s\mid\theta)p(\theta)\dt\theta\dt s.\label{eq:last}
	\end{flalign}
	The last term in equation \eqref{eq:last} can be rewritten as 
	\begin{equation}\label{eq:prior_term}
		\frac{1}{c_{\epsilon_n} P(\Theta_{n})} \int_{\Theta_n^c}\int_{\mathcal{S}}g_n(S\mid\theta)p(\theta)\dt\theta\dt S=\frac{1}{c_{\epsilon_n}}\frac{ P(\Theta_{n}^c)}{ P(\Theta_{n})}.
	\end{equation}
	Under Assumption \ref{ass:sim_sum}, the first term in equation \eqref{eq:last} can be upper bounded as
	\begin{flalign}
		\frac{1}{c_{\epsilon_n} P(\Theta_{n})}\int_{\Theta_n^c}G_n\{\mathcal{B}_n(\theta)^c\mid\theta\}p(\theta)\dt\theta\le&\frac{1}{c_{\epsilon_n} P(\Theta_{n})}\int_{\Theta}G_n\{\mathcal{B}_n(\theta)^c\mid\theta\}p(\theta)\dt\theta\nonumber\\\le& \frac{1}{c_{\epsilon_n} P(\Theta_{n})}\frac{1}{(\nu_nM_n)^\alpha}\int_{\Theta}C(\theta)p(\theta)\dt\theta .	\label{eq:post_term}
	\end{flalign}
	Applying equations \eqref{eq:prior_term} and \eqref{eq:post_term} into \eqref{eq:last} yields the upper bound 
	\begin{flalign*}
		\E_0^{(n)} \Pi({\Theta_{n}^c}\mid S_n)\dt\theta&\lesssim \frac{1}{c_{\epsilon_n} P(\Theta_{n})}\frac{1}{(\nu_nM_n)^\alpha}+\frac{1}{c_{\epsilon_n}}\frac{ P(\Theta_{n}^c)}{ P(\Theta_{n})}.
	\end{flalign*}Apply Assumption \ref{ass:prior_concentration}, and the fact that $\epsilon_n\gtrsim 1/\nu_n^{}$ to obtain 
	\begin{flalign*}
		\E_0^{(n)} \Pi({\Theta_{n}^c}\mid S_n)\dt\theta&\lesssim \frac{1}{c_{\epsilon_n} M_n^d\epsilon_n^d}\frac{1}{(\nu_nM_n)^\alpha}+\frac{M_n^{\tau-d}\epsilon_n^{\tau-d}}{c_{\epsilon_n}}\\&\lesssim\frac{1}{c_{\epsilon_n}}\frac{\nu_n^d}{M^d_n}\frac{1}{M_n^\alpha\nu_n^\alpha}+\frac{M_n^{\tau-d}\epsilon_n^{\tau-d}}{c_{\epsilon_n}}\\&\lesssim\frac{1}{c_{\epsilon_n}}(M_n\epsilon_n)^{(\tau-d)\wedge (\alpha-d)},
	\end{flalign*}	where the second line uses the fact that $\epsilon_n\gtrsim 1/\nu_n$ and the last line uses the fact that, by Assumptions \ref{ass:obs_sum}, $\alpha\ge d+1$ and by Assumption \ref{ass:prior_concentration},  $\tau\ge d+1$. It follows that 
	$$
	\E_0^{(n)} \Pi({\Theta_{n}^c}\mid S_n)\lesssim (\epsilon_n/M_n)^{(\tau-d)\wedge (\alpha-d)}\le \epsilon_nM_n.
	$$
\end{proof}
This following result gives an in-probability bound for the denominator of  $\Pi(\cdot\mid S_n)$. 
\begin{lemma}\label{lem:denom}
	Let $M_n>0$, $M_n\rightarrow\infty$ be such that $\epsilon_n=M_n/\nu_n=o(1)$, and recall $\Theta_n=\{\theta\in\Theta: L\{b(\theta),b_0\}\le M_n\epsilon_n\}$. Under Assumption \ref{ass:compat}, 	
	$$
	P^{(n)}_0 \left\{ \int_{\Theta_n}g_n(S_n\mid\theta)p(\theta)\dt\theta\le c_{\epsilon_n} f^{(n)}_0(S_n) P(\Theta_{n})\right\}\le \epsilon_n.
	$$
\end{lemma}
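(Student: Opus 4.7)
\textbf{Proof Plan for Lemma \ref{lem:denom}.} The plan is to use the compatibility condition in Assumption \ref{ass:compat} to produce a lower bound for the likelihood $g_n(S_n\mid\theta)$, uniformly for $\theta\in\Theta_n$, on a high-probability set. Concretely, I will invoke the assumption with the choice $\epsilon=\epsilon_n$ (valid because $\epsilon_n=o(1)$, and the condition is stated for arbitrarily small $\epsilon>0$). This yields a constant $\delta_{\epsilon_n}>0$ and a measurable set $\mathcal{E}_n\subseteq\mathcal{S}$ with $P_0^{(n)}(\mathcal{E}_n^c)<\epsilon_n$, such that
$$
g_n(S\mid\theta)\ge \delta_{\epsilon_n}\,f_0^{(n)}(S)\quad\text{for every }S\in\mathcal{E}_n\text{ and every }\theta\text{ with }\dt(\theta,\theta_0)\le\epsilon_n.
$$
I will set $c_{\epsilon_n}:=\delta_{\epsilon_n}$, which is the constant appearing in the statement.

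Next, I verify that every $\theta\in\Theta_n$ satisfies the parametric closeness condition $\dt(\theta,\theta_0)\le \epsilon_n$ required by Assumption \ref{ass:compat}. Using the first part of Assumption \ref{ass:compat} together with the fact that $\Theta_n=\{\theta:L\{b_n(\theta),b_0\}\le M_n\epsilon_n\}$ consists of parameters whose summaries are close to $b_0$, smoothness of the map $\theta\mapsto b_n(\theta)$ at $\theta_0$ translates the summary-level closeness into parameter-level closeness (this is the same equivalence invoked implicitly between Assumption \ref{ass:prior_concentration} and Lemma \ref{lem:contmap}). Hence $\Theta_n\subseteq\{\theta\in\Theta:\dt(\theta,\theta_0)\le\epsilon_n\}$ for $n$ sufficiently large, and the pointwise lower bound above holds for all $\theta\in\Theta_n$.

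Integrating the inequality against the prior yields, on the event $\{S_n\in\mathcal{E}_n\}$,
$$
\int_{\Theta_n}g_n(S_n\mid\theta)p(\theta)\dt\theta\ge \delta_{\epsilon_n}\,f_0^{(n)}(S_n)\int_{\Theta_n}p(\theta)\dt\theta=c_{\epsilon_n}\,f_0^{(n)}(S_n)\,P(\Theta_n).
$$
Consequently the complementary event is contained in $\{S_n\in\mathcal{E}_n^c\}$, whose $P_0^{(n)}$-probability is at most $\epsilon_n$ by the choice of $\mathcal{E}_n$, yielding the claimed bound. The only real obstacle is the parameter/summary equivalence needed to show $\Theta_n\subseteq\{\dt(\theta,\theta_0)\le\epsilon_n\}$; if the loss $L$ is not assumed continuous with respect to $\theta$ via $b_n(\cdot)$ this needs either an extra local Lipschitz condition on $b_n$ at $\theta_0$ or a redefinition of $\Theta_n$ directly in terms of $\dt(\theta,\theta_0)$, consistent with Assumption \ref{ass:prior_concentration}. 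Once this is in place, the remainder of the argument is a one-line application of compatibility plus Fubini.
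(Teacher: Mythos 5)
Your proposal is correct and matches the paper's own proof essentially step for step: invoke Assumption \ref{ass:compat} at level $\epsilon=\epsilon_n$ to obtain the pointwise lower bound $g_n(S_n\mid\theta)\ge c_{\epsilon_n}f_0^{(n)}(S_n)$ on the event $S_n\in\mathcal{E}_n$, integrate against the prior over $\Theta_n$, and bound the complementary event by $P_0^{(n)}(\mathcal{E}_n^c)<\epsilon_n$. The parameter/summary closeness issue you flag --- whether $\Theta_n$, defined through $L\{b_n(\theta),b_0\}$, sits inside $\{\theta:\dt(\theta,\theta_0)\le\epsilon_n\}$ so that the compatibility bound applies uniformly over $\Theta_n$ --- is passed over silently in the paper's proof, so your explicit treatment of that step is, if anything, more careful than the original.
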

\begin{proof}[Proof of Lemma \ref{lem:denom}]
	For any $\epsilon>0$ let $c_\epsilon>0$, be as defined in Assumption \ref{ass:compat}. For all $S_n\in\mathcal{E}_n$, and for all $\theta\in\Theta_n$, by Assumption \ref{ass:compat},
	$$
	P^{(n)}_0\left\{S_n\in\mathcal{E}_n:g_n(S_n\mid\theta) \ge c_\epsilon f^{(n)}_0(S_n)\right\}\ge 1-\epsilon.
	$$
Hence, for any $\epsilon>0$, 
	\begin{equation}\label{eq:bound1}
		P^{(n)}_0\left\{S_n\in\mathcal{E}_n:\int_{\Theta_n}g_n(S_n\mid\theta)p(\theta)\dt\theta \ge c_\epsilon f^{(n)}_0(S_n)\int_{\Theta_n}p(\theta)\dt\theta\right\}\ge 1-\epsilon.	
	\end{equation}
	Since $\epsilon_n$ is arbitrary, and $c_\epsilon>0$ exists for all $\epsilon>0$, take $\epsilon=\epsilon_n=M_n/\nu_n$, and take the complement of the event in equation \eqref{eq:bound1} to see that 
	$$
	P^{(n)}_0\left\{S_n\in\mathcal{E}_n:\int_{\Theta_n}g_n(S_n\mid\theta)p(\theta)\dt\theta \le c_\epsilon f^{(n)}_0(S_n)\int_{\Theta_n}p(\theta)\dt\theta\right\}\le \epsilon_n.
	$$
\end{proof}
The following result is used in deriving the rate of posterior concentration for the NLE, and is similar to Lemma \ref{lem:denom}.
\begin{lemma}\label{lem:denom_ext}
	Under the conditions in Theorem \ref{thm:two}, 	
	for any $M_n\rightarrow\infty$ such that $\epsilon_nM_n\rightarrow0$, 
	$$
	P^{(n)}_0 \left\{\int_{\Theta_n}\wh q_N(S_n\mid\theta)p(\theta)\dt\theta\le c_{\epsilon_n} f^{(n)}_0(S_n) P(\Theta_{n})(1-\epsilon_n)\right\}\le \epsilon_n.
	$$
\end{lemma}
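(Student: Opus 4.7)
The strategy is to import the bound of Lemma \ref{lem:denom} and perturb it using Assumption \ref{ass:sieve3} to replace $g_n$ by $\wh q_N$ at a cost of order $\gamma_N$. Starting from the triangle inequality,
\[
\int_{\Theta_n}\wh q_N(S_n\mid\theta)p(\theta)\dt\theta \ge \int_{\Theta_n}g_n(S_n\mid\theta)p(\theta)\dt\theta - \int_{\Theta}p(\theta)\bigl|g_n(S_n\mid\theta)-\wh q_N(S_n\mid\theta)\bigr|\dt\theta,
\]
so the task reduces to a high-probability lower bound on the first term together with an upper bound on the second.

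For the first term, Lemma \ref{lem:denom} applied with tolerance $\epsilon_n/2$ supplies an event $\mathcal{A}_1$ with $P_0^{(n)}(\mathcal{A}_1)\ge 1-\epsilon_n/2$ on which the integral is at least $c_{\epsilon_n} f_0^{(n)}(S_n) P(\Theta_n)$. For the second term, Assumption \ref{ass:sieve3} furnishes an event $\mathcal{A}_2$ with $P_0^{(n)}(\mathcal{A}_2)\ge 1-\epsilon_n/2$ on which the weighted KL at $S_n$ is bounded by $\gamma_N^2$. Renormalising by the marginals $p(S_n)=\int g_n(S_n\mid\theta)p(\theta)\dt\theta$ and $\wh p_N(S_n)=\int \wh q_N(S_n\mid\theta)p(\theta)\dt\theta$ turns the two integrands into bona fide probability densities in $\theta$; Pinsker's inequality (Lemma \ref{lem:pinsk}) then yields an $L^1$ bound of order $\gamma_N$ on their difference. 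A separate application of Jensen's inequality to the same KL bound controls $|\wh p_N(S_n)-p(S_n)|/p(S_n)$, so that in aggregate $\int p(\theta)\bigl|g_n(S_n\mid\theta)-\wh q_N(S_n\mid\theta)\bigr|\dt\theta \lesssim \gamma_N\, p(S_n)$, and under Assumption \ref{ass:compat} one has $p(S_n)\lesssim f_0^{(n)}(S_n)$ on a further high-probability subset.

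Combining these bounds on $\mathcal{A}_1\cap\mathcal{A}_2$, whose probability is at least $1-\epsilon_n$ by a union bound, gives
\[
\int_{\Theta_n}\wh q_N(S_n\mid\theta)p(\theta)\dt\theta \ge c_{\epsilon_n} f_0^{(n)}(S_n) P(\Theta_n)\left\{1 - \frac{C\gamma_N}{c_{\epsilon_n} P(\Theta_n)}\right\}.
\]
Since $P(\Theta_n)\gtrsim \epsilon_n^d$ by Assumption \ref{ass:prior_concentration} and $\epsilon_n \gtrsim \gamma_N$ by hypothesis, the bracketed factor exceeds $1-\epsilon_n$ for $n$ and $N$ sufficiently large, which establishes the claim.

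The main obstacle is the nonstandard character of the KL object in Assumption \ref{ass:sieve3}: because $g_n(S_n\mid\cdot)$ and $\wh q_N(S_n\mid\cdot)$ are not probability densities in $\theta$, converting the KL bound into an $L^1$ bound on $|g_n-\wh q_N|$ requires careful renormalisation by the marginals and a simultaneous control on the ratio $\wh p_N(S_n)/p(S_n)$. A Hellinger-distance version of the assumption, as flagged in the remark following it, would bypass this step by allowing a direct Cauchy--Schwarz argument on $(\sqrt{g_n}-\sqrt{\wh q_N})$ without renormalisation.
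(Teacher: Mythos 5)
Your proposal follows the paper's own proof essentially step for step: the same triangle-inequality decomposition, Pinsker's inequality applied to the Kullback--Leibler bound of Assumption \ref{ass:sieve3} to control $\int_\Theta |g_n(S_n\mid\theta)-\wh q_N(S_n\mid\theta)|p(\theta)\dt\theta$, and Lemma \ref{lem:denom} (via equation \eqref{eq:bound1}) to lower-bound $\int_{\Theta_n}g_n(S_n\mid\theta)p(\theta)\dt\theta$, combined on the intersection of the two high-probability events. The only substantive difference is your explicit renormalisation of $g_n(S_n\mid\cdot)p(\cdot)$ and $\wh q_N(S_n\mid\cdot)p(\cdot)$ by their marginals before invoking Pinsker---a care the paper elides by applying the inequality directly to the unnormalised objects---while your final absorption of the $O(\gamma_N)$ error into the factor $(1-\epsilon_n)$ is the same (and equally delicate) step the paper itself takes.
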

\begin{proof}[Proof of Lemma \ref{lem:denom_ext}]
	Fix an event $S_n\in\mathcal{E}_n$, and bound $\int_{\Theta_n}\wh q_N(S_n\mid\theta)p(\theta)\dt\theta$ as
	\begin{flalign*}
		\int_{\Theta_n} \wh q_N(S_n\mid\theta)p(\theta)\dt\theta &\ge 	\int_{\Theta_n} g_n(S_n\mid\theta)p(\theta)\dt\theta-\int_\Theta |g_n(S_n\mid\theta)-\wh q_N(S_n\mid\theta)|p(\theta)\dt\theta\\&\ge\int_{\Theta_n} g_n(S_n\mid\theta)p(\theta)\dt\theta -\sqrt{\KL\{g_n(S_n\mid\theta)p(\theta)\|\wh q_N(S_n\mid\theta)p(\theta)\}} .
	\end{flalign*}From the choice of $N$ in Theorem \ref{thm:two}, $\gamma_N\lesssim\epsilon_n$. With probability at least $1-\epsilon_n$, by Assumption \ref{ass:sieve3},
	\begin{flalign*}
		\int_{\Theta_n} \wh q_N(S_n\mid\theta)p(\theta)\dt\theta &\ge\int_{\Theta_n} g_n(S_n\mid\theta)p(\theta)\dt\theta-\epsilon_n .
	\end{flalign*}Following the arguments in equation \eqref{eq:bound1} of Lemma \ref{lem:denom}, it follows that with probability at least $1-\epsilon_n$, for $n$ large enough,
	$$
	\int_{\Theta_n} \wh q_N(S_n\mid\theta)p(\theta)\dt\theta \ge \int_{\Theta_n} g_n(S_n\mid\theta)p(\theta)\dt\theta-\epsilon_n\ge P(\Theta_n)f_0^{(n)}(S_n)(1-\epsilon_n).
	$$Taking the probability of the compliment of the above event yields the stated result.

\end{proof}

\subsection{Proofs of Main results}\label{app:results}

\begin{proof}[Proof of Theorem \ref{thm:main}]
	For $\epsilon_n=o(1)$ positive and such that $\nu_n\epsilon_n\rightarrow\infty$. By Lemma \ref{lem:general1_TV}, for some $M_n>0$, 
	\begin{flalign}
		\E_{0}^{(N,n)} \wh Q_N[L\{b(\theta),b_0\}>M_n(\epsilon_n+\gamma_N)\mid S_n]&\le \sqrt{\E_{0}^{(N,n)}\KL\{\Pi(\cdot\mid b_0),\wh Q_N(\cdot\mid b_0)\}}\nonumber\\&+\sqrt{\tr \E_{0}^{(n)}\{(S_n-b_0)(S_n-b_0)^\top\}}\nonumber\\&+  \E_{0}^{(n)}\Pi[L\{b_n(\theta),b_0\}>M_n(\epsilon_n+\gamma_N)|S_n].\label{eq:new0}
	\end{flalign}
	From  Lemma \ref{lem:contmap}, 
	\begin{equation}\label{eq:new1}
		\E_{0}^{(n)}\Pi[L\{b(\theta),b_0\}>M_n(\epsilon_n+\gamma_N)|S_n]\le M_n(\epsilon_n+\gamma_N),
	\end{equation}while by Assumption \ref{ass:sieve}
	\begin{equation}\label{eq:new2}
		\E_{0}^{(N,n)}\KL\{\Pi(\cdot\mid b_0),\widehat{Q}_N(\cdot\mid b_0)\}%\le \E_{0}^{(n)}\inf_{Q\in\mathcal{Q}}\KL\{\Pi(\cdot\mid b_0),Q(\cdot\mid b_0)\}
		\le \gamma_N^2.	
	\end{equation}
	By Assumption \ref{ass:obs_sum},
	\begin{equation}\label{eq:new3}
		\tr [\E_{0}^{(n)}\{(S_n-b_0)(S_n-b_0)^\top\}]\le d_s\cdot\lambda_{\max}(V)/\nu^2_n.
	\end{equation}Applying equations \eqref{eq:new1}-\eqref{eq:new3} into equation \eqref{eq:new0}, for $M_n>0$ large enough, 
	\begin{flalign*}
		\E_{0}^{(N,n)} \wh Q_N[L\{b(\theta),b_0\}>M_n(\epsilon_n+\gamma_N)\mid S_n]&\le M_n\epsilon_n +M_n\gamma_N+d\cdot\lambda_{\max}(V)/\nu_n\\&\lesssim M_n(\epsilon_n+\gamma_N).
	\end{flalign*} 
\end{proof}

The proofs of Theorem \ref{thm:main} and Lemma \ref{lem:contmap} suggest that Theorem \ref{thm:main} will remain true if we replace Assumption \ref{ass:sieve} with the following alternative assumption. 
\begin{assumption}\label{ass:sieve_alt}For some $\gamma_{N}=o(1)$,
	$
\E_{0}^{(N,n)}\dt_{\mathrm{H}}\{\Pi(\cdot\mid b_0),\wh Q_N(\cdot\mid b_0)\}\le \gamma_N .
	$
\end{assumption}
\begin{corollary}\label{corr:main}
	Let $\epsilon_n=o(1)$ be a positive sequence such that $\nu_n\epsilon_n\rightarrow\infty$.  Under Assumptions \ref{ass:obs_sum}-\ref{ass:prior_concentration}, \ref{ass:lipz}, and Assumption \ref{ass:sieve_alt}, for any positive sequence $M_n\rightarrow\infty$, 
	$$
	\E^{(N,n)}_0\wh Q_N\left[ L\{b_{n}(\theta),b_0\}>M_n (\epsilon_n+\gamma_{N})\mid S_n\right]=o(1).
	$$for any loss function such that $:L:\mathcal{S}\times\mathcal{S}\rightarrow\mathbb{R}_+$.	
\end{corollary}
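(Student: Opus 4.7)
The plan is to revisit the proof of Theorem \ref{thm:main} and observe that the only place where Assumption \ref{ass:sieve} is used is inside the bound of Lemma \ref{lem:general1_TV}, where Pinsker's inequality is invoked to pass from the total variation distance $\dt_{\mathrm{TV}}\{\Pi(\cdot\mid b_0),\wh Q_N(\cdot\mid b_0)\}$ to the KL divergence $\KL\{\Pi(\cdot\mid b_0)\|\wh Q_N(\cdot\mid b_0)\}$. Everything else in the argument (the triangle inequality decomposition, Assumption \ref{ass:lipz} for smoothness of the conditional densities, the $O(1/\nu_n)$ control on $\E_0^{(n)}\|S_n-b_0\|$ from Assumption \ref{ass:obs_sum}, and the partial posterior concentration bound from Lemma \ref{lem:contmap}) is entirely unchanged.

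The key substitution is as follows. Instead of the second half of Pinsker's inequality in Lemma \ref{lem:pinsk}, namely $\dt_{\mathrm{TV}}(P,Q)\le\sqrt{\KL(P\|Q)/2}$, I would apply the first half, $\dt_{\mathrm{TV}}(P,Q)\le \sqrt{2}\,\dt_{\mathrm{H}}(P,Q)$. Reworking the final display in the proof of Lemma \ref{lem:general1_TV} with this substitution yields
\begin{flalign*}
\E_0^{(N,n)}\wh Q_N[L\{b_n(\theta),b_0\}>M_n(\epsilon_n+\gamma_N)\mid S_n]
&\le \sqrt{2}\,\E_0^{(N,n)}\dt_{\mathrm{H}}\{\Pi(\cdot\mid b_0),\wh Q_N(\cdot\mid b_0)\}\\
&\quad+\sqrt{\tr \E_0^{(n)}\{(S_n-b_0)(S_n-b_0)^\top\}}\\
&\quad+\E_0^{(n)}\Pi[L\{b_n(\theta),b_0\}>M_n(\epsilon_n+\gamma_N)\mid S_n].
\end{flalign*}
Note that because the Hellinger distance already enters linearly (rather than as a square), no Jensen step is needed on this term: Assumption \ref{ass:sieve_alt} bounds the first term directly by $\sqrt{2}\,\gamma_N$.

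With this modified version of Lemma \ref{lem:general1_TV} in hand, the proof of Theorem \ref{thm:main} can be reproduced verbatim. Applying Assumption \ref{ass:obs_sum} to control $\sqrt{\tr \E_0^{(n)}\{(S_n-b_0)(S_n-b_0)^\top\}}$ by a constant multiple of $1/\nu_n\lesssim \epsilon_n$, and Lemma \ref{lem:contmap} to control the partial posterior term by $M_n(\epsilon_n+\gamma_N)$, the combined bound is of order $M_n(\epsilon_n+\gamma_N)$, which tends to zero once we divide by any diverging sequence. I do not anticipate any real obstacle here: the corollary is effectively a bookkeeping exercise that exploits the interchangeability of the two halves of Pinsker's inequality, and if anything the Hellinger route is slightly cleaner since it avoids the square root on $\gamma_N$.
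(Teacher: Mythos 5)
Your proposal is correct and matches the paper's own argument: the paper proves the corollary by re-running the same TV decomposition (triangle inequality plus Assumption \ref{ass:lipz}), replacing the KL half of Lemma \ref{lem:pinsk} with the Hellinger half so that $\dt_{\mathrm{TV}}\{\Pi(\cdot\mid b_0),\wh Q_N(\cdot\mid b_0)\}\lesssim \dt_{\mathrm{H}}\{\Pi(\cdot\mid b_0),\wh Q_N(\cdot\mid b_0)\}$, and then invoking Assumption \ref{ass:sieve_alt} directly (no Jensen step needed) before finishing as in Theorem \ref{thm:main}. The only difference is cosmetic (your constant $\sqrt{2}$ versus the paper's $2$), so there is nothing further to add.
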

\begin{proof}[Proof of Corollary \ref{corr:main}]
	From the proof of Lemma \ref{lem:contmap}, we have the upper bound
	\begin{flalign*}
		\wh Q_N\left [L\{b_n(\theta), b_0\}>M_n(\epsilon_n+\gamma_N)\mid S_n\right]&\le \dt_{\mathrm{TV}}\{\Pi(\cdot\mid S_n),\widehat{Q}_N(\cdot\mid S_n)\}\\&+\Pi[L\{b_n(\theta), b_0\}>M_n(\epsilon_n+\gamma_N)|S_n].
	\end{flalign*}
	Bound the first term as
	\begin{flalign*}
		\dt_{\mathrm{TV}}\{\Pi(\cdot\mid S_n),\widehat{Q}_N\}\le& 		\dt_{\mathrm{TV}}\{\Pi(\cdot\mid S_n),\Pi(\cdot\mid b_0)\}+\dt_{\mathrm{TV}}\{\widehat{Q}_N(\cdot\mid S_n),\wh Q_N(\cdot\mid b_0)\}\\+&	\dt_{\mathrm{TV}}\{\Pi(\cdot\mid b_0),\wh Q_N(\cdot\mid b_0)\}\\\le& 	2C\|S_n-b_0\|+\dt_{\mathrm{TV}}\{\Pi(\cdot\mid b_0),\wh Q_N(\cdot\mid b_0)\},
	\end{flalign*}where the second inequality uses Assumption \ref{ass:lipz}. Hence,
	\begin{flalign*}
		\wh Q_N\left [L\{b_n(\theta), b_0\}>M_n(\epsilon_n+\gamma_N)\mid S_n\right]\le &2C\|S_n-b_0\|+\dt_{\mathrm{TV}}\{\Pi(\cdot\mid b_0),\wh Q_N(\cdot\mid b_0)\}\\&+\Pi[L\{b_n(\theta), b_0\}>M_n(\epsilon_n+\gamma_N)|S_n].
	\end{flalign*}Applying Lemma \ref{lem:pinsk} to $\dt_{\mathrm{TV}}\{\Pi(\cdot\mid b_0),\wh Q_N(\cdot\mid b_0)\}$ yields
	\begin{flalign*}
		\wh Q_N\left [L\{b_n(\theta), b_0\}>M_n(\epsilon_n+\gamma_N)\mid S_n\right]\le &2C\|S_n-b_0\|+2\dt_{\mathrm{H}}\{\Pi(\cdot\mid b_0),\wh Q_N(\cdot\mid b_0)\}\\&+\Pi[L\{b_n(\theta), b_0\}>M_n(\epsilon_n+\gamma_N)|S_n].
	\end{flalign*}
	The remainder of the proof follows that of Theorem \ref{thm:main}.
\end{proof}

\begin{proof}[Proof of Theorem \ref{thm:bvm}]By the triangle inequality, 
	\begin{flalign*}
		\int |\wh q_N(t\mid S_n)-N\{t;0,\Sigma^{-1}\}|\dt t	&\le\int |\pi(t\mid S_n)-N\{t;0,\Sigma^{-1}\}|\dt t+\int |\wh q_N(t\mid S_n)-\pi(t\mid S_n)|\dt t,
	\end{flalign*}The first term is $o_p(1)$ under the stated assumptions of the result. Recall that $\dt_{\mathrm{TV}}(\cdot,\cdot)$ is invariant to scale and location transformations. Using this fact and  the triangle inequality we obtain
	\begin{flalign*}
		\int |\wh q_N(t\mid S_n)-\pi(t\mid S_n)|\dt t&=\dt_{\mathrm{TV}}\{\Pi(\cdot\mid S_n),\widehat{Q}_N(\cdot\mid S_n)\}\\&\le 		\dt_{\mathrm{TV}}\{\Pi(\cdot\mid S_n),\Pi(\cdot\mid b_0)\}+\dt_{\mathrm{TV}}\{\widehat{Q}_N(\cdot\mid S_n),\wh Q_N(\cdot\mid b_0)\}\\&+	\dt_{\mathrm{TV}}\{\Pi(\cdot\mid b_0),\wh Q_N(\cdot\mid b_0)\}.
	\end{flalign*}Apply Assumption \ref{ass:lipz} twice, and Lemma \ref{lem:pinsk} to obtain 
	\begin{flalign*}
		\dt_{\mathrm{TV}}\{\Pi(\cdot\mid S_n),\widehat{Q}_N\}\le& 	2C\|S_n-b_0\|+\dt_{\mathrm{TV}}\{\Pi(\cdot\mid b_0),\wh Q_N(\cdot\mid b_0)\}\\\le& 2C\|S_n-b_0\|+\sqrt{\KL\{\Pi(\cdot\mid b_0)\|\wh Q_N(\cdot\mid b_0)\}}. 
	\end{flalign*}Fix $\epsilon>0$. By Assumption \ref{ass:obs_sum}, Assumption \ref{ass:sieve}, and Markov's inequality, we obtain
	\begin{flalign*}
		P^{(n)}_0\left\{	\dt_{\mathrm{TV}}\{\Pi(\cdot\mid S_n),\widehat{Q}_N\}\ge \epsilon\right\}\le& \frac{4\E_{0}^{(n)}C\|S_n-b_0\|}{\epsilon}+\frac{\sqrt{\E^{(N)}_{p}\KL\{\Pi(\cdot\mid b_0)\|\wh Q_N(\cdot\mid b_0)\}}}{\epsilon}\\&\lesssim \frac{1}{\nu_n}\frac{1}{\epsilon} +\frac{\gamma_N}{\epsilon}	\\&\le\frac{2}{\nu_n\epsilon},
	\end{flalign*}where the last line follows since, by the assumptions of the result, $\gamma_N\lesssim\nu_n^{-1}$ so that $\gamma_N\lesssim \epsilon_n$. Hence, the result follows for any $\epsilon\ge\epsilon_n$ such that $\epsilon\nu_n\rightarrow\infty$. 
\end{proof}

\begin{proof}[Proof of Theorem \ref{thm:two}]
	The proof follows a similar argument to Lemma \ref{lem:contmap}. In particular, we start from the upper bound
	\begin{flalign*}
		\wh \Pi(\Theta_{n}^c\mid S_n)
		&\le \frac{\int_{\Theta_{n}^c}\wh q_N(S_n\mid\theta)p(\theta)\dt\theta}{\int_{\Theta_{n}}\wh q_N(S_n\mid\theta)p(\theta)\dt\theta}\\
		&\le\frac{\int_{\Theta_{n}^c}\wh q_N(S_n\mid\theta)p(\theta)\dt\theta}{\int_{\Theta_{n}}\wh q_N(S_n\mid\theta)p(\theta)\dt\theta} \times1\left\{\int_{\Theta_{n}}\wh q_N(S_n\mid\theta)p(\theta)\dt\theta> P(\Theta_{n})c_{\epsilon_n} f_0^{(n)}(S_n)(1-\epsilon_n)\right\}\\&+1\left\{\int_{\Theta_{n}}\wh q_N(S_n\mid\theta)p(\theta)\dt\theta\le c_{\epsilon_n}  P(\Theta_{n})f_0^{(n)}(S_n)(1-\epsilon_n)\right\}
		\\&\le \frac{\int_{\Theta_{n}^c}\wh q_N(S_n\mid\theta)p(\theta)\dt\theta}{ P(\Theta_{n})c_{\epsilon_n} f_0^{(n)}(S_n)(1-\epsilon_n)}+1\left\{\int_{\Theta_{n}}\wh q_N(S_n\mid\theta)p(\theta)\dt\theta\le  P(\Theta_{n})c_{\epsilon_n} f_0^{(n)}(S_n)(1-\epsilon_n)\right\}.
	\end{flalign*}Consider the first term. Recall that $f_0^{(n)}(S)$ is the true density of $S_n$, and take expectations on both sides to obtain
	\begin{flalign*}
		\E_0^{(n)}\wh \Pi(\Theta_{n}^c\mid S_n)&\le	\int_{\mathcal{S}} \frac{\int_{\Theta_{n}^c}\wh q_N(s\mid\theta)p(\theta)\dt\theta}{ P(\Theta_{n})c_{\epsilon_n} f_0^{(n)}(s)(1-\epsilon_n)}f_0^{(n)}(s)\dt s\\&+P_0^{(n)}\left\{\int_{\Theta_n}\wh q_N(S_n\mid\theta)p(\theta)\dt\theta\le  P(\Theta_n)c_{\epsilon_n} f_0^{(n)}(S_n)(1-\epsilon_n)\right\}.%\\&
		%	\le\frac{1}{c_{\epsilon_n} P(\Theta_{n})} \int_{\mathcal{S}} {\int_{\Theta_{n}^c}g_n(S\mid\theta)p(\theta)\dt\theta\dt S}+M_n\epsilon_n
	\end{flalign*}
	Analyzing the first term we see that, under Assumption \ref{ass:prior_concentration},  
	\begin{flalign*}
		\int_{\mathcal{S}} \frac{\int_{\Theta_{n}^c}\wh q_N(s\mid\theta)p(\theta)\dt\theta}{ P(\Theta_{n})c_{\epsilon_n} f_0^{(n)}(s)(1-\epsilon_n)}f_0^{(n)}(s)\dt s&=\frac{1}{c_{\epsilon_n}(1-\epsilon_n)}\frac{P(\Theta_n)}{P(\Theta_n^c)}\\&\asymp M_n^{(\tau-d)}\epsilon_n^{(\tau-d)}\\&\lesssim M_n\epsilon_n.
	\end{flalign*}
	Applying  Lemma \ref{lem:denom_ext} to the second term then yields the stated result. 
\end{proof}

\begin{proof}[Proof of Theorem \ref{thm:bvm-snl}]
	By the triangle inequality, 	
	\begin{flalign*}
		\int |\wh \pi_N(t\mid S_n)-N\{t;0,\Sigma^{-1}\}|\dt t	&\le \int |\pi(t\mid S_n)-N\{t;0,\Sigma^{-1}\}|\dt t+\int |\wh \pi_N(t\mid S_n)-\pi(t\mid S_n)|\dt t. 
	\end{flalign*}
	Let $Z_{n,t}=\int_\Theta g_n(S_n\mid \theta_n+t/\nu_n)\pi(\theta_n+t/\nu_n)\dt t$. To see that the first term is $o_p(1)$, note that the maintained assumption in the result implies that 
	$$
	Z_{n,t}=(2\pi)^{d_\theta/2}p(\theta_n)|\Sigma|^{-1/2}+o_p(1).
	$$Hence, writing 
	\begin{flalign*}
		\int |\pi(t\mid S_n)-N\{t;0,\Sigma^{-1}\}|\dt t=\int \left|\frac{g_n(S_n\mid\theta_n+t/\nu_{n})p(\theta_n+t/\nu_{n})}{Z_{n,t}}-\frac{\exp\{-t^\top \Sigma t\}}{(2\pi)^{d/2}|\Sigma|^{-1/2}}\right|\dt t,
	\end{flalign*}since $Z_{n,t}= (2\pi)^{d_\theta/2}p(\theta_n)|\Sigma|^{-1/2}+o_p(1)$. The first term is then $o_p(1)$ if 
	$$
	\int \left|{g_n(S_n\mid\theta_n+t/\nu_{n})}\frac{p(\theta_n+t/\nu_n)}{p(\theta_n)}-{\exp(-t^\top \Sigma t/2)}\right|\dt t=o_p(1),
	$$which is precisely the maintained condition in the stated result.

	Now, focus on the second term and write
	\begin{flalign*}
		&\widehat \pi_N( \theta_n+t/\nu_{n} \mid S_n)-\pi( \theta_n+t/\nu_{n} \mid S_n)\\&=\frac{\wh{q}_{N}(S_n\mid \theta_n+t/\nu_{n} )p( \theta_n+t/\nu_{n} )}{\int_\Theta \wh{q}_{N}(S_n\mid \theta_n+t/\nu_{n} )p( \theta_n+t/\nu_{n} )\dt t}-\frac{g_n(S_n\mid \theta_n+t/\nu_{n} )p( \theta_n+t/\nu_{n} )}{\int_\Theta g_n(S_n\mid \theta_n+t/\nu_{n} )p( \theta_n+t/\nu_{n} )\dt t}\\&=\frac{\left\{\wh{q}_{N}(S_n\mid  \theta_n+t/\nu_{n} )-g_n(S_n\mid \theta_n+t/\nu_{n} )\right\}p( \theta_n+t/\nu_{n} )}{Z_{N,t}}\\&-g_n(S_n\mid \theta_n+t/\nu_{n} )p( \theta_n+t/\nu_{n} )\left(\frac{1}{Z_{n,t}}-\frac{1}{Z_{N,t}}\right),
	\end{flalign*}where $Z_{N,t}=\int_\Theta \wh{q}_N(S_n\mid \theta_n+t/\nu_n)\pi(\theta_n+t/\nu_n)$. Hence,
	\begin{flalign}
		&\int|\wh \pi_N(\theta_n+t/\nu_{n}\mid S_n)-\pi(\theta_n+t/\nu_{n}\mid S_n)|\dt t\nonumber\\&\le  	\int \frac{\left |\wh{q}_{N}(S_n\mid \theta_n+t/\nu_{n})-g_n(S_n\mid\theta_n+t/\nu_{n})\right| p(\theta_n+t/\nu_{n})}{Z_{N,t}}\dt t+\left|1-\frac{Z_{n,t}}{Z_{N,t}}\right|\nonumber\\&\le 
		2\int_\Theta \frac{\left |\wh{q}_{N}(S_n\mid \theta_n+t/\nu_{n})-g_n(S_n\mid \theta_n+t/\nu_{n})\right| p(\theta_t+t/\nu_{n})}{Z_{N,t}}\dt t\label{eq:new}.
	\end{flalign}
	
	By Lemma \ref{lem:pinsk} and Assumption \ref{ass:sieve3}, 
	\begin{flalign}
		\left|\int_\Theta g_n(S_n\mid\theta)p(\theta)\dt\theta-\int_\Theta \wh q_N(S_n\mid\theta)p(\theta)\dt\theta\right|&\le \int_\Theta |\wh q_N(S_n\mid\theta)-g_n(S_n\mid\theta)|p(\theta)\dt\theta\nonumber\\&\le \sqrt{\KL\{g_n(S_n\mid\theta)p(\theta)\|\wh q_N(S_n\mid\theta)p(\theta)\}}\nonumber\\&\le \gamma_N\nonumber\\&\lesssim \epsilon_n\label{eq:neweqs}
	\end{flalign}where the second to last inequality holds with probability at least $1-\epsilon_n$ under Assumption \ref{ass:sieve3}, and the last holds since $\gamma_N\lesssim \epsilon_n$ by hypothesis. Since $\dt_{\mathrm{TV}}(\cdot,\cdot)$ and $\KL(\cdot\|\cdot)$ are invariant to affine transformations,  applying \eqref{eq:neweqs} to the above implies that, as $n,N\rightarrow\infty$, 
	$
	|Z_{n,t}-Z_{N,t}|\le \epsilon_n
	$ with probability at least $1-\epsilon_n$, and we can conclude that, as $n\rightarrow\infty$,  
	$$
	Z_{N,t}=(2\pi)^{d_\theta/2}p(\theta_n)|\Sigma|^{-1/2}+o_p(1).
	$$ Applying the above into equation \eqref{eq:new}, and again using the invariance of $\KL(\cdot\|\cdot)$
	\begin{flalign*}
		&\int|\wh \pi_N(\theta_n+t/\nu_{n}\mid S_n)-\pi(\theta_n+t/\nu_{n}\mid S_n)|\dt t\nonumber\\&\le  2\frac{[\KL\{g_n(S_n\mid\theta_n+t/\nu_{n}) p(\theta_n+t/\nu_{n})\|\wh q_N(S_n\mid\theta_n+t/\nu_{n})p(\theta_n+t/\nu_{n})\}]^{1/2}}{Z_{n,t}}\\&=2\frac{[\KL\{g_n(S_n\mid\theta) p(\theta)\|\wh q_N(S_n\mid\theta)p(\theta)\}]^{1/2}}{[(2\pi)^{d_\theta/2}p(\theta_n)|\Sigma|^{-1/2}\{1+o_p(1)\}]}\\&\le \frac{2\gamma_N}{\{(2\pi)^{d_\theta/2}p(\theta_n)|\Sigma|^{-1/2}\}}\{1+o_p(1)\}\\&\le \frac{2\epsilon_n}{\{(2\pi)^{d_\theta/2}p(\theta_n)|\Sigma|^{-1/2}\}}\{1+o_p(1)\}.
	\end{flalign*}	
\end{proof}

\end{document}